\begin{document}

\title{Outlier-Robust Optimal Transport:\\
Duality, Structure, and Statistical Analysis}
\author{Sloan Nietert\thanks{Cornell University}, Rachel Cummings\thanks{Columbia University}, and Ziv Goldfeld\footnotemark[1]}

\maketitle

\begin{abstract}
The Wasserstein distance, rooted in optimal transport (OT) theory, is a popular discrepancy measure between probability distributions with various applications to statistics and machine learning. Despite their rich structure and demonstrated utility, Wasserstein distances are sensitive to outliers in the considered distributions, which hinders applicability in practice. 
We propose a new outlier-robust Wasserstein distance $\smash{\RWp}$ which allows for $\eps$ outlier mass to be removed from each contaminated distribution. 
Under standard moment assumptions, $\smash{\RWp}$ is shown to 
achieve strong robust estimation guarantees under the Huber $\eps$-contamination model.
Our formulation of this robust distance amounts to a highly regular optimization problem that lends itself better for analysis compared to previously considered frameworks.
Leveraging this, we conduct a thorough theoretical study of $\smash{\RWp}$, encompassing robustness guarantees, characterization of optimal perturbations, regularity, duality, and statistical estimation.
In particular, by decoupling the optimization variables, we arrive at a simple dual form for $\RWp$ that can be implemented via an elementary modification to standard, duality-based OT solvers. 
We illustrate the virtues of our framework via applications to generative modeling with contaminated datasets.
\end{abstract}

\section{Introduction}

Discrepancy measures between probability distributions are a fundamental constituent of statistical inference, machine learning, and information theory.
Among many such measures, Wasserstein distances \citep{villani2003} have recently emerged as a tool of choice for many applications. Specifically, for $p \in [1,\infty)$ and a pair of probability measures $\mu,\nu$ on a metric space $(\cX,d)$, the $p$-Wasserstein distance between them is\footnote{For $p=\infty$, we set $\Winfty(\mu,\nu) \coloneqq \inf_{\pi \in \Pi(\mu,\nu)}\|d\|_{L^\infty(\pi)}$.}
\begin{equation*}
\Wp(\mu,\nu)\coloneqq\left(\inf_{\pi\in\Pi(\mu,\nu)}\int_{\cX\times\cX}d(x,y)^p\dd\pi(x,y)\right)^{1/p},
\end{equation*}
where $\Pi(\mu,\nu)$ is the set of couplings for $\mu$ and $\nu$. The popularity of these metrics stems from a myriad of desirable properties, including rich geometric structure, metrization of the weak topology, robustness to support mismatch, and a convenient dual form. Modern applications thereof include generative modeling \citep{arjovsky_wgan_2017, gulrajani2017improved, tolstikhin2018wasserstein}, domain adaptation \citep{courty2014domain,courty2016optimal}, and robust optimization \citep{mohajerin_robust_2018,blanchet2018optimal, gao2016distributionally}.

Despite their advantages, Wasserstein distances suffer from sensitivity to outliers due to the strict marginal constraints, as even a small outlier mass can contribute greatly to the distance.
This has inspired a recent line of work into outlier-robust OT \citep{balaji2020, mukherjee2021, khang2021}%
, which relaxes the marginal constraints in various ways.
These build upon the theory of unbalanced OT \citep{piccoli2014, chizat2018,chizat2018scaling,liero2018, schmitzer2019} that quantifies the cost-optimal way to transform one measure into another via a combination of mass variation and transportation. 
We propose a new framework for outlier-robust OT that arises as the solution to a principled robust approximation problem. We conduct an in-depth theoretical study of the proposed robust distance, encompassing formal robustness guarantees, duality, characterization of primal minimizers / dual maximizers, regularity, and empirical convergence rates.

\subsection{Contributions}
\vspace{-1mm}
We introduce and study the \emph{$\eps$-outlier-robust Wasserstein distance} defined by
\vspace{-1mm}
\begin{gather}
\begin{aligned}
\label{eq:RWp}
    \RWp(\mu,\nu) \coloneqq \inf_{\substack{\mu',\nu' \in \cM_+(\cX) \\ \mu' \leq \mu, \: \|\mu - \mu'\|_\tv \leq \eps \\
    \nu' \leq \nu, \: \|\nu - \nu'\|_\tv \leq \eps}} \Wp\left(\frac{\mu'}{\mu'(\cX)},\frac{\nu'}{\nu'(\cX)}\right),
    \raisetag{5mm}
\end{aligned}
\end{gather}
where $\mu'$ and $\nu'$ are positive measures, $\|\cdot\|_{\mathsf{TV}}$ is the total variation (TV) norm, and $\leq$ denotes setwise inequality when appropriate.
The minimization over $\mu',\nu'$ allows outliers occupying less than fraction $\eps$ of probability mass to be omitted from consideration, after which the perturbed measures are renormalized.
Compared to prior work employing TV constraints \citep{balaji2020,mukherjee2021}, our definition has several distinct features: (1) it is naturally derived  as a robust proxy for $\Wp$ under the Huber $\eps$-contamination model; (2) it can be reframed as an optimization problem over a highly regular domain; and, consequently, (3) it admits a simple and useful duality theory.

We show that when the clean distributions have bounded $q$th moments for $q > p$, $\smash{\RWp}$ nearly achieves the minimax optimal robust estimation risk of $\eps^{1/p - 1/q}$ under the Huber $\eps$-contamination model. %
Moreover, our dual formulation mirrors the classic Kantorovich dual with an added penalty proportional to the range of the potential function. 
This provides an elementary robustification technique which can be applied to any duality-based OT solver: one needs only to compute the argmin and argmax of the discriminative potentials over the batch samples, which can be done in conjunction with existing gradient evaluations.
We demonstrate this the utility of this procedure with experiments for generative modeling with contaminated datasets using Wasserstein generative adversarial networks (WGANs) \citep{arjovsky_wgan_2017}.

We also study structural properties of $\RWp$, characterizing the minimizers of \eqref{eq:RWp} and maximizers of its dual, describing the regularity of the problem in $\eps$, and drawing a connection between $\RWp$ and loss trimming \citep{shen19}. Finally, we study statistical aspects of $\smash{\RWp}$, examining both one- and two-sample empirical convergence rates and providing additional robustness guarantees. The derived empirical convergence rates are at least as fast as the regular $n^{-1/d}$ rate for standard $\Wp$; however, faster rates may be possible if only a small amount of high-dimensional mass is present.

\subsection{Related Work}
\label{subsec:related_work}
The robust Wasserstein distance\footnote{Despite calling $\RWp$ a distance, we remark that the metric structure of $\Wp$ is lost after robustification.} in \eqref{eq:RWp} is closely related to the notions considered in \cite{balaji2020} and \cite{mukherjee2021}. In \cite{balaji2020}, similar constraints are imposed with respect to (w.r.t.) general $f$-divergences, but the perturbed measures are restricted to probability distributions. This results in a more complex dual form (derived by invoking standard Kantorovich duality on the Wasserstein distance between perturbations) and requires optimization over a significantly larger domain.
In \cite{mukherjee2021}, robustness w.r.t.\ the TV distance is added via a regularization term in the objective. This leads to a simple modified primal problem but the corresponding dual requires optimizing over two potentials, even when $p=1$. Additionally, \cite{khang2021} and \cite{nath2020} consider robustness via Kullback-Leibler (KL) divergence and integral probability metric (IPM) regularization terms, respectively. The former focuses on Sinkhorn-based primal algorithms and the latter introduces a dual form that is distinct from ours and less compatible with existing duality-based OT computational methods. In \cite{staerman21}, a median of means approach is used to tackle the dual Kantorovich problem from a robust statistics perspective. None of these works provide minimax error bounds for robust estimation.

The robust OT literature is intimately related to unbalanced OT theory, which addresses transport problems between measures of different mass \citep{piccoli2014, chizat2018, liero2018, schmitzer2019, hanin92}. These formulations are reminiscent of the problem \eqref{eq:RWp} but with regularizers added to the objective (KL being the most studied) rather than incorporated as constraints. Sinkhorn-based primal algorithms \citep{chizat2018scaling} are the standard approach to computation, and these have recently been extended to large-scale machine learning problems via minibatch methods \citep{fatras21a}. \citet{fukunaga2021} introduces primal-based algorithms for semi-relaxed OT, where marginal constraints for a single measure are replaced with a regularizer in the objective.
Partial OT \citep{caffarelli2010,figalli2010}, where only a fraction of mass needs to be moved, is another related framework.
However, \cite{caffarelli2010} consider a different parameterization of the problem, arriving at a distinct dual, and \cite{figalli2010} is mostly restricted to quadratic costs with no discussion of duality. Recently, \citet{chapel2020} has explored partial OT for positive-unlabeled learning, but dual-based algorithms are not considered.

\paragraph{Notation and Preliminaries.}
Let $(\cX,d)$ be a complete, separable metric space, and denote the diameter of a set $A \subset \cX$ by $\diam(A) \coloneqq \sup_{x,y \in A} d(x,y)$.~Take $C_b(\cX)$ as the set of continuous, bounded real functions on $\cX$, and let $\cM(\cX)$ denote the set of signed Radon measures on $\cX$ equipped with the TV norm\footnote{This definition will be convenient but omits a factor of 1/2 often present in machine learning literature.} $\|\mu\|_\tv \coloneqq |\mu|(\cX)$. Let $\cM_+(\cX)$ denote the space of finite, positive Radon measures on $\cX$. The Lebesgue measure on $\R^d$ is designated by $\lambda$. For $\mu,\nu \in \cM_+(\cX)$ and $p \in [1,\infty]$, we consider the standard $L^p(\mu)$ space with norm $\|f\|_{L^p(\mu)} = \left(\int |f|^p \dd \mu \right)^{1/p}$, and we write $\mu \leq \nu$ when $\mu(B) \leq \nu(B)$ for every Borel set $B \subseteq \cX$.

Let $\cP(\cX) \subset M_+(\cX)$ denote the space of probability measures on $\cX$, and take $\cP_p(\cX) \coloneqq \{ \mu \in \cP(\cX) : \int d(x,x_0)^p \dd \mu(x) < \infty \}$ to be those with bounded $p$th moment. We write $\cP_\infty(\cX)$ for probability measures with bounded support.
Given $\mu,\nu \in \cP(\cX)$, let $\Pi(\mu,\nu)$ denote the set of their couplings, i.e., $\pi \in \cP(\cX \times \cX)$ such that $\pi(B \times \cX) = \mu(B)$ and $\pi(\cX \times B) = \nu(B)$, for every Borel set $B$. When $\cX = \R^d$, we write the covariance matrix for $\mu \in \cP_2(\cX)$ as $\Sigma_\mu \coloneqq \E[(X - \E[X])(X - \E[X])^\intercal]$ where $X \sim \mu$. For $f:\cX \to \R$, we define the range $\Range(f): = \sup_{x \in \cX} f(x) - \inf_{x \in \cX} f(x)$. We write $a \lor b = \max\{a,b\}$ and use $\lesssim, \gtrsim, \asymp$ to denote inequalities/equality up to absolute constants.

Recall that $\Wp(\mu,\nu) < \infty$, for any $\mu,\nu \in \cP_p(\cX)$. For any $p \in [1,\infty)$, Kantorovich duality states that
\begin{equation}\label{eq:Wp-duality}
    \Wp(\mu,\nu)^p= \sup_{f \in C_b(\cX)} \int f \dd\mu + \int f^c \dd\nu,
\end{equation}
where the $c$-transform $f^c:\cX \to \R$ is defined by $f^c(y) = \inf_{x \in \cX} d(x,y)^p - f(x)$ (with respect to the cost $c(x,y) = d(x,y)^p$).

\section{Robust Estimation of \texorpdfstring{$\Wp$}{Wp}}
\label{sec:robust}

Outlier-robust OT is designed to address the fact that $\Wp\big((1-\eps)\mu + \eps \delta_x, \nu\big)$ explodes as $d(x,x_0) \to \infty$, no matter how small $\eps$ might be. More generally, one considers an adversary that seeks to dramatically alter $\Wp$ by adding small pieces of mass to its arguments.
To formalize this, we fix $p \in [1,\infty)$ and consider the Huber $\eps$-contamination model popularized in robust statistics \citep{huber64}, where a base measure $\mu \in \cP(\cX)$ is perturbed to obtain a contaminated measure $\tilde{\mu}$ belonging to the ball
\begin{align}
\label{eq:huber-ball}
    \cB_\eps(\mu) \coloneqq (1-\eps)\mu + \eps\,\cP(\cX)= \big\{ (1-\eps) \mu + \eps \alpha : \alpha \in \cP(\cX) \big\}.
\end{align}
The goal is to obtain a robust proxy $\hat{W}:\cP(\cX)^2 \to \R$ which, for any clean distributions $\mu,\nu \in \cP(\cX)$ with contaminated versions $\tilde{\mu} \in \cB_\eps(\mu), \tilde{\nu} \in \cB_\eps(\nu)$, achieves low error $\big|\hat{W}(\tilde{\mu}, \tilde{\nu}) - \Wp(\mu,\nu)\big|$.
In general, this error can be unbounded, so we require that the base measures belong to some family $\cD$ capturing distributional assumptions, e.g., bounded moments of some order. For any $\eps \in [0,1]$ and $\cD \subseteq \cP(\cX)$, define the minimax robust estimation risk %
by %
\begin{equation*}
    R(\cD,\eps) \coloneqq \inf_{\hat{W}:\cP(\cX)^2 \to \R} \sup_{\substack{\mu,\nu \in \cD\\ \tilde{\mu} \in \cB_\eps(\mu)\\ \tilde{\nu} \in \cB_\eps(\nu)}} \big|\hat{W}(\tilde{\mu},\tilde{\nu}) - \Wp(\mu,\nu)\big|.
\end{equation*}

The following theorem characterizes this risk under standard moment assumptions. Moreover, we show that $\RWp$ achieves this risk up to an additional multiplicative error term.

\begin{theorem}[Robust estimation of $\Wp$]
\label{thm:population-limit-robustness}
Fix $q > p$ and let $\cD_q \coloneqq \{ \mu \in \cP_q(\cX) : \|d(\cdot,x)\|_{L^q(\mu)} \leq M \text{ for some $x \in \cX$}\}$ denote the family of distributions with centered $q$th moments uniformly bounded by an absolute constant $M$. Then, for $0 \leq \eps \leq 0.49$\footnote{This bound of $0.49$ can be substituted with any constant less than the information-theoretic limit of $1/2$.}, we have
\begin{equation}
    R(\cD_q,\eps) \lesssim M \eps^{1/p - 1/q},
\end{equation}
and this bound is tight so long as $\cX$ contains two points at distance $M\eps^{-1/q}$. If further $\eps \leq 0.33$, then for any $\mu,\nu \in \cD_q$ with corrupted versions $\tilde{\mu} \in \cB_\eps(\mu), \tilde{\nu} \in \cB_\eps(\nu)$, we have
\begin{equation*}
    |\RWp(\tilde{\mu},\tilde{\nu}) - \Wp(\mu,\nu)| \leq \left[1 - (1-3\eps)^{1/p}\right]\Wp(\mu,\nu) + O(M\eps^{1/p - 1/q}).
\end{equation*}
\end{theorem}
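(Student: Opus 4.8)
The statement bundles three claims: the minimax rate $R(\cD_q,\eps) \lesssim M\eps^{1/p-1/q}$, its tightness, and the guarantee for $\RWp$. The first two are comparatively direct; the last is the substantive part. A tool used throughout is the elementary interpolation bound: if $\alpha,\beta \in \cD_q$ and $\|\alpha-\beta\|_\tv \le \delta$ with $\delta$ bounded away from $2$, then $\Wp(\alpha,\beta) \lesssim M\delta^{1/p-1/q}$. (Couple $\alpha\wedge\beta$ to itself at zero cost and couple the remainders $(\alpha-\beta)_+,(\alpha-\beta)_-$, each of mass $\delta/2$, by any plan; Hölder's inequality with exponents $q/p,(q/p)'$ against the $q$th-moment bounds controls the cost, after noting that the two witnessing centers must lie within $O(M)$ of one another because $\alpha,\beta$ share mass $\gtrsim 1$.)

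For the lower bound on $R$, assume $\cX$ has points $x_0,x_1$ with $d(x_0,x_1)=M\eps^{-1/q}$ and put $\mu_0=\nu_0=\nu_1=\delta_{x_0}$, $\mu_1=(1-\eps)\delta_{x_0}+\eps\delta_{x_1}$; all four lie in $\cD_q$ (the $q$th moment of $\mu_1$ about $x_0$ equals $M^q$), while $\Wp(\mu_1,\nu_1)=\eps^{1/p}d(x_0,x_1)=M\eps^{1/p-1/q}$ and $\Wp(\mu_0,\nu_0)=0$. Since an adversary may present $\tilde\mu=\mu_1\in\cB_\eps(\mu_0)\cap\cB_\eps(\mu_1)$ and $\tilde\nu=\delta_{x_0}\in\cB_\eps(\nu_0)\cap\cB_\eps(\nu_1)$, every $\hat W$ errs by at least $\tfrac12 M\eps^{1/p-1/q}$ on one of the two instances. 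For the upper bound on $R$, given $\tilde\mu$ let $\hat\mu\in\cD_q$ be any measure with $(1-\eps)\hat\mu\le\tilde\mu$ (the true clean $\mu$ is one such), and likewise $\hat\nu$; output $\Wp(\hat\mu,\hat\nu)$. Then $\hat\mu$ and $\mu$ are both dominated by $\tilde\mu/(1-\eps)$, so $(\hat\mu\wedge\mu)(\cX)\ge 2-(1-\eps)^{-1}$, hence $\|\hat\mu-\mu\|_\tv\le 2\eps/(1-\eps)$, and the interpolation bound with the triangle inequality gives $|\Wp(\hat\mu,\hat\nu)-\Wp(\mu,\nu)|\lesssim M\eps^{1/p-1/q}$.

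Now the $\RWp$ guarantee. The upper direction is immediate: $(1-\eps)\mu$ and $(1-\eps)\nu$ are feasible in the infimum defining $\RWp(\tilde\mu,\tilde\nu)$ (using $\tilde\mu\ge(1-\eps)\mu$) and renormalize to $\mu,\nu$, so $\RWp(\tilde\mu,\tilde\nu)\le\Wp(\mu,\nu)$; this disposes of the direction $\RWp>\Wp(\mu,\nu)$ and reduces the claim to the lower bound $\RWp(\tilde\mu,\tilde\nu)\ge(1-3\eps)^{1/p}\Wp(\mu,\nu)-O(M\eps^{1/p-1/q})$. Fix any feasible $\mu',\nu'$ and set $\bar\mu=\mu'/\mu'(\cX)$, $\bar\nu=\nu'/\nu'(\cX)$. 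Splitting off the clean part via $\lambda_\mu=\mu'\wedge(1-\eps)\mu$ (and analogously $\lambda_\nu$), one has $\lambda_\mu(\cX)\ge 1-2\eps$, $\lambda_\mu\le\mu'\le\bar\mu$, and $\lambda_\mu/(1-\eps)\le\mu$. Let $\theta_\mu=\lambda_\mu/\lambda_\mu(\cX)$; then $\theta_\mu\in\cD_q$ with constant $\lesssim M$, $\|\theta_\mu-\mu\|_\tv\le 2\eps/(1-\eps)$, and the interpolation bound yields $|\Wp(\mu,\nu)-\Wp(\theta_\mu,\theta_\nu)|\lesssim M\eps^{1/p-1/q}$. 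Writing $w_\mu=\lambda_\mu(\cX)/\mu'(\cX)\ge(1-2\eps)/(1-\eps)$ we have $\bar\mu\ge w_\mu\theta_\mu$ and $\bar\nu\ge w_\nu\theta_\nu$; since $w_\mu+w_\nu-1\ge(1-3\eps)/(1-\eps)\ge 1-3\eps\ge 0$ (this is where $\eps\le 0.33$ is used) and $x^p-y^p\ge(x-y)^p$ for $x\ge y\ge 0$, it suffices to prove the mixture inequality $\Wp(\bar\mu,\bar\nu)^p\ge(w_\mu+w_\nu-1)\Wp(\theta_\mu,\theta_\nu)^p-O(M^p\eps^{1-p/q})$ and then take the infimum over $\mu',\nu'$.

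For the mixture inequality I use Kantorovich duality. Take $f$ near-optimal for $\Wp(\theta_\mu,\theta_\nu)^p$; because $\theta_\mu,\theta_\nu$ have bounded $q$th moments, $f$ can be replaced — at the price of an additive $O(M^p\eps^{1-p/q})$ loss in the dual objective — by a potential that is constant outside a ball of radius $\asymp M\eps^{-1/q}$ about $\theta_\mu$'s center and whose $c$-transform is constant outside the corresponding ball for $\theta_\nu$, so that after a harmless shift $\|f\|_\infty\vee\|f^c\|_\infty\lesssim M^p\eps^{-p/q}$. Feeding $(f,f^c)$ into the dual bound for $\Wp(\bar\mu,\bar\nu)$ and using $\bar\mu\ge w_\mu\theta_\mu$, $\bar\nu\ge w_\nu\theta_\nu$ together with the bound on $\|f\|_\infty,\|f^c\|_\infty$ to discard the $\le\eps$ mass of $\bar\mu,\bar\nu$ not captured by $w_\mu\theta_\mu,w_\nu\theta_\nu$, gives $\Wp(\bar\mu,\bar\nu)^p\ge w_\mu\!\int f\,d\theta_\mu+w_\nu\!\int f^c\,d\theta_\nu-O(M^p\eps^{1-p/q})$; writing $w_\mu=(w_\mu{+}w_\nu{-}1)+(1{-}w_\nu)$, $w_\nu=(w_\mu{+}w_\nu{-}1)+(1{-}w_\mu)$ and again invoking $\|f\|_\infty,\|f^c\|_\infty\lesssim M^p\eps^{-p/q}$ with $1-w_\mu,1-w_\nu\lesssim\eps$ converts this to $\Wp(\bar\mu,\bar\nu)^p\ge(w_\mu{+}w_\nu{-}1)\big(\int f\,d\theta_\mu+\int f^c\,d\theta_\nu\big)-O(M^p\eps^{1-p/q})\ge(w_\mu{+}w_\nu{-}1)\Wp(\theta_\mu,\theta_\nu)^p-O(M^p\eps^{1-p/q})$. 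The main obstacle is precisely this last step: one must perform the truncation of $f$ so that both the lost dual value and the contributions of the outlier masses are genuinely $O(M^p\eps^{1-p/q})$, with the main term carrying the exact coefficient $w_\mu+w_\nu-1$ (a naive primal coupling argument here loses an unwanted $2^{p-1}$ factor for $p\ge 2$, which would corrupt the clean $3\eps$). It is also worth noting why a purely additive guarantee is impossible — $\Wp$ is unbounded over $\cD_q$ (e.g.\ $\delta_{x_0}$ versus $\delta_{x_1}$) — so the multiplicative loss $[1-(1-3\eps)^{1/p}]\Wp(\mu,\nu)$ is unavoidable.
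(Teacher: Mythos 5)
Your treatment of the minimax rate and its tightness matches the paper's: the two-point lower bound ($\delta_{x_0}$ vs.\ $(1-\eps)\delta_{x_0}+\eps\delta_{x_1}$ at distance $M\eps^{-1/q}$) and the ``output $\Wp(\hat\mu,\hat\nu)$ for any consistent $\hat\mu,\hat\nu\in\cD_q$'' estimator are exactly the paper's moduli-of-continuity argument \`a la Donoho--Liu, and your H\"older/interpolation bound $\Wp(\alpha,\beta)\lesssim M\|\alpha-\beta\|_\tv^{1/p-1/q}$ plays the role of the resilience lemma the paper imports from robust mean estimation. The divergence is in the $\RWp$ guarantee, and there you have a genuine gap: the entire lower bound rests on replacing a near-optimal Kantorovich potential $f$ for $\Wp(\theta_\mu,\theta_\nu)^p$ by a truncated potential with $\|f\|_\infty\vee\|f^c\|_\infty\lesssim M^p\eps^{-p/q}$ at an additive cost $O(M^p\eps^{1-p/q})$ in the dual objective. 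You assert this and even flag it as ``the main obstacle,'' but you never prove it, and it is not routine: one must first normalize a $c$-concave $f$ so that its growth is controlled by $d(\cdot,x_0)^p+O(M^p)$ (which requires locating a point in the bulk of $\theta_\nu$ where $f^c$ is not too negative), then run a tail/H\"older estimate to bound $\int_{f>T}(f-T)\,\dd\theta_\mu$, and separately verify that replacing $f^c$ by $(f_T)^c$ only helps. As written, the proof is incomplete at its load-bearing step.

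The paper avoids this entirely by staying primal. Given any feasible $\mu_0'\le\tilde\mu$, $\nu_0'\le\tilde\nu$, it intersects them with $(1-\eps)\mu$ and $(1-\eps)\nu$ and restricts the optimal coupling accordingly, discarding at most $2\eps$ further mass to equalize; restriction of a coupling never increases transport cost, so this yields feasible sub-measures for $\Wp^{3\eps}(\mu,\nu)$ of the \emph{clean} pair with $(1-3\eps)^{1/p}\,\Wp^{3\eps}(\mu,\nu)\le\RWp(\tilde\mu,\tilde\nu)$ --- no potentials, no $2^{p-1}$ loss (your worry about the ``naive primal coupling argument'' applies to splicing two couplings together, which is not what happens here). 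The remaining error $\Wp^{3\eps}(\mu,\nu)\ge\Wp(\mu,\nu)-2\mathfrak{m}_+(\cD_q,3\eps)$ then follows from the $\Wp$-triangle inequality (applied to the distance itself, not its $p$th power) together with the modulus bound, i.e., your own interpolation lemma applied to $\mu$ vs.\ $\mu'/(1-3\eps)$. If you want to salvage your dual route you must actually prove the truncation lemma; otherwise I would recommend rewiring part three along these primal lines, which also makes transparent where $\eps\le 0.33$ enters (keeping $1-3\eps>0$).
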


The distance assumption is quite mild and is satisfied, e.g., when $\cX$ has a path connected component with diameter at least $M\eps^{-1/q}$. The risk bounds follow by characterizing appropriate moduli of continuity, mirroring classic techniques of \cite{donoho88}. In particular, for any $\cD \subseteq \cP(\cX)$, we have
\begin{equation*}
    \sup_{\substack{\alpha,\beta \in \cD\\ \alpha \in \cB_\eps(\beta)}} \Wp(\alpha,\beta) \lesssim R(\cD,\eps) \lesssim \sup_{\substack{\alpha \in \cD, \,\beta \in \cP(\cX)\\ \alpha \in \cB_{2\eps}(\beta)}} \Wp(\alpha,\beta),
\end{equation*}
with the estimator achieving the upper bound returning $\hat{W}(\tilde{\mu},\tilde{\nu}) = \Wp(\hat{\mu},\hat{\nu})$ for any $\hat{\mu},\hat{\nu} \in \cD$ such that $\tilde{\mu} \in \cB_\eps(\mu)$ and $\tilde{\nu} \in \cB_\eps(\nu)$. To bound the larger modulus, we use that $\Wp(\alpha,\beta) \leq \|d(\cdot,x)\|_{L^p(\alpha)} + \|d(\cdot,x)\|_{L^p(\beta)}$ for any $\alpha,\beta \in \cP(\cX)$ and $x \in \cX$, connecting $\Wp$ to moment bounds, and apply a bound from robust mean estimation. To control the smaller modulus, we simply consider $\alpha = \delta_{x}$ and $\beta = (1-\eps)\delta_x + \eps \delta_y$ for any $x,y \in \cX$ such that $d(x,y) = M\eps^{-1/q}$. Finally, the risk bound for $\RWp$, which matches the minimax risk up to a multiplicative error term of at most $3\eps \Wp(\mu,\nu)$, relies on a lemma showing that $|\RWp(\tilde{\mu},\tilde{\nu}) - \Wp(\mu,\nu)| \leq |(1-3\eps)\Wp^{3\eps}(\mu,\nu) - \Wp(\mu,\nu)|$. Full details are provided in \cref{prf:robustness}.

\medskip  
We next specialize \cref{thm:population-limit-robustness} to the common case of $\cX = \R^d$ with base measures whose covariance matrices $\Sigma_\mu,\Sigma_\nu$ have bounded spectral norms.
Such measures also have bounded moments in the sense of \cref{thm:population-limit-robustness}, so the previous upper bound applies. The lower bound for this case (given in \cref{prf:robustness-cov}) uses the same technique as before but with a more careful choice of measures involving a multivariate Gaussian.

\begin{corollary}[Bounded covariance]
\label{cor:robustness-cov}
Fix $\cX = \R^d$ and let $\cD_2^{\mathrm{cov}} \coloneqq \{ \mu \in \cP_2(\cX): \Sigma_\mu \preceq I_d \}$. For $p < 2$ and $0 \leq \eps \leq 0.49$, we have $R(\cD_2^{\mathrm{cov}}, \eps) \asymp \sqrt{d} \, \eps^{1/p-1/2}$. When $\eps \leq 0.33$, this risk is achieved by $\hat{W} = \RWp$ up to an additional term of $O(\eps \,\Wp(\mu,\nu))$, as in \cref{thm:population-limit-robustness}.
\end{corollary}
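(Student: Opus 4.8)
The corollary is essentially a specialization of \cref{thm:population-limit-robustness} to $q=2$, supplemented by an explicit lower-bound construction that places the contaminating mass as a wide spherical Gaussian. The plan is: (i) deduce the upper bound $R(\cD_2^{\mathrm{cov}},\eps)\lesssim\sqrt d\,\eps^{1/p-1/2}$ and the $\RWp$ guarantee by viewing $\cD_2^{\mathrm{cov}}$ as a sub-family of $\cD_2$ with moment bound $M=\sqrt d$; and (ii) prove the matching lower bound by exhibiting $\alpha,\beta\in\cD_2^{\mathrm{cov}}$ with $\alpha\in\cB_\eps(\beta)$ and $\Wp(\alpha,\beta)\asymp\sqrt d\,\eps^{1/p-1/2}$, then running the two-hypothesis argument.

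\textbf{Upper bound and $\RWp$ guarantee.}
First I would note that $\cD_2^{\mathrm{cov}}\subseteq\cD_2$ with $M=\sqrt d$: for $\mu$ with $\Sigma_\mu\preceq I_d$, taking $x=\E_{X\sim\mu}[X]$ gives $\|d(\cdot,x)\|_{L^2(\mu)}^2=\E\|X-\E X\|^2=\mathrm{tr}(\Sigma_\mu)\le\mathrm{tr}(I_d)=d$. Since $p<2$ we may take $q=2>p$ in \cref{thm:population-limit-robustness}; the bound $R(\cD_q,\eps)\lesssim M\eps^{1/p-1/q}$ is homogeneous of degree one in $M$ (equivalently, rescale $\R^d$ by $1/\sqrt d$ to reduce to $M=1$), so plugging in $q=2$, $M=\sqrt d$ yields $R(\cD_2^{\mathrm{cov}},\eps)\le R(\cD_2|_{M=\sqrt d},\eps)\lesssim\sqrt d\,\eps^{1/p-1/2}$ for $\eps\le 0.49$. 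The same substitution in the second part of \cref{thm:population-limit-robustness} gives, for $\eps\le 0.33$,
\[
  |\RWp(\tilde\mu,\tilde\nu)-\Wp(\mu,\nu)|\le\bigl[1-(1-3\eps)^{1/p}\bigr]\Wp(\mu,\nu)+O\bigl(\sqrt d\,\eps^{1/p-1/2}\bigr),
\]
and since $1-(1-3\eps)^{1/p}=O(\eps)$ on $[0,0.33]$, this is exactly the asserted form.

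\textbf{Lower bound.}
It remains to show $R(\cD_2^{\mathrm{cov}},\eps)\gtrsim\sqrt d\,\eps^{1/p-1/2}$ for $\eps\in(0,0.49]$. I would set $\beta\coloneqq\delta_0$, $\gamma\coloneqq\mathcal{N}(0,\eps^{-1}I_d)$, and $\alpha\coloneqq(1-\eps)\beta+\eps\gamma\in\cB_\eps(\beta)$. Both lie in $\cD_2^{\mathrm{cov}}$: $\Sigma_\beta=0\preceq I_d$, and $\alpha$ has mean $0$ with $\E_{X\sim\alpha}[XX^\intercal]=\eps\cdot\eps^{-1}I_d=I_d\preceq I_d$, so the Gaussian scale $\eps^{-1}$ exactly saturates the covariance constraint. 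Because $\beta=\delta_0$ admits only the product coupling, $\Wp(\alpha,\beta)^p=\int\|x\|^p\dd\alpha(x)=\eps\int\|x\|^p\dd\gamma(x)=\eps^{1-p/2}\,\E\|Z\|^p$ with $Z\sim\mathcal{N}(0,I_d)$; a standard $\chi^2$ concentration bound gives $\E\|Z\|^p\gtrsim d^{p/2}$ (and $\le d^{p/2}$ by Jensen), so $\Wp(\alpha,\beta)\asymp\sqrt d\,\eps^{1/p-1/2}$. Now compare the clean pairs $(\mu_1,\nu_1)=(\alpha,\alpha)$, with $\Wp=0$, and $(\mu_2,\nu_2)=(\beta,\alpha)$, with $\Wp=\Wp(\alpha,\beta)$: the corrupted observation $(\alpha,\alpha)$ is admissible for both, since $\alpha\in\cB_\eps(\alpha)\cap\cB_\eps(\beta)$. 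Hence every estimator $\hat W$ satisfies $\max\{|\hat W(\alpha,\alpha)|,\,|\hat W(\alpha,\alpha)-\Wp(\alpha,\beta)|\}\ge\tfrac12\Wp(\alpha,\beta)$, giving $R(\cD_2^{\mathrm{cov}},\eps)\ge\tfrac12\Wp(\alpha,\beta)\gtrsim\sqrt d\,\eps^{1/p-1/2}$ (this is the concrete instance of the lower modulus bound recorded after \cref{thm:population-limit-robustness}). Combining the two directions yields $R(\cD_2^{\mathrm{cov}},\eps)\asymp\sqrt d\,\eps^{1/p-1/2}$.

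\textbf{Main obstacle.}
Given \cref{thm:population-limit-robustness}, almost everything is bookkeeping; the only genuine work is calibrating the lower-bound construction, namely choosing the Gaussian variance $\eps^{-1}$ so that the covariance budget is exactly spent while the displaced mass sits at the critical radius $\asymp\sqrt d\,\eps^{-1/2}$, and then verifying $\E\|Z\|^p\asymp d^{p/2}$ uniformly in $d$ (clean for $d\ge2$ via concentration of $\|Z\|^2$, and a trivial positive constant for $d=1$). One should also double-check that the hypothesis $p<2$ is what legitimizes both $q=2>p$ in the upper bound and the positivity of the exponent $1/p-1/2$.
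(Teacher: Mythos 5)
Your proof is correct and follows essentially the same route as the paper's: the upper bound by noting $\E\|X-\E X\|_2^2=\tr(\Sigma_\mu)\le d$ so that $\cD_2^{\mathrm{cov}}\subseteq\cD_2$ with $M=\sqrt d$ and invoking \cref{thm:population-limit-robustness}, and the lower bound via the two-hypothesis/modulus argument with exactly the same pair $\delta_0$ versus $(1-\eps)\delta_0+\eps\,\cN(0,\eps^{-1}I_d)$. Your write-up is if anything slightly more careful than the paper's (explicitly verifying $\E\|Z\|^p\asymp d^{p/2}$ and the admissibility of the corrupted observation in both hypotheses), but there is no substantive difference in approach.
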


\begin{remark}[Comparison with robust mean estimation]
    In the setting of mean estimation under assumptions analogous to \cref{cor:robustness-cov} (i.e., $\Sigma_\mu \preceq I_d$, $\tilde{\mu} \in \cB_\eps(\mu)$), the optimal error rate of $\sqrt{\eps}$ is \emph{dimension-free} \citep{mengjie2018}. We interpret the factor of $\sqrt{d}$ present for our $\Wp$ rate as reflecting the high-dimensional optimization inherent to the Wasserstein distance.
\end{remark}

\begin{remark}[Asymmetric contamination]
The robust distance from \eqref{eq:RWp} readily extends to an asymmetric distance $\Wp^{\eps_\mu,\eps_\nu}$ with distinct robustness radii, so that $\RWp = \Wp^{\eps,\eps}$. Extensions of our main results (including \cref{thm:population-limit-robustness}) to this setting are presented in \cref{app:asymmetric-results}. The one-sided version $\RWp(\mu \| \nu) \coloneqq \Wp^{\eps,0}(\mu,\nu)$ is well-suited for applications such as generative modeling (see \cref{SEC:applications}).
\end{remark}

Our precise error bounds exploit the unique structure of $\RWp$ and do not translate clearly to existing robust proxies for $\Wp$.
We note, however, that the TV-robustified $\Wp$ presented in \citet{balaji2020} can be controlled and approximated to some extent by $\RWp$, via bounds presented in \cref{app:equivalent-formulations}. 

\section{Duality Theory for \texorpdfstring{$\RWp$}{Robust Wp}}
\label{sec:duality}

In addition to its robustness properties, $\RWp$ enjoys a simple optimization structure that enables a useful duality theory. Unless stated otherwise, we henceforth assume that $\cX$ is compact. To begin, we reformulate $\RWp(\mu,\nu)$ as a minimization problem over Huber balls centered at $\mu$ and $\nu$.

\begin{restatable}[Mass addition]{proposition}{massaddition}
\label{prop:mass-addition}
For all $p \in [1,\infty]$ and $\mu,\nu \in \cP(\cX)$, we have
\begin{align}
\label{eq:mass-addition}
    \RWp(\mu,\nu)= (1-2\delta)^{-1/p} \hspace{-2mm} \inf_{\substack{\mu' \in \cB_{\delta}(\mu)\\ \nu' \in \cB_{\delta}(\nu)}} \Wp(\mu',\nu'),
\end{align}
where $\delta = \eps/(1+\eps)$.
\end{restatable}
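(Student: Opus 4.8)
The plan is to strip the renormalization out of \eqref{eq:RWp} by a change of variables and then match the two optimization problems through explicit constructions in both directions. Throughout I use the elementary identities $1-\delta = (1+\eps)^{-1}$, $\delta = \eps(1-\delta)$, $(1-\delta)(1-\eps) = 1-2\delta$, and $(1-\delta)\eps = \delta$, and I dispose of the trivial case $\eps=0$ (where both sides equal $\Wp(\mu,\nu)$), so assume $\delta \in (0,1/2)$. The first step is to peel off the normalization: since any $\mu',\nu'$ feasible in \eqref{eq:RWp} satisfy $\mu'(\cX),\nu'(\cX) \in [1-\eps,1]$, a direct check gives
\[
\RWp(\mu,\nu) = \inf\big\{\Wp(\rho,\sigma) : \rho,\sigma \in \cP(\cX),\ (1-\eps)\rho \le \mu,\ (1-\eps)\sigma \le \nu\big\},
\]
the two problems being related by $\rho = \mu'/\mu'(\cX)$ in one direction and $\mu' = (1-\eps)\rho$ in the other.

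For the inequality ``$\le$'' against $(1-2\delta)^{-1/p}\inf_{\mu'\in\cB_\delta(\mu),\,\nu'\in\cB_\delta(\nu)}\Wp(\mu',\nu')$, fix $\mu' = (1-\delta)\mu + \delta\alpha \in \cB_\delta(\mu)$ and $\nu' = (1-\delta)\nu + \delta\beta \in \cB_\delta(\nu)$, and let $\pi\in\Pi(\mu',\nu')$ be optimal (couplings attaining $\Wp$, resp.\ $\Winfty$, exist as $\cX$ is compact). Since $\delta\alpha = \mu' - (1-\delta)\mu \le \mu'$, its Radon--Nikodym density $A_1$ w.r.t.\ $\mu'$ lies in $[0,1]$, and likewise the density $A_2$ of $\delta\beta$ w.r.t.\ $\nu'$. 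Viewing $A_1$ as a function of the first coordinate of $\cX\times\cX$ and $A_2$ of the second, set $g \coloneqq A_1\lor A_2$ and raise it to $g' \coloneqq g + t(1-g) \in [0,1]$ with $t$ chosen so that $\int g'\,\dd\pi = 2\delta$, which is feasible since $\int g\,\dd\pi \le \int(A_1+A_2)\,\dd\pi = 2\delta < 1$. Then $\pi_{\mathrm{bad}} \coloneqq g'\pi \le \pi$ has mass $2\delta$ and, as $g'\ge A_1$ and $g' \ge A_2$, its first marginal dominates $(A_1\pi)_1 = \delta\alpha$ and its second dominates $\delta\beta$. Consequently $\tilde\pi \coloneqq \pi - \pi_{\mathrm{bad}} \ge 0$ has mass $1-2\delta$ with marginals $\le \mu' - \delta\alpha = (1-\delta)\mu$ and $\le (1-\delta)\nu$; renormalizing by $1-2\delta$ produces $\rho,\sigma \in \cP(\cX)$ feasible in the reformulation (here $(1-\eps)(1-\delta)=1-2\delta$ enters) with $\Wp(\rho,\sigma)^p \le (1-2\delta)^{-1}\int d^p\,\dd\tilde\pi \le (1-2\delta)^{-1}\Wp(\mu',\nu')^p$; for $p=\infty$ the prefactor is $1$ and rescaling leaves $L^\infty$-norms unchanged. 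Taking the infimum over $\mu',\nu'$ gives this direction.

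For ``$\ge$'', fix feasible $\rho,\sigma$, write $\mu = (1-\eps)\rho + \eps\gamma$ and $\nu = (1-\eps)\sigma + \eps\eta$ with $\gamma,\eta\in\cP(\cX)$, and set
\[
\mu' \coloneqq (1-\delta)\mu + \delta\eta = (1-2\delta)\rho + \delta(\gamma+\eta) \in \cB_\delta(\mu), \qquad \nu' \coloneqq (1-\delta)\nu + \delta\gamma = (1-2\delta)\sigma + \delta(\gamma+\eta) \in \cB_\delta(\nu).
\]
Since $\mu'$ and $\nu'$ share the common block $\delta(\gamma+\eta)$, gluing a coupling of $(1-2\delta)\rho$ with $(1-2\delta)\sigma$ and the zero-cost diagonal coupling of $\delta(\gamma+\eta)$ gives $\Wp(\mu',\nu')^p \le (1-2\delta)\Wp(\rho,\sigma)^p$ (and $\Winfty(\mu',\nu')\le\Winfty(\rho,\sigma)$ for $p=\infty$, by $\Winfty$-convexity). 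Hence $\inf_{\cB_\delta}\Wp \le (1-2\delta)^{1/p}\Wp(\rho,\sigma)$, and minimizing over $\rho,\sigma$, which equals $\RWp(\mu,\nu)$ by the reformulation, finishes the proof.

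The step I expect to be the main obstacle is the mass-splitting in the ``$\le$'' direction: carving out of an arbitrary optimal coupling a sub-coupling of exactly the right total mass whose two marginals simultaneously dominate the contamination pieces $\delta\alpha$ and $\delta\beta$. The Radon--Nikodym/lattice construction above handles it, and the hypothesis $\eps < 1/2$ (equivalently $\delta < 1/2$) is used precisely to ensure $\pi$ carries enough mass ($1 > 2\delta$) to absorb the removal.
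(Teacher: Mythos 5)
Your proof is correct, and the key direction is argued by a genuinely different route than the paper's. The shared skeleton: your opening reformulation $\RWp(\mu,\nu)=\inf\{\Wp(\rho,\sigma):\rho,\sigma\in\cP(\cX),\ (1-\eps)\rho\le\mu,\ (1-\eps)\sigma\le\nu\}$ is precisely the paper's \cref{lem:conservative-estimate}, and your ``$\ge$'' direction coincides (after rescaling by $1+\eps$) with the paper's easy direction, where the mass removed from one marginal is reattached diagonally to the other: your $\mu'=(1-\delta)\mu+\delta\eta$ is the paper's $\mu+(\nu-\nu')$ up to normalization. The divergence is in the hard direction ($\RWp\le$ RHS). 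The paper disintegrates an optimal $\pi\in\Pi(\mu',\nu')$ via regular conditional probabilities and runs an elaborate bookkeeping decomposition ($\nu'_{\gets\mu},\nu'_{\gets\alpha},\beta_{\gets\mu},\mu_{\to\nu},\dots$) to show the added mass ``need not move.'' You instead carve out of $\pi$ a sub-coupling $g'\pi$ of mass exactly $2\delta$ whose two marginals simultaneously dominate $\delta\alpha$ and $\delta\beta$, by taking $g'$ to be a convex raise of $A_1\lor A_2$ with $A_i$ the Radon--Nikodym densities of the contamination pieces; the complement $\tilde\pi=(1-g')\pi$ then has marginals dominated by $(1-\delta)\mu$ and $(1-\delta)\nu$, and the identity $(1-\delta)(1-\eps)=1-2\delta$ closes the loop. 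This lattice/density trick is arguably cleaner: it avoids conditional probabilities entirely, the feasibility check $\int(A_1\lor A_2)\dd\pi\le\int(A_1+A_2)\dd\pi=2\delta\le1$ is exactly right, and the domination of marginals follows pointwise from $g'\ge A_1$ and $g'\ge A_2$. Two small remarks. First, your closing claim that ``$\eps<1/2$ is equivalent to $\delta<1/2$'' is off: $\delta=\eps/(1+\eps)<1/2$ iff $\eps<1$, which is what the construction actually needs (and which excludes only the degenerate case $\eps=1$; no hypothesis $\eps<1/2$ appears in the statement). Second, attainment of an optimal coupling is convenient but not essential---running the same argument with an $\eta$-optimal $\pi$ and letting $\eta\to0$ gives the identical conclusion.
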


While the original definition \eqref{eq:RWp} involves \emph{removing} mass from the base measures and rescaling, \eqref{eq:mass-addition} is optimizing over mass \emph{added} to $\mu$ and $\nu$ (up to scaling). Our proof in \cref{prf:mass-addition} of this somewhat surprising result relies on the symmetric nature of the OT distance objective. Roughly, instead of removing a piece mass from one measure, we may always add it to the other.

This reformulation is valuable because the updated constraint sets are simple and do not interact with the simplex boundary. Specifically, definition \eqref{eq:huber-ball} reveals that the Huber ball $\cB_\delta(\mu)$ is always an affine shift of $\cP(\cX)$, with scaling independent of $\mu$. Hence, linear optimization over $\cB_\delta(\mu)$ is straightforward:
\begin{equation}
\label{eq:decoupling}
    \inf_{\mu' \in \cB_\delta(\mu)} \int f \dd \mu' = (1-\eps)\int f \dd \mu + \delta \inf_{x \in \cX} f(x).
\end{equation}

The above stands in contrast to the TV balls (i.e., sets of the form $\{ \mu' \in \cP(\cX): \|\mu' - \mu\|_\tv \leq \delta \}$) that appear in existing robust OT formulations; these exhibit non-trivial boundary interactions as depicted in \cref{fig:huber-vs-tv-ball}.
Fortunately, $\Wp$ is closely tied to the linear form $\mu \mapsto \int f \dd \mu$ via Kantorovich duality---a cornerstone for various theoretical derivations and practical implementations. Combining this with a minimax result, we establish a related dual form for $\RWp$.

\begin{figure}[t]
 \centering
 \scalebox{0.3}
 {\includegraphics{./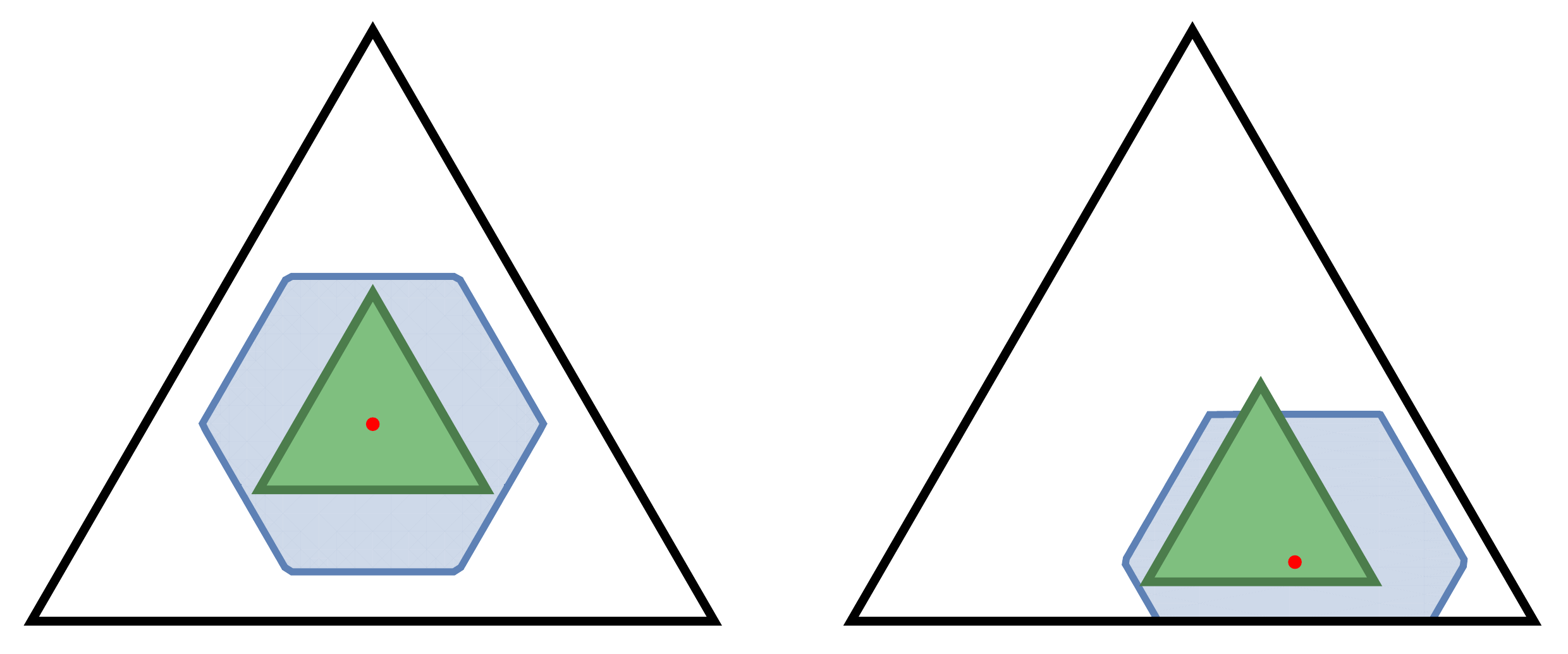}}
 \caption{Huber $\eps$-contamination balls (green) and $\eps$-TV balls (blue), centered at distinct points (red) within the 2-dimensional simplex. The Huber balls with different centers are translates of each other, while the rightmost TV ball interacts non-trivially with the simplex boundary.} \label{fig:huber-vs-tv-ball}
\end{figure}

\begin{restatable}[Dual form]{theorem}{RWpdual}
\label{thm:RWp-dual}
For $p \in [1,\infty)$, $\eps \in [0,1]$, and $\mu,\nu \in \cP(\cX)$, we have
\begin{align}
\label{eq:RWp-dual}
\begin{split}
    (1 - \eps) \RWp(\mu,\nu)^p &= \sup_{\substack{f \in C_b(\cX)}} \int f \dd \mu + \int f^c \dd \nu -  2 \eps \|f\|_\infty\\
    &= \sup_{\substack{f \in C_b(\cX)}} \int f \dd \mu + \int f^c \dd \nu - \eps \Range(f),\raisetag{7.5mm}
\end{split}
\end{align}
and the suprema are achieved by $f \in C_b(\cX)$ with $f = (f^c)^c$.
\end{restatable}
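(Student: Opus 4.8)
The plan is to start from the mass-addition reformulation \eqref{eq:mass-addition}, which rewrites $\RWp(\mu,\nu)^p$ (up to the factor $(1-2\delta)^{-1}$, with $\delta = \eps/(1+\eps)$) as $\inf_{\mu' \in \cB_\delta(\mu),\, \nu' \in \cB_\delta(\nu)} \Wp(\mu',\nu')^p$, except that here the measures $\mu',\nu'$ have common mass $1+\eps$ rather than $1$; I will either rescale to probability measures and carry the constant, or work directly with $\Wp$ between equal-mass measures. Into the inner $\Wp(\mu',\nu')^p$ I substitute Kantorovich duality \eqref{eq:Wp-duality}, giving a triple optimization: $\inf_{\mu',\nu'} \sup_{f \in C_b(\cX)} \int f \dd\mu' + \int f^c \dd\nu'$. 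The objective is linear (in fact affine) in the pair $(\mu',\nu')$ and concave in $f$ (a supremum-representation / $c$-transform argument makes $f \mapsto \int f\dd\mu' + \int f^c\dd\nu'$ concave, or one restricts to $c$-concave $f$ where this holds), and the constraint sets $\cB_\delta(\mu)$, $\cB_\delta(\nu)$ are convex and weak-$*$ compact (since $\cX$ is compact, $\cP(\cX)$ is weak-$*$ compact, and an affine shift preserves this). This is exactly the setting for a Sion-type minimax theorem, so I may swap to get $\sup_f \inf_{\mu' \in \cB_\delta(\mu)} \int f \dd\mu' + \inf_{\nu' \in \cB_\delta(\nu)} \int f^c \dd\nu'$.

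Now the decoupling identity \eqref{eq:decoupling} evaluates each inner infimum in closed form: $\inf_{\mu' \in \cB_\delta(\mu)} \int f \dd\mu' = (1-\eps)\int f \dd\mu + \delta \inf_x f(x)$ (rescaled appropriately for the $(1+\eps)$-mass version — here the cleanest route is to note $\cB_\delta$ at mass $1+\eps$ is $\mu + \eps\,\cP(\cX)$, so the infimum is $\int f \dd\mu + \eps \inf_x f(x)$), and similarly $\inf_{\nu'} \int f^c \dd\nu' = \int f^c \dd\nu + \eps \inf_y f^c(y)$. Assembling, and using the elementary facts $\inf_x f(x) = -\|{-f}\|_\infty$-type bookkeeping together with $\inf_y f^c(y) = \inf_y \inf_x [d(x,y)^p - f(x)] \geq -\sup_x f(x) = -\|f\|_\infty$ when WLOG $f \geq 0$ — one normalizes $f$ so that $\inf_x f(x) = 0$, which is legitimate since $(f+a)^c = f^c - a$ leaves $\int f\dd\mu + \int f^c\dd\nu$ invariant and only shifts the penalty term — the two inner infima collapse to $\int f \dd\mu + \int f^c \dd\nu - \eps\|f\|_\infty$ modulo the constant factor. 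Tracking the $(1-2\delta)^{-1/p}$ factor and the $(1+\eps)$ mass normalization, $(1-2\delta) = (1-\eps)/(1+\eps)$, and the arithmetic should reduce the prefactor to exactly $(1-\eps)$ on the left-hand side. The second equality, with $\eps\,\Range(f)$ in place of $2\eps\|f\|_\infty$, follows by optimizing the shift $a$ in $f \mapsto f + a$: the penalty $2\eps\|f+a\|_\infty = 2\eps \max\{|\inf f + a|, |\sup f + a|\}$ is minimized at $a = -(\sup f + \inf f)/2$, where it equals $\eps(\sup f - \inf f) = \eps\Range(f)$, and the rest of the objective is shift-invariant.

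Finally, attainment: since $\cX$ is compact, one restricts the supremum to $1$-Lipschitz-type / uniformly equicontinuous $c$-concave potentials $f = (f^c)^c$ (the $c$-transform is idempotent in the sense that $((f^c)^c)^c = f^c$, and replacing $f$ by $(f^c)^c$ only increases the objective while $f^c$ is unchanged), which form a compact subset of $C_b(\cX)$ by Arzelà–Ascoli; the objective is upper semicontinuous there, so a maximizer exists. The main obstacle I anticipate is the rigorous justification of the minimax swap — verifying the precise concavity/convexity and compactness hypotheses of Sion's theorem for the $c$-transform objective (the map $f \mapsto \int f^c \dd\nu'$ is concave but its interaction with the $\inf$ over $\mu',\nu'$ needs care), and correctly bookkeeping the non-probability ($1+\eps$ mass) normalization through every step so the constant comes out as $(1-\eps)$ rather than some nearby expression. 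A secondary subtlety is ensuring the normalization $\inf_x f(x) = 0$ is compatible with the later $\Range$ reformulation — these are handled sequentially, first proving the $2\eps\|f\|_\infty$ form over all $f$, then the $\Range$ form by the shift-optimization above.
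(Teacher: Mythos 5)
Your proposal is correct and follows essentially the same route as the paper: mass-addition reformulation, Kantorovich duality, Sion's minimax theorem using weak-$*$ compactness of the Huber balls, the decoupling identity, the identity $\inf_y f^c(y) = -\sup_x f(x)$, the shift argument relating the $2\eps\|f\|_\infty$ and $\eps\,\Range(f)$ penalties, and Arzel\`a--Ascoli for attainment. The only cosmetic difference is that the paper applies Sion to the two-potential form of Kantorovich duality (where the objective is jointly linear in $(f,g)$) and only substitutes $g = f^c$ after the swap, which sidesteps the concavity verification for $f \mapsto \int f^c \,\dd\nu'$ that you flag as the main point needing care.
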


This new formulation differs from the classic dual \eqref{eq:Wp-duality} by a range penalty for the potential function. When $p=1$, we have $f^c = -f$ and $f = (f^c)^c$ exactly when $f$ is 1-Lipschitz.
The theorem is proven in \cref{prf:RWp-dual}, where we first apply \cref{prop:mass-addition} and then invoke Kantorovich duality for $\Wp(\mu',\nu')$, while verifying that the conditions for Sion's minimax theorem hold true. Applying minimax gives
\begin{align*}
    \inf_{\substack{\mu' \in \cB_{\delta}(\mu)\\\nu' \in \cB_{\delta}(\nu)}} \Wp(\mu',\nu') =\, \sup_{f \in C_b(\cX)} \left[ \inf_{\mu' \in \cB_{\delta}(\mu)} \int f \dd \mu' + \inf_{\nu' \in \cB_{\delta}(\nu)} \int f^c \dd \nu' \right],
\end{align*}
where $\delta = \eps/(1+\eps)$.
At this point, we employ \eqref{eq:decoupling}, along with properties of the $c$-transform, to obtain the desired dual. We stress that if $\mu'$ and $\nu'$ instead varied within TV balls, the inner minimization problems would not admit closed forms due to boundary interactions. 

\cref{thm:RWp-dual} reveals an elementary procedure for robustifying the Wasserstein distance against outliers: regularize its standard Kantorovich dual w.r.t.\ the sup-norm of the potential function. The simplicity of this modification is its main strength. As demonstrated in \cref{SEC:applications}, this enables adjusting popular duality-based OT solvers, e.g., \citet{arjovsky_wgan_2017}, to the robust framework and opens the door for applications to generative modeling with contaminated datasets. We provide an interpretation for the maximizing potentials in \cref{sec:structure}. Some concrete examples for computing $\RWp$ are found in \cref{app:examples}.

\begin{remark}[TV as a dual norm]
\label{rem:dual-norm}
Recall that $\|\cdot\|_\tv$ is the dual norm corresponding to the Banach space of measurable functions on $\cX$ equipped with $\|\cdot\|_\infty$. An inspection of the proof of \cref{thm:RWp-dual} reveals that our penalty scales with $\|\cdot\|_\infty$ precisely for this reason.
\end{remark}

Finally, we describe an alternative dual form which ties robust OT to loss trimming---a popular practical tool for robustifying estimation algorithms when $\mu$ and $\nu$ have finite support \citep{shen19}.

\begin{restatable}[Loss trimming dual]{proposition}{losstrimming}
\label{prop:loss-trimming}
Fix $p \in [1,\infty)$ and take $\mu,\nu$ to be uniform distributions over $n$ points each. If $\eps \in [0,1]$ is a multiple of $1/n$, then 
\begin{align*}
\RWp(\mu,\nu)^p=\sup_{f \in C_b(\cX)}\left( \min_{\substack{\cA \subseteq \supp(\mu)\\|\cA| = (1 - \eps)n}} \frac{1}{|\cA|}\sum_{x \in \cA} f(x) + \min_{\substack{\cB \subseteq \supp(\nu)\\|\cB| = (1 - \eps)n}} \frac{1}{|\cB|}\sum_{y \in \cB} f^c(y) \right).
\end{align*}
\end{restatable}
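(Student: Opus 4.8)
The plan is: (i) rewrite the primal \eqref{eq:RWp} for finitely supported $\mu,\nu$ as a minimization over a product of two convex polytopes; (ii) insert Kantorovich duality \eqref{eq:Wp-duality} for the inner $\Wp^p$; (iii) exchange the order of optimization via Sion's minimax theorem; and (iv) recognize the resulting inner minimizations as linear programs over polytopes whose vertices are exactly the uniform measures on $(1-\eps)n$-point subsets.

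For step (i), write $\mu = \tfrac1n\sum_{i=1}^n\delta_{x_i}$ with distinct atoms. Any competitor $\mu'\le\mu$ with $\|\mu-\mu'\|_\tv\le\eps$ has the form $\mu'=\sum_i a_i\delta_{x_i}$ with $0\le a_i\le 1/n$ and $m := \sum_i a_i \ge 1-\eps$, so $\mu'/m$ lies in $\{b\in\R^n : b_i\ge 0,\ \sum_i b_i = 1,\ b_i\le \tfrac1{mn}\}$. This set grows as $m$ decreases, and its union over $m\in[1-\eps,1]$ equals its largest instance $Q_\mu := \{b\in\R^n : b_i\ge 0,\ \sum_i b_i=1,\ b_i\le\tfrac1{(1-\eps)n}\}$ (every $b\in Q_\mu$ is realized at $m=1-\eps$ with $a_i = (1-\eps)b_i$). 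Identifying $Q_\mu$ with the set of measures $\sum_i b_i\delta_{x_i}$ it parameterizes, and $Q_\nu$ likewise on $\supp(\nu)$, this gives $\RWp(\mu,\nu)^p = \inf_{\rho\in Q_\mu,\ \sigma\in Q_\nu}\Wp(\rho,\sigma)^p$.

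For steps (ii)--(iii), substitute $\Wp(\rho,\sigma)^p = \sup_{f\in C_b(\cX)}\int f\dd\rho + \int f^c\dd\sigma$ and apply Sion's minimax theorem to swap $\inf_{(\rho,\sigma)\in Q_\mu\times Q_\nu}$ with $\sup_{f\in C_b(\cX)}$. The hypotheses hold: $Q_\mu\times Q_\nu$ is compact and convex; for fixed $f$ the objective is affine (hence continuous and quasiconvex) in $(\rho,\sigma)$; and for fixed $(\rho,\sigma)$ it is concave in $f$, since $f\mapsto f^c$ is a pointwise infimum of maps affine in $f$, and continuous in $f$, since $f\mapsto f^c$ is nonexpansive for $\|\cdot\|_\infty$. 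Because the two inner infima then decouple, we obtain $\RWp(\mu,\nu)^p = \sup_{f\in C_b(\cX)}\bigl[\inf_{\rho\in Q_\mu}\int f\dd\rho + \inf_{\sigma\in Q_\nu}\int f^c\dd\sigma\bigr]$.

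For step (iv), $\rho\mapsto\int f\dd\rho$ is linear on the polytope $Q_\mu$, so the infimum is attained at a vertex. Here the hypothesis that $\eps$ is a multiple of $1/n$ enters: at a vertex each coordinate of $b$ equals $0$ or $\tfrac1{(1-\eps)n}$, and since $(1-\eps)n$ is an integer exactly $(1-\eps)n$ of them take the latter value, so the vertices of $Q_\mu$ are precisely the uniform measures over subsets $\cA\subseteq\supp(\mu)$ with $|\cA| = (1-\eps)n$. Hence $\inf_{\rho\in Q_\mu}\int f\dd\rho = \min_{|\cA|=(1-\eps)n}\tfrac1{|\cA|}\sum_{x\in\cA}f(x)$, and symmetrically for $f^c$ over $\cB\subseteq\supp(\nu)$, which is the claimed identity. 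I expect the minimax exchange in step (iii) to be the main obstacle: it is where the trimming minima are born, and it crucially relies on first relaxing the discrete ``delete $\eps n$ atoms'' constraint to the polytope $Q_\mu$ (the integrality of $\eps n$ being used only afterward, to keep that polytope's vertices clean), together with the regularity of the $c$-transform term required to invoke Sion's theorem.
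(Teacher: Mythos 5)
Your proof is correct and takes essentially the same route as the paper's: both insert Kantorovich duality for the inner $\Wp^p$, invoke Sion's minimax theorem to decouple the inf and sup, and then recognize the inner minimizations as linear programs over polytopes whose vertices (when $(1-\eps)n$ is an integer) are exactly the uniform measures on $(1-\eps)n$-point subsets. The only cosmetic difference is that the paper works with the unnormalized feasible set $\{0\le\mu'\le\mu,\ \mu'(\cX)=1-\eps\}$ and carries the $1/(1-\eps)$ normalization as a scalar prefactor, whereas you normalize first into the polytope $Q_\mu$ --- the same polytope up to a $1/(1-\eps)$ rescaling.
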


The inner minimization problems above clip out the $\eps n$ fraction of samples whose potential evaluations are largest. This is similar to how standard loss trimming clips out a fraction of samples that contribute most to the considered training loss.

\section{Structural Properties}
\label{sec:structure}

We turn to structural properties of $\RWp$, exploring primal and dual optimizers, regularity of $\RWp$ in $\epsilon$, as well as an alternative (near) coupling-based primal form. 

\subsection{Primal and Dual Optimizers}

We first prove that there are primal and dual optimizers satisfy certain regularity conditions.

\begin{restatable}[Existence of minimizers]{proposition}{minimizers}
\label{prop:minimizers}
For $p \in [1,\infty]$ and $\mu,\nu \in \cP(\cX)$, the infimum in \eqref{eq:RWp} is achieved, and there are minimizers $\mu' \leq \mu$ and $\nu' \leq \nu$ such that $\mu',\nu' \geq \mu \land \nu$ and $\mu'(\cX) = \nu'(\cX) = 1 - \eps$.
\end{restatable}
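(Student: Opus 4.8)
The plan is to get existence by a standard weak-$*$ compactness argument, and then to post-process an arbitrary minimizer into one with the two stated normalizations.

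\emph{Existence and reduction to mass $1-\eps$.} Since $\cX$ is compact, $\cM(\cX)\cong C(\cX)^*$ and closed balls are weak-$*$ compact. I would check that the feasible set $F=\{(\mu',\nu')\in\cM_+(\cX)^2:\mu'\le\mu,\ \nu'\le\nu,\ \mu'(\cX)\ge1-\eps,\ \nu'(\cX)\ge1-\eps\}$ is weak-$*$ closed and bounded, hence weak-$*$ compact: the constraint $0\le\mu'\le\mu$ amounts to $0\le\int f\dd\mu'\le\int f\dd\mu$ for all nonnegative $f\in C(\cX)$, and the mass bound tests against $f\equiv1$. On $F$ the masses are weak-$*$ continuous and bounded below by $1-\eps>0$ (taking $\eps<1$; $\eps=1$ is degenerate), so $(\mu',\nu')\mapsto(\mu'/\mu'(\cX),\nu'/\nu'(\cX))$ is weak-$*$ continuous into $\cP(\cX)^2$ and, composed with the (weak-$*$ lower semicontinuous) map $\Wp$, yields a lower semicontinuous objective on a compact set; hence the infimum in \eqref{eq:RWp} is attained. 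Given any minimizer $(\mu',\nu')$, replacing it by $\bigl(\tfrac{1-\eps}{\mu'(\cX)}\mu',\,\tfrac{1-\eps}{\nu'(\cX)}\nu'\bigr)$ keeps it feasible (scaling down preserves $\cdot\le\mu$ and $\cdot\le\nu$), makes both masses equal $1-\eps$, and leaves the objective unchanged, since rescaling a measure does not affect its normalization. So I may assume $\mu'(\cX)=\nu'(\cX)=1-\eps$.

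\emph{Retaining $\rho:=\mu\land\nu$.} Assume $\rho(\cX)\le1-\eps$ (otherwise $\RWp=0$, realized by $\mu'=\nu'=$ any mass-$(1-\eps)$ sub-measure of $\rho$; I would dispatch this case directly). With both optimal measures of mass $1-\eps$, I would pass to the coupling formulation $(1-\eps)\RWp(\mu,\nu)^p=\min\bigl\{\int d(x,y)^p\dd\gamma:\gamma\in\cM_+(\cX\times\cX),\ \gamma(\cX\times\cX)=1-\eps,\ \text{first marginal}\le\mu,\ \text{second marginal}\le\nu\bigr\}$, whose minimum is again attained and whose minimizers are exactly the optimal couplings of the minimizing pairs from the previous step. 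It then suffices to find an optimal $\gamma$ whose restriction to the diagonal $\Delta=\{(x,x):x\in\cX\}$ projects onto all of $\rho$, since both marginals of such a $\gamma$ dominate $\rho$ while having mass $1-\eps$, and hence are minimizers of \eqref{eq:RWp} with the claimed properties. Starting from any optimal $\gamma_0$ with marginals $\mu_0,\nu_0$ and diagonal projection $d_0\le\rho$, I would set $\gamma^\star:=\gamma_0-\gamma_{\mathrm{rem}}+\sigma$, where $\sigma$ is the image of $\rho-d_0$ under $x\mapsto(x,x)$ and $\gamma_{\mathrm{rem}}\le\gamma_0|_{\cX^2\setminus\Delta}$ is an off-diagonal piece of total mass $\rho(\cX)-d_0(\cX)$ with first marginal $\ge(\mu_0+\rho-d_0-\mu)_+$ and second marginal $\ge(\nu_0+\rho-d_0-\nu)_+$. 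Then $\gamma^\star\ge0$, has total mass $1-\eps$, has diagonal projection $d_0+(\rho-d_0)=\rho$, stays feasible by the prescribed marginals, and costs $\int d^p\dd\gamma^\star=\int d^p\dd\gamma_0-\int d^p\dd\gamma_{\mathrm{rem}}\le\int d^p\dd\gamma_0$, so it is also optimal and its marginals are the desired $\mu',\nu'$.

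\emph{The crux.} The one genuinely nontrivial point is the existence inside $\gamma_0$ of the off-diagonal sub-measure $\gamma_{\mathrm{rem}}$ with both prescribed marginal lower bounds and the prescribed total mass. The pointwise bounds needed---that each target is dominated both by $\rho-d_0$ and by the corresponding off-diagonal marginal of $\gamma_0$---reduce to $\rho\le\mu$, $\rho\le\nu$, and $d_0\le\rho$. Meeting both marginal constraints simultaneously with a single sub-measure of total mass exactly $\rho(\cX)-d_0(\cX)$ is a small transportation-polytope nonemptiness argument: cover the first target by a sub-measure of $\gamma_0|_{\cX^2\setminus\Delta}$, read off its second marginal, enlarge the sub-measure so that its second marginal also dominates the second target, and finally pad it with arbitrary off-diagonal mass up to $\rho(\cX)-d_0(\cX)$, which is permissible because this quantity is at most $1-\eps-d_0(\cX)$, the off-diagonal mass available in $\gamma_0$. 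A short separate argument also handles the degenerate regime $\rho(\cX)>1-\eps$.
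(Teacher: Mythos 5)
Your existence argument and the reduction to mass $1-\eps$ are sound (weak-$*$ compactness of the feasible set plus lower semicontinuity of $\Wp$ is a fine substitute for the paper's tightness argument, and the rescaling step matches the paper's Lemma on mass normalization). The gap is in the lower-envelope step, and it sits exactly at what you call the crux. The condition you reduce to---an optimal near-coupling whose \emph{diagonal} part projects onto all of $\rho=\mu\land\nu$---is strictly stronger than the conclusion (marginals dominating $\rho$), and it can be unachievable by \emph{every} optimal coupling; correspondingly, the sub-measure $\gamma_{\mathrm{rem}}$ with your prescribed marginal lower bounds and total mass at most $m=\rho(\cX)-d_0(\cX)$ need not exist. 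Concretely, take $\cX=\{-1,0,1\}\subset\R$, $p=2$, $\mu=\tfrac12\delta_{-1}+\tfrac12\delta_0$, $\nu=\tfrac12\delta_0+\tfrac12\delta_1$, $\eps=\tfrac14$. The unique optimal pair is $\mu'=\tfrac14\delta_{-1}+\tfrac12\delta_0$, $\nu'=\tfrac12\delta_0+\tfrac14\delta_1$ (which does satisfy $\mu',\nu'\geq\rho=\tfrac12\delta_0$), and its unique optimal coupling is $\gamma_0=\tfrac14\delta_{(-1,0)}+\tfrac14\delta_{(0,0)}+\tfrac14\delta_{(0,1)}$, so $d_0=\tfrac14\delta_0$ and $m=\tfrac14$. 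Here $a=(\mu_0+\rho-d_0-\mu)_+=\tfrac14\delta_0$ and $b=(\nu_0+\rho-d_0-\nu)_+=\tfrac14\delta_0$; any off-diagonal sub-measure of $\gamma_0$ with first marginal $\geq a$ must contain all of $\tfrac14\delta_{(0,1)}$, and with second marginal $\geq b$ must contain all of $\tfrac14\delta_{(-1,0)}$, forcing mass $\tfrac12>m$. The ``cover $a$, enlarge to cover $b$, pad'' argument silently assumes the two coverings can share mass, which fails here because the pieces of $\gamma_0$ sending mass out of $0$ and into $0$ are disjoint. The underlying phenomenon is that for $p>1$ a point of $\supp(\rho)$ can simultaneously send and receive off-diagonal mass in the optimal coupling (a crossing that is \emph{strictly} cheaper than short-circuiting through the diagonal when the cost is strictly convex), so the marginals dominate $\rho$ at that point even though no optimal coupling places the full $\rho$-mass on the diagonal there.

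To repair this you must argue about the marginals directly rather than about the diagonal of the coupling. That is what the paper does: it shows that if a minimizer's marginal satisfies $\mu'(x_0)<\mu(x_0)\land\nu(x_0)$ at some point, then (after first returning any mass that can be restored to both measures at $x_0$ for free) some strictly positive mass is transported into $x_0$ from elsewhere, and rerouting a suitable portion of that incoming mass to originate at $x_0$ itself strictly decreases the cost while preserving feasibility---contradicting optimality. This is carried out for countable $\cX$ and then extended to general $\cX$ by a discretization/$\Gamma$-limit-style approximation lemma. Your first paragraph can be kept as is, but the second and third need to be replaced by an exchange argument of this local type (or some other argument that never insists on diagonal mass equal to $\rho$).
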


\begin{figure}[b]
 \centering
 \scalebox{0.7}
 {\includegraphics{./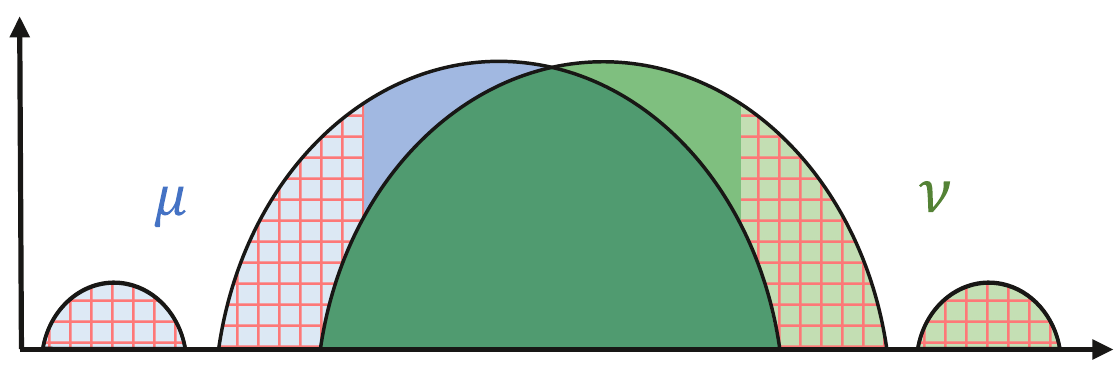}}
 \caption{The gridded light blue and green regions each have mass $\eps$, respectively, and are removed to obtain optimal $\mu'$ and $\nu'$ for $\RWone$. No mass need be removed from the dark region designating $\mu \land \nu$.} \label{fig:primal-perturbations}
\end{figure}

We remark that the lower envelope of $\mu \land \nu$, illustrated in \cref{fig:primal-perturbations}, is straightforward when $p=1$, since $\Wone(\mu,\nu)$ is a function of $\mu - \nu$. However, this conclusion is not obvious for $p > 1$, and its proof in \cref{prf:minimizers-existence} utilizes a discretization argument. For achieving the infimum, we show for $p < \infty$ that the constraint set is compact w.r.t.\ the classic Wasserstein topology, while the objective is clearly continuous in $\Wp$. For $p = \infty$, we observe that the constraint set is compact and the objective is lower semicontinuous w.r.t.\ the topology of weak convergence over $\cP_\infty(\cX)$.

\begin{restatable}[Interpreting maximizers]{proposition}{RWpdualmaximizers}
\label{prop:RWp-dual-maximizers}
If $f \in C_b(\cX)$ maximizes \eqref{eq:RWp-dual}, then any $\mu',\nu'\in\cM_+(\cX)$ minimizing \eqref{eq:RWp} satisfy $\supp(\mu-\mu') \subseteq \argmax(f)$ and $\supp(\nu-\nu') \subseteq \argmin(f)$.
\end{restatable}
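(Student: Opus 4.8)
The plan is to read off the two containments from complementary slackness between the primal \eqref{eq:RWp} and the dual \eqref{eq:RWp-dual}, by producing a chain of inequalities that is forced to collapse to equalities. We may assume $0<\eps<1$ (for $\eps=0$ both left sides are empty; for $\eps=1$ one centers $f$ and notes that $\int f\,\dd\mu+\int f^c\,\dd\nu=2\|f\|_\infty$ together with $\int f\,\dd\mu,\int f^c\,\dd\nu\le\|f\|_\infty$ forces $\mu,\nu$ themselves onto $\argmax(f),\argmin(f)$). Fix a minimizer $\mu',\nu'$ of \eqref{eq:RWp} and write $\eta_\mu\coloneqq\mu-\mu'\ge 0$, $\eta_\nu\coloneqq\nu-\nu'\ge 0$, with masses $\eps_\mu\coloneqq\eta_\mu(\cX)\le\eps$, $\eps_\nu\coloneqq\eta_\nu(\cX)\le\eps$; put $\bar\mu'\coloneqq\mu'/(1-\eps_\mu)$, $\bar\nu'\coloneqq\nu'/(1-\eps_\nu)$, so that $\RWp(\mu,\nu)^p=\Wp(\bar\mu',\bar\nu')^p$. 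Given a maximizer $f$ of \eqref{eq:RWp-dual}, replacing it by $f-\tfrac12(\sup_\cX f+\inf_\cX f)$ preserves $\argmax(f)$, $\argmin(f)$, and maximality (the range-penalized form is shift-invariant, and the two forms in \eqref{eq:RWp-dual} agree at a centered function), so we may assume $\sup_\cX f=-\inf_\cX f=\|f\|_\infty$; if $\|f\|_\infty=0$ the claim is trivial, so assume $\|f\|_\infty>0$.

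Two ingredients drive the argument. First, weak Kantorovich duality for $\Wp(\bar\mu',\bar\nu')$ applied to the pair $(f,f^c)$, multiplied through by $1-\eps>0$, gives
\[
(1-\eps)\,\RWp(\mu,\nu)^p \;\ge\; a\int f\,\dd\mu' + b\int f^c\,\dd\nu',
\]
where $a\coloneqq(1-\eps)/(1-\eps_\mu)$ and $b\coloneqq(1-\eps)/(1-\eps_\nu)$ both lie in $(0,1]$. Second, maximality of the centered $f$ together with \cref{thm:RWp-dual} gives $(1-\eps)\RWp(\mu,\nu)^p=\int f\,\dd\mu+\int f^c\,\dd\nu-2\eps\|f\|_\infty$. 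Subtracting the first relation from the second and using $\mu'=\mu-\eta_\mu$, $\nu'=\nu-\eta_\nu$ yields
\[
a\int f\,\dd\eta_\mu + b\int f^c\,\dd\eta_\nu \;\ge\; 2\eps\|f\|_\infty - (1-a)\int f\,\dd\mu - (1-b)\int f^c\,\dd\nu .
\]

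Now I would sandwich both sides by multiples of $\|f\|_\infty$. Using $f\le\|f\|_\infty$ and $f^c\le-f\le\|f\|_\infty$ (the latter valid because $f$ is centered), the left side is at most $(a\eps_\mu+b\eps_\nu)\|f\|_\infty$; using $\int f\,\dd\mu,\int f^c\,\dd\nu\le\|f\|_\infty$ and $1-a,1-b\ge 0$, the right side is at least $(2\eps-2+a+b)\|f\|_\infty$. Since $a\eps_\mu=(1-\eps)\eps_\mu/(1-\eps_\mu)=a-(1-\eps)$ and symmetrically $b\eps_\nu=b-(1-\eps)$, one has $a\eps_\mu+b\eps_\nu=2\eps-2+a+b$, so the two bounds coincide and every inequality in the chain is an equality. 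In particular, as $a,b>0$ and $\int f\,\dd\eta_\mu\le\eps_\mu\|f\|_\infty$, $\int f^c\,\dd\eta_\nu\le\eps_\nu\|f\|_\infty$ hold separately, each of these must be an equality, i.e.\ $\int(\|f\|_\infty-f)\,\dd\eta_\mu=0$ and $\int(\|f\|_\infty-f^c)\,\dd\eta_\nu=0$, both with nonnegative integrands. Hence $\eta_\mu$ is carried by the closed set $\{f=\sup_\cX f\}=\argmax(f)$, and $\eta_\nu$ by $\{f^c=\|f\|_\infty\}$; and $f^c(y)=\|f\|_\infty$ together with $f^c\le-f$ forces $f(y)\le-\|f\|_\infty=\inf_\cX f$, so $\{f^c=\|f\|_\infty\}\subseteq\argmin(f)$. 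Since $\eta_\mu=\mu-\mu'$ and $\eta_\nu=\nu-\nu'$, this is precisely $\supp(\mu-\mu')\subseteq\argmax(f)$ and $\supp(\nu-\nu')\subseteq\argmin(f)$.

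The main obstacle I anticipate is recognizing that scaling the Kantorovich bound by $1-\eps$—rather than leaving the normalizations $1/(1-\eps_\mu)$ and $1/(1-\eps_\nu)$ in place—produces exactly the cancellation $a\eps_\mu+b\eps_\nu=2\eps-2+a+b$ that makes the sandwich tight. This is what lets the argument cover the genuinely possible unbalanced case $\eps_\mu\ne\eps_\nu$ (which arises, e.g., when $\mu$ is a single atom, so that no mass can be removed from it); a reduction to equal removed masses is not available. The only remaining facts, $f^c\le\|f\|_\infty$ and $\{f^c=\|f\|_\infty\}\subseteq\argmin(f)$ for centered $f$, are both immediate from $f^c\le-f$.
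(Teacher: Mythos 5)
Your proof is correct, and it reaches the conclusion by a somewhat different route than the paper. The paper's argument goes through the mass-addition formulation: it observes that the optimal removed masses $\alpha=\mu-\mu'$, $\beta=\nu-\nu'$ and the optimal pair $(f,f^c)$ form a saddle point of the minimax problem from the proof of \cref{thm:RWp-dual}, so that $\int f\,\dd\beta+\int f^c\,\dd\alpha$ must equal its unconstrained lower bound $\eps\inf f-\eps\sup f$, forcing the support containments. That argument implicitly takes $\alpha(\cX)=\beta(\cX)=\eps$ exactly (which is needed for $\mu+\beta,\nu+\alpha$ to be feasible for the mass-addition problem). You instead run complementary slackness directly on the mass-removal primal: weak Kantorovich duality for $\Wp(\bar\mu',\bar\nu')$ with the pair $(f,f^c)$, combined with the attained dual value, and the scaling by $1-\eps$ that makes the identity $a\eps_\mu+b\eps_\nu=2\eps-2+a+b$ collapse the sandwich. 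The payoff of your version is that it covers, without any reduction, minimizers that remove unequal or strictly less than $\eps$ mass from the two marginals, which the statement's ``any $\mu',\nu'$'' technically demands; the cost is a slightly longer computation in place of the one-line appeal to the minimax equilibrium. Two cosmetic points: the bound $-f\le\|f\|_\infty$ holds for any $f$ (centering is only needed so that $\{f=\|f\|_\infty\}=\argmax(f)$ and $\{f=-\|f\|_\infty\}=\argmin(f)$); and in passing from $\int(\|f\|_\infty-f^c)\,\dd\eta_\nu=0$ to a support statement you should note that $\{f^c<\|f\|_\infty\}$ is open (by upper semicontinuity of $f^c$; on compact $\cX$ it is in fact continuous), so a null open set is disjoint from $\supp(\eta_\nu)$.
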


{
\floatsetup[figure]{capposition={bottom}}
\begin{figure}[t!]
\begin{center}
\includegraphics[width=0.45\linewidth]{./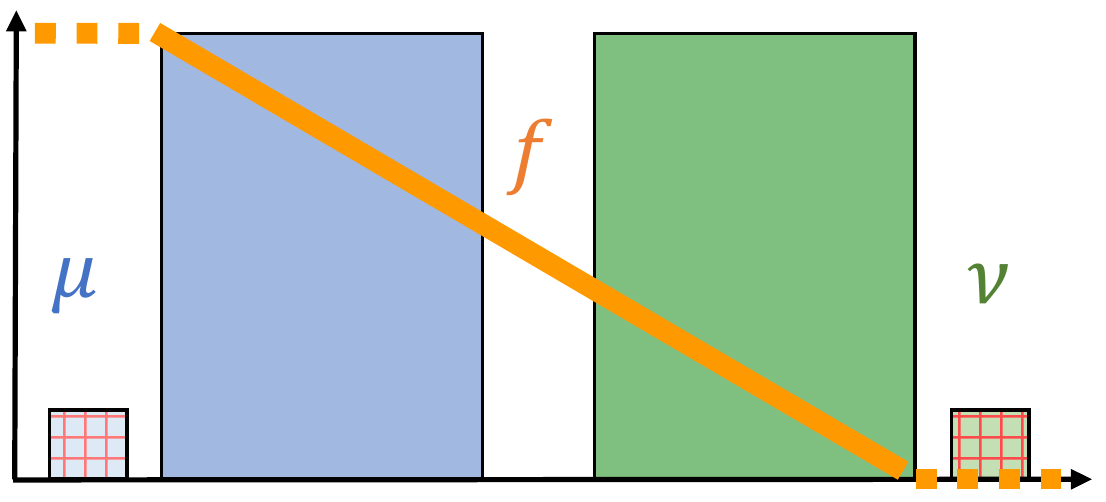}
\:\:\:
\includegraphics[width=0.45\linewidth]{./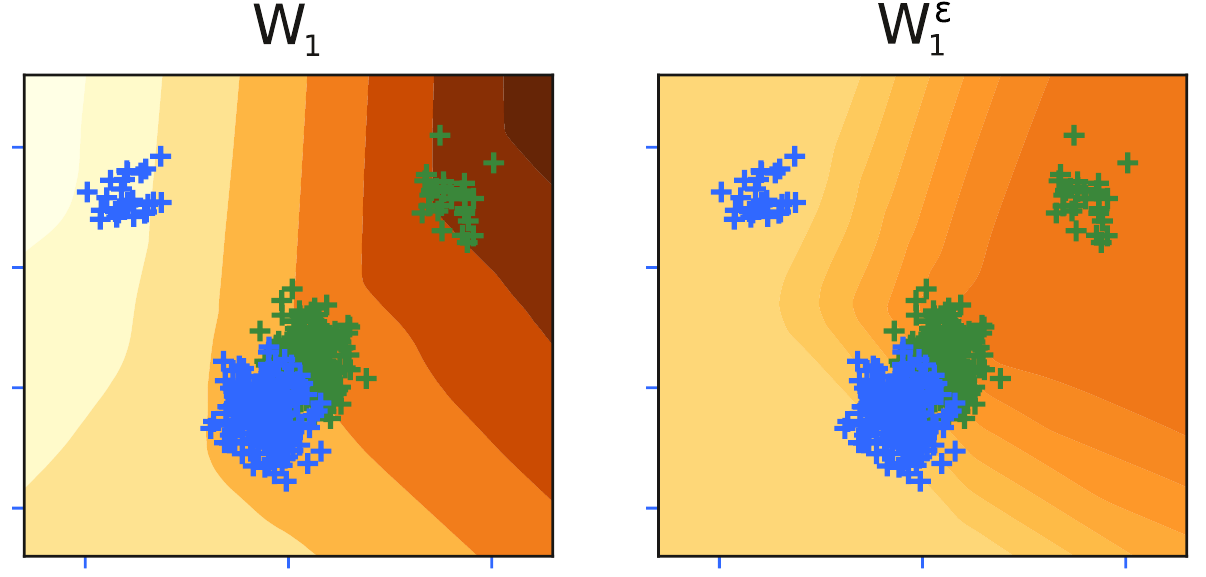}
\end{center}
\vspace{-5mm}
\caption{Optimal potentials: (left) 1D densities plotted with their optimal potential for the $\RWone$ dual problem; (right) contour plots for optimal dual potentials to $\Wone$ and $\RWone$ between 2D Gaussian mixtures. Observe how optimal potentials for the robust dual are flat over outlier mass.}\label{fig:optimal-potentials}
\end{figure}
}

Thus, the level sets of the dual potential encode the location of outliers in the original measures, as depicted in \cref{fig:optimal-potentials}. In fact, optimal perturbations $\mu-\mu'$ and $\nu-\nu'$ are sometimes determined exactly by an optimal potential~$f$, often taking the form $\mu|_{\argmax(f)}$ and $\nu|_{\argmax(f)}$ (though not always; we discuss this in \cref{prf:RWp-dual-maximizers} along with the proof). %

\subsection{Regularity in Robustness Radius}

We examine how $\RWp$ depends on the robustness~radius.

\begin{restatable}[Dependence on $\eps$]{proposition}{RWpepsdependence}
\label{prop:RWp-eps-dependence}
For any $p\in[1,\infty]$, $0\leq\eps\leq\eps'\leq1$, and $\mu,\nu \in \cP(\cX)$, we have 
\begin{enumerate}[(i)]
    \item $\Wp^{\|\mu-\nu\|_\tv/2}(\mu,\nu) = 0$, $\mathsf{W}_p^0(\mu,\nu) = \mathsf{W}_p(\mu,\nu)$;
    \item $\Wp^{\eps'}(\mu,\nu)\leq \Wp^{\eps}(\mu,\nu)$;
    \item $\Wp^{\eps}(\mu,\nu)\leq\left(\frac{1-\eps'}{1-\eps}\right)^{\frac{1}{p}}\Wp^{\eps'}(\mu,\nu)+ 4\diam(\cX)\left(\frac{\eps' - \eps}{1-\eps}\right)^{\frac{1}{p}}.$
\end{enumerate}
\end{restatable}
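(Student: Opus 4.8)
The plan is to read (i) and (ii) off the definition \eqref{eq:RWp} directly, and to reduce (iii) to a mixture (``gluing'') bound for $\Wp^p$ via a mass add-back construction built on the structured minimizers of \cref{prop:minimizers}; throughout I use that $\cX$ is compact, so $\diam(\cX)<\infty$ and optimal couplings exist. For (i): $\eps=0$ forces $\mu'=\mu$, $\nu'=\nu$, giving $\Wp^0(\mu,\nu)=\Wp(\mu,\nu)$; for the other identity I set $m\coloneqq(\mu\land\nu)(\cX)$, note that $\mu-(\mu\land\nu)$ and $\nu-(\mu\land\nu)$ are mutually singular nonnegative measures of mass $1-m$ each (so $\|\mu-\nu\|_\tv=2(1-m)$), and take $\mu'=\nu'=\mu\land\nu$, which is feasible at radius $1-m=\|\mu-\nu\|_\tv/2$ (both TV constraints hold with equality) and makes the normalized arguments coincide, so the nonnegative objective attains $0$ (the degenerate case $m=0$, which forces $\eps=1$, being covered by the convention that the objective vanishes once a normalized argument is null). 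Part (ii) is immediate: for $\eps\le\eps'$ the feasible set of $\Wp^\eps(\mu,\nu)$ sits inside that of $\Wp^{\eps'}(\mu,\nu)$ — only the constraints $\|\mu-\mu'\|_\tv\le\eps$, $\|\nu-\nu'\|_\tv\le\eps$ change, and they only loosen — while the objective is unchanged, so passing to the superset can only decrease the infimum.

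The real content is (iii). Assume $0\le\eps<\eps'<1$ and $p<\infty$ (the cases $\eps=\eps'$, $\eps'=1$, $p=\infty$ being trivial or obtained by letting $p\to\infty$). I would invoke \cref{prop:minimizers} at radius $\eps'$ to fix minimizers $\mu''\le\mu$, $\nu''\le\nu$ with $\mu''(\cX)=\nu''(\cX)=1-\eps'$ and $\Wp\big(\mu''/(1-\eps'),\nu''/(1-\eps')\big)=\Wp^{\eps'}(\mu,\nu)$. The key observation is that the discarded masses $\mu-\mu''\ge0$ and $\nu-\nu''\ge0$ each have total mass $\eps'>\eps$, so one can add back an $(\eps'-\eps)$-fraction of each: set $\sigma_\mu\coloneqq\tfrac{\eps'-\eps}{\eps'}(\mu-\mu'')$, $\sigma_\nu\coloneqq\tfrac{\eps'-\eps}{\eps'}(\nu-\nu'')$ (each of mass $\eps'-\eps$), and $\mu'\coloneqq\mu''+\sigma_\mu$, $\nu'\coloneqq\nu''+\sigma_\nu$. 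Then $\mu'\le\mu''+(\mu-\mu'')=\mu$ with $\mu'(\cX)=1-\eps$, so $\|\mu-\mu'\|_\tv=\eps$ (likewise for $\nu'$), i.e.\ $(\mu',\nu')$ is feasible at radius $\eps$.

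To conclude, I write the normalized optimizer as a convex combination of probability measures,
\[
\frac{\mu'}{1-\eps}=\lambda_1\frac{\mu''}{1-\eps'}+\lambda_2\frac{\sigma_\mu}{\eps'-\eps},\qquad
\frac{\nu'}{1-\eps}=\lambda_1\frac{\nu''}{1-\eps'}+\lambda_2\frac{\sigma_\nu}{\eps'-\eps},
\]
with $\lambda_1=\tfrac{1-\eps'}{1-\eps}$, $\lambda_2=\tfrac{\eps'-\eps}{1-\eps}$, and glue optimal couplings of the two component pairs to obtain the standard convexity bound $\Wp(\lambda_1\alpha_1+\lambda_2\alpha_2,\lambda_1\beta_1+\lambda_2\beta_2)^p\le\lambda_1\Wp(\alpha_1,\beta_1)^p+\lambda_2\Wp(\alpha_2,\beta_2)^p$. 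Combining this with feasibility ($\Wp^\eps(\mu,\nu)\le\Wp\big(\mu'/(1-\eps),\nu'/(1-\eps)\big)$) and the trivial bound $\Wp\big(\sigma_\mu/(\eps'-\eps),\sigma_\nu/(\eps'-\eps)\big)\le\diam(\cX)$ gives
\[
\Wp^\eps(\mu,\nu)^p\le\frac{1-\eps'}{1-\eps}\,\Wp^{\eps'}(\mu,\nu)^p+\frac{\eps'-\eps}{1-\eps}\,\diam(\cX)^p,
\]
and a $p$-th root together with subadditivity of $t\mapsto t^{1/p}$ ($p\ge1$) yields the claimed inequality (indeed with constant $1$, hence a fortiori $4$, in front of $\diam(\cX)$).

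I expect the main obstacle to be conceptual rather than technical: recognizing that the right reduction for (iii) is to compare against the $\eps'$-optimizers with mass \emph{added back} — which meshes cleanly with renormalization and a convexity bound — rather than perturbing the normalized $\eps'$-optimizers directly (which interacts awkwardly with the renormalization). The only point needing real care is verifying that the add-back simultaneously respects the setwise constraint $\mu'\le\mu$ and produces total mass exactly $1-\eps$; both are transparent once $\sigma_\mu$ is chosen as a sub-measure of $\mu-\mu''$. One should also confirm that \cref{prop:minimizers} is available at this point in the development so that the structured minimizers with $\mu''(\cX)=\nu''(\cX)=1-\eps'$ may be used; failing that, one works with near-minimizers and a limiting argument, at the price of tracking unequal masses.
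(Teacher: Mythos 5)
Your proof is correct, and for the substantive part (iii) it takes a genuinely different route from the paper. Parts (i) agree (both use $\mu'=\nu'=\mu\land\nu$), and for (ii) your observation that the feasible set of \eqref{eq:RWp} is nested in $\eps$ while the objective is $\eps$-independent is a valid and more elementary argument than the paper's, which instead restricts an optimal coupling to its cheapest sub-mass. For (iii), the paper starts from measures feasible at radius $\eps'$, perturbs each marginal within TV distance $\eps'-\eps$ to reach feasibility at radius $\eps$, and controls the resulting change via the triangle inequality together with the bound $\Wp(\alpha,\beta)\le 2\diam(\cX)\|\alpha-\beta\|_\tv^{1/p}$ from \citet[Theorem 6.13]{villani2009}; the two applications of that bound are the source of the constant $4$. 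You instead add back a proportional share $\tfrac{\eps'-\eps}{\eps'}(\mu-\mu'')$ of the discarded mass, express the renormalized measures as convex combinations, and use the mixture-of-couplings convexity of $\Wp^p$ plus the trivial bound $\Wp\le\diam(\cX)$ on the added pieces. This is self-contained (no external Wasserstein--TV estimate), yields the inequality with constant $1$ rather than $4$, and is not circular: \cref{prop:minimizers} is proved independently of this proposition, and as you note one could substitute near-minimizers in any case. The trade-off is that your argument leans on the structured minimizers with mass exactly $1-\eps'$, whereas the paper's TV-perturbation argument applies to arbitrary feasible pairs.
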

The proof is given in \cref{prf:RWp-eps-dependence}. More precise, diameter independent, bounds in the form of (iv) are provided in the proofs of the robustness results from \cref{sec:robust}, but these require $\mu$ and $\nu$ to satisfy certain moment bounds that do not hold in general.

\subsection{Alternative Primal Form}
    
Mirroring the primal Kantorovich problem for classic $\Wp$, we derive in \cref{prf:RWp-coupling-primal} an alternative primal form for $\RWp$ in terms of (near) couplings for $\mu$ and $\nu$.

\begin{restatable}[Alternative primal form]{proposition}{RWpcouplingprimal}
\label{prop:RWp-coupling-primal}
For any $p \in [1,\infty]$ and $\mu,\nu \in \cP(\cX)$, we have
\begin{equation}
\label{eq:RWp-coupling-primal}
    \RWp(\mu,\nu) = \inf_{\substack{\pi \in \cP(\cX \times \cX)\\ \mu \in \cB_\eps(\pi_1), \, \nu \in \cB_\eps(\pi_2)}} \|d\|_{L^p(\pi)},
\end{equation}
where $\pi_1$ and $\pi_2$ are the respective marginals of $\pi$.
\end{restatable}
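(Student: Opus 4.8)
The plan is to prove both inequalities between $\RWp(\mu,\nu)$ and the infimum on the right-hand side of \eqref{eq:RWp-coupling-primal} directly from the definition \eqref{eq:RWp}, with no appeal to duality. The guiding observation is that a near-coupling $\pi$ packages exactly the data of a feasible point for \eqref{eq:RWp}: its marginals $\pi_1,\pi_2$ play the role of the renormalized retained masses $\mu'/\mu'(\cX)$ and $\nu'/\nu'(\cX)$, while $\pi$ itself serves as a coupling between them. The one elementary tool I will use repeatedly is the nesting of Huber balls, $\cB_a(\rho)\subseteq\cB_\eps(\rho)$ whenever $0\le a\le\eps$ and $\rho\in\cP(\cX)$: if $\rho''=(1-a)\rho+a\beta$ with $\beta\in\cP(\cX)$, then $\rho''=(1-\eps)\rho+\eps\beta'$ with $\beta':=\tfrac1\eps\big[(\eps-a)\rho+a\beta\big]\in\cP(\cX)$. (We may assume $0<\eps<1$, the extreme cases being straightforward.)

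For ``$\le$'', I would fix any $\pi\in\cP(\cX\times\cX)$ with $\mu\in\cB_\eps(\pi_1)$ and $\nu\in\cB_\eps(\pi_2)$, say $\mu=(1-\eps)\pi_1+\eps\alpha$ with $\alpha\in\cP(\cX)$. Setting $\mu':=(1-\eps)\pi_1$ gives $0\le\mu'\le\mu$, $\|\mu-\mu'\|_\tv=\eps$, $\mu'(\cX)=1-\eps$, and $\mu'/\mu'(\cX)=\pi_1$; define $\nu'$ from $\pi_2$ in the same way. Then $(\mu',\nu')$ is feasible in \eqref{eq:RWp}, so $\RWp(\mu,\nu)\le\Wp(\pi_1,\pi_2)$, and since $\pi\in\Pi(\pi_1,\pi_2)$ we have $\Wp(\pi_1,\pi_2)\le\|d\|_{L^p(\pi)}$ (with the obvious $L^\infty$ analogue when $p=\infty$). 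Taking the infimum over admissible $\pi$ yields $\RWp(\mu,\nu)\le\inf\{\|d\|_{L^p(\pi)}:\dots\}$.

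For ``$\ge$'', I would fix any feasible $(\mu',\nu')$ in \eqref{eq:RWp}, set $a:=\|\mu-\mu'\|_\tv\le\eps$ and $b:=\|\nu-\nu'\|_\tv\le\eps$ so that $\mu'(\cX)=1-a$ and $\nu'(\cX)=1-b$, and put $\bar\mu:=\mu'/(1-a)$, $\bar\nu:=\nu'/(1-b)$. Since $\mu-\mu'\ge0$ has total mass $a$, we have $\mu=(1-a)\bar\mu+a\cdot\tfrac{\mu-\mu'}{a}\in\cB_a(\bar\mu)\subseteq\cB_\eps(\bar\mu)$, and similarly $\nu\in\cB_\eps(\bar\nu)$. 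Taking $\pi$ to be an ($\eta$-)optimal coupling of $\bar\mu$ and $\bar\nu$ — the approximate version covering $p=\infty$ and obviating any appeal to existence — we get $\pi_1=\bar\mu$, $\pi_2=\bar\nu$, so $\pi$ is admissible for the right-hand side, with $\|d\|_{L^p(\pi)}\le\Wp(\bar\mu,\bar\nu)+\eta=\Wp\big(\mu'/\mu'(\cX),\nu'/\nu'(\cX)\big)+\eta$. Letting $\eta\to0$ and then infimizing over $(\mu',\nu')$ gives $\inf\{\|d\|_{L^p(\pi)}:\dots\}\le\RWp(\mu,\nu)$, which completes the proof.

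I do not expect a genuine obstacle: the statement is essentially a repackaging of the definition once one reads a near-coupling as (retained masses $+$ transport plan). The only places demanding attention will be the mass bookkeeping in the ``$\ge$'' direction — padding the retained mass fractions $1-a$ and $1-b$ up to $1-\eps$ via the Huber-ball nesting — and treating $p=\infty$ uniformly, which the approximate-coupling device handles cleanly.
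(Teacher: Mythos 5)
Your proposal is correct and follows essentially the same route as the paper: the paper first reformulates \eqref{eq:RWp} as $\inf\{\Wp(\mu',\nu'): \mu\in\cB_\eps(\mu'),\,\nu\in\cB_\eps(\nu')\}$ (its Lemma on the conservative estimate, whose content — rescaling retained mass to exactly $1-\eps$ and the equivalence $\mu\in\cB_\eps(\mu')\iff(1-\eps)\mu'\le\mu$ — you re-derive inline) and then eliminates the auxiliary marginals in favor of $\pi$. Your two-inequality write-up with the Huber-ball nesting and the $\eta$-optimal coupling is just a more explicit rendering of the same identification.
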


\begin{remark}[Data privacy]
From this, we deduce that $\RWinfty(\mu,\nu) \leq M$ if and only if there exists a coupling $(X,Y)$ for $\mu,\nu$ such that $|X-Y| \leq M$ with probability at least $1 - \Theta(\eps)$.
In \cref{app:privacy}, we state this more precisely and leverage this fact for an application to data privacy. Specifically, within the Pufferfish privacy framework, the so-called Wasserstein Mechanism \citep{song2017} maximizes $\Winfty$ over certain pairs of distributions to provide a strong privacy guarantee. By substituting $\Winfty$ with $\RWinfty$, we reach an alternative mechanism that satisfies a slightly relaxed guarantee and can introduce significantly less noise.
\end{remark}

\cref{prop:RWp-eps-dependence} implies that $\lim_{\eps\to 0} \RWp =\Wp$, posing $\RWp$ as a natural extension of $\Wp$. Given the representation in \cref{prop:RWp-coupling-primal}, we now ask whether convergence of optimal (near) couplings also holds. A proof in \cref{prf:coupling-delta-dependence} provides an affirmative answer via a $\Gamma$-convergence~argument.

\begin{restatable}[Convergence of couplings]{proposition}{couplingdeltadependence}
\label{prop:coupling-delta-dependence}
Fix $p \in [1,\infty]$ and $\mu,\nu \in \cP(\cX)$. If $\eps_n \searrow 0$ and $\pi_n \in \cP(\cX \times \cX)$ is optimal for $\Wp^{\eps_n}(\mu,\nu)$ via \eqref{eq:RWp-coupling-primal}, for each $n \in \N$, then $\{\pi_n\}_{n \in \N}$ admits a subsequence converging weakly to an optimal coupling for $\Wp(\mu,\nu)$.
\end{restatable}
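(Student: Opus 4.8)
The plan is to establish the result via the $\Gamma$-convergence machinery, treating $\{\Wp^{\eps_n}(\mu,\nu)\}$ as a sequence of functionals on $\cP(\cX \times \cX)$ and exploiting compactness of the ambient space (recall $\cX$ is assumed compact in this section, so $\cP(\cX \times \cX)$ is weakly compact). Concretely, for each $n$ define $F_n : \cP(\cX\times\cX) \to [0,\infty]$ by $F_n(\pi) = \|d\|_{L^p(\pi)}$ if $\mu \in \cB_{\eps_n}(\pi_1)$ and $\nu \in \cB_{\eps_n}(\pi_2)$, and $F_n(\pi) = +\infty$ otherwise; and let $F_\infty(\pi) = \|d\|_{L^p(\pi)}$ if $\pi \in \Pi(\mu,\nu)$ and $+\infty$ otherwise. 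By \cref{prop:RWp-coupling-primal}, $\min F_n = \Wp^{\eps_n}(\mu,\nu)$ and $\min F_\infty = \Wp(\mu,\nu)$. The strategy is to show $F_n \xrightarrow{\Gamma} F_\infty$ in the weak topology, which together with equi-coercivity (automatic here since the domain is weakly compact and the functionals are uniformly bounded on their domains by $\diam(\cX)$) yields that any sequence of minimizers $\pi_n$ of $F_n$ has a weak subsequential limit $\pi^\star$ that minimizes $F_\infty$ — exactly the claim.

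For the two $\Gamma$-convergence conditions: the \textbf{liminf inequality} asks that whenever $\pi_n \rightharpoonup \pi$ we have $\liminf_n F_n(\pi_n) \geq F_\infty(\pi)$. If the liminf is $+\infty$ there is nothing to prove, so pass to a subsequence with $F_n(\pi_n) \to \liminf$ finite; then $\mu \in \cB_{\eps_n}(\pi_{n,1})$ forces $\|\mu - \pi_{n,1}\|_\tv \leq 2\eps_n \to 0$ (since membership in the Huber ball means $\pi_{n,1} = (1-\eps_n)^{-1}(\mu - \eps_n \alpha_n)$ for some $\alpha_n \in \cP(\cX)$... — more carefully, $\mu \in \cB_{\eps_n}(\pi_{n,1})$ means $\mu = (1-\eps_n)\pi_{n,1} + \eps_n\beta_n$, so $\|\mu - \pi_{n,1}\|_\tv = \eps_n\|\beta_n - \pi_{n,1}\|_\tv \leq 2\eps_n$), hence $\pi_{n,1} \rightharpoonup \pi_1$ combined with this TV bound gives $\pi_1 = \mu$, and similarly $\pi_2 = \nu$, so $\pi \in \Pi(\mu,\nu)$; then weak lower semicontinuity of $\pi \mapsto \int d^p \dd\pi$ (here continuity of $d$ and its boundedness on compact $\cX$ make $\int d^p\dd\pi$ weakly continuous, so this is immediate) gives $F_\infty(\pi) \leq \liminf F_n(\pi_n)$. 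The \textbf{recovery sequence} condition asks, for each $\pi \in \cP(\cX\times\cX)$, for $\pi_n \rightharpoonup \pi$ with $\limsup_n F_n(\pi_n) \leq F_\infty(\pi)$; when $F_\infty(\pi) = +\infty$ any constant sequence works, and when $\pi \in \Pi(\mu,\nu)$ the constant sequence $\pi_n \equiv \pi$ already satisfies $\mu = \pi_1 \in \cB_{\eps_n}(\pi_1)$ and $\nu = \pi_2 \in \cB_{\eps_n}(\pi_2)$ (since $\mu \in \cB_\delta(\mu)$ for every $\delta$), so $F_n(\pi) = \|d\|_{L^p(\pi)} = F_\infty(\pi)$ and the limsup condition holds with equality.

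The only genuinely delicate point is the case $p = \infty$, where $F_n(\pi) = \|d\|_{L^\infty(\pi)}$ is merely weakly lower semicontinuous rather than continuous, so the liminf inequality needs the standard fact that $\pi \mapsto \|d\|_{L^\infty(\pi)} = \inf\{ t : \pi(\{d > t\}) = 0\}$ is weakly l.s.c. (which follows from the Portmanteau theorem applied to the open sets $\{d > t\}$); the recovery sequence argument is unaffected since it uses the constant sequence. Once $\Gamma$-convergence is in hand, the conclusion is the textbook corollary: take any minimizers $\pi_n$ of $F_n$; by weak compactness extract $\pi_{n_k} \rightharpoonup \pi^\star$; then $F_\infty(\pi^\star) \leq \liminf_k F_{n_k}(\pi_{n_k}) = \liminf_k \Wp^{\eps_{n_k}}(\mu,\nu)$, and by \cref{prop:RWp-eps-dependence}(ii)–(iii) (monotonicity and the continuity bound in $\eps$) we have $\Wp^{\eps_{n_k}}(\mu,\nu) \to \Wp(\mu,\nu) = \min F_\infty$, so $\pi^\star$ is optimal for $\Wp(\mu,\nu)$ — equivalently, $\pi^\star \in \Pi(\mu,\nu)$ attaining $\Wp(\mu,\nu)$. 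I expect the main obstacle to be nothing deep, but rather carefully handling the bookkeeping in the $p=\infty$ lower-semicontinuity step and making sure the Huber-ball membership is correctly translated into a vanishing TV (equivalently, weak) perturbation of the marginals; everything else is a direct application of the abstract $\Gamma$-convergence–plus–equi-coercivity theorem.
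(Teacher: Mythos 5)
Your proposal is correct and follows essentially the same route as the paper: both define the identical constrained functionals $F_n$, $F_\infty$ on $\cP(\cX\times\cX)$, prove $\Gamma$-convergence with the constant recovery sequence, and conclude via compactness that subsequential limits of minimizers are optimal couplings. The only (immaterial) differences are bookkeeping: you identify the limiting marginals via the $2\eps_n$ TV bound and handle $p=\infty$ by Portmanteau on the sets $\{d>t\}$, whereas the paper uses boundedness of the marginals by $\mu/(1-\eps)$, $\nu/(1-\eps)$ and continuity of $L^p$ norms in $p$.
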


Finally, we consider a case of practical importance: the discrete setting where $\mu$ and $\nu$ have finite supports. Like classic OT, computing $\RWp(\mu,\nu)$ between discrete measures amounts to a linear program for $p < \infty$ and can be solved in polynomial time. The proof in \cref{prf:LP} starts from the alternative primal form of \cref{prop:RWp-coupling-primal} and analyzes the feasible polytope when the support sizes are equal.

\begin{restatable}[Finite support]{proposition}{lp}
\label{prop:lp}
Let $\mu$ and $\nu$ be uniform discrete measures over $n$ points each. Then there exist optimal $\mu',\nu'$ for $\RWp(\mu,\nu)$ such that $\mu'$ and $\nu'$ each give mass $1/n$ to $\lfloor (1 - \eps) n \rfloor$ points and assign their remaining $\lceil \eps n \rceil / n - \eps$ mass to a single point.
\end{restatable}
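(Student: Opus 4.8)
The plan is to start from the alternative primal form in \cref{prop:RWp-coupling-primal}, which expresses $\RWp(\mu,\nu)$ as an infimum of $\|d\|_{L^p(\pi)}$ over couplings $\pi$ whose marginals $\pi_1,\pi_2$ satisfy $\mu \in \cB_\eps(\pi_1)$ and $\nu \in \cB_\eps(\pi_2)$. Writing $\mu = (1-\eps)\pi_1 + \eps\alpha$ and $\nu = (1-\eps)\pi_2 + \eps\beta$ for probability measures $\alpha,\beta$, the condition $\mu \in \cB_\eps(\pi_1)$ is equivalent to $(1-\eps)\pi_1 \le \mu$, i.e.\ $\pi_1 \le \tfrac{1}{1-\eps}\mu$, and similarly $\pi_2 \le \tfrac{1}{1-\eps}\nu$. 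Since $\mu,\nu$ are uniform over $n$ atoms with weight $1/n$ each, these say that $\pi$ is a subprobability-like measure whose column and row sums are bounded by $\tfrac{1}{(1-\eps)n}$; more precisely $\pi(\cX\times\cX) = 1$, every marginal atom has mass at most $\tfrac{1}{(1-\eps)n}$, and total mass $1$. Passing back to the removal formulation \eqref{eq:RWp} via the relation $\mu' = \mu - \eps\alpha$ (rescaled), the natural guess is that optimal removed mass concentrates as much as possible on single atoms.

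The heart of the argument is a polytope/extreme-point analysis. First I would fix an optimal coupling $\pi^\star$ (existence follows from compactness of $\cX$ and the arguments behind \cref{prop:minimizers}, applied through the equivalence in \cref{prop:RWp-coupling-primal}) and record its support pattern. I would then consider the polytope $P$ of all couplings $\pi$ supported on $\supp(\pi^\star)$ with the same marginal-bound constraints $\pi_1 \le \tfrac{1}{(1-\eps)n}\,(\text{counting measure on }\supp\mu)$, $\pi_2 \le \tfrac{1}{(1-\eps)n}\,(\text{counting measure on }\supp\nu)$, and $\pi(\cX\times\cX)=1$. The cost $\|d\|_{L^p(\pi)}^p = \sum d(x_i,y_j)^p \pi_{ij}$ is linear on $P$, so its infimum over the original feasible set is attained at a vertex of $P$; replacing $\pi^\star$ by such a vertex only decreases (or preserves) cost and keeps feasibility. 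The key claim is then a combinatorial statement about vertices of $P$: at a vertex, the marginal $\pi_1$ takes the saturated value $\tfrac{1}{(1-\eps)n}$ on at least $\lfloor(1-\eps)n\rfloor$ atoms and puts its residual mass on at most one further atom, and likewise for $\pi_2$. This follows by a standard basic-feasible-solution count — the number of tight constraints must equal the number of "free" variables — essentially a transportation-polytope support-size bound adapted to inequality marginals and a single equality on total mass; one shows that the bipartite graph of strictly-interior marginal constraints is a forest, forcing at most one non-saturated atom on each side. Translating $\pi_1,\pi_2$ back through $\mu' = (1-\eps)\pi_1$ (and its rescaling to mass $1$ as in \eqref{eq:RWp}), the saturated atoms of $\pi_1$ are exactly the atoms of $\mu$ that retain full mass $1/n$ in $\mu'$: there are $\lfloor(1-\eps)n\rfloor$ of them, and the remaining $\lceil\eps n\rceil/n - \eps$ mass of $\mu'$ sits on a single atom, which is precisely the assertion.

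I expect the main obstacle to be making the vertex-characterization of $P$ fully rigorous when $\eps n$ is not an integer, so that the marginal-bound is a genuine inequality rather than the cleaner equality case, and when $\supp(\pi^\star)$ is degenerate (not connected as a bipartite graph). In the non-integer case the total-mass equality $\pi(\cX\times\cX)=1$ together with the per-atom inequalities interact, and one must carefully count which constraints can be simultaneously active at a vertex; the forest argument still goes through but needs the observation that cycles in the support graph allow a cost-nonincreasing perturbation that either drives some $\pi_{ij}$ to $0$ or saturates an additional marginal constraint, iterated until the claimed structure is reached. Degeneracies are handled by first perturbing the cost slightly to make the optimal vertex unique, extracting the structure, and passing to the limit — or, more cleanly, by directly running the cycle-cancelling argument on $\pi^\star$ itself without invoking abstract LP vertex theory. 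A minor additional point is confirming that $\lfloor(1-\eps)n\rfloor + 1$ atoms genuinely suffice, i.e.\ that $\lceil \eps n\rceil/n - \eps \le 1/n$, which is immediate since $\lceil \eps n \rceil < \eps n + 1$.
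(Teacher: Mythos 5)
Your proposal follows essentially the same route as the paper: both reduce the problem to a linear program over the polytope of couplings with per-atom marginal bounds and a total-mass constraint, and read off the claimed structure from that polytope's extreme points — the paper simply outsources the vertex characterization to a cited theorem on doubly substochastic matrices with prescribed entry sum, whereas you sketch a direct cycle-cancelling/forest derivation of the same fact (whose hardest step, the per-side saturation count across multiple tree components and in the non-integer case, you correctly flag as the remaining work). The only substantive omission is the $p=\infty$ case, which the paper covers by observing that the LP's minimizer set stabilizes for all sufficiently large $p$.
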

When $\eps$ is a multiple of $1/n$, the propositions says that there exist minimizers which assign equal mass to $(1-\eps)n$ points, while eliminating the remaining $\eps n$ that are identified as outliers.

\section{Statistical Analysis}

We now examine estimation of $\RWp$ from observed data, fixing $\eps \leq 1/2$ throughout.

\subsection{Empirical Convergence Rates}

In practice, we often have access only to samples from $\mu,\nu \in \cP(\cX)$, which motivates the study of empirical convergence under $\RWp$. Consider the empirical measures $\hat{\mu}_n\coloneqq n^{-1}\sum_{i=1}^n \delta_{X_i}$ and $\hat{\nu}_n\coloneqq n^{-1}\sum_{i=1}^n \delta_{Y_i}$, where $X_1, \dots, X_n$ and $Y_1, \dots, Y_n$ are i.i.d.\ samples from $\mu$ and $\nu$, respectively. We examine both the one- and two-sample scenarios, i.e., the speed at which $\RWp(\mu,\hat{\mu}_n)$ and $\big|\RWp(\hat{\mu}_n,\hat{\nu}_n) - \RWp(\mu,\nu)\big|$ converge to 0 as $n$ grows. 

This section assumes that $\cX \subset \R^d$ with $d > 2$; extensions to the non-Euclidean case are provided in the \crefrange{prf:one-sample-rates}{prf:two-sample-rate}. To state the results, we need some definitions. 
Recall the covering number $\cN_\delta(\gamma,\tau)$, defined for $\delta > 0, \tau \geq 0$, and $\gamma \in \cM_+(\cX)$ as the minimum number of closed balls of radius $\delta$ needed to cover $\cX$ up to a set $A$ with $\gamma(A) \leq \tau$. We define the lower $\tau$-covering dimension for $\gamma$ as
\[
    d_*^\tau(\mu) \coloneqq \liminf_{\delta \to 0} \frac{\cN_\delta(\gamma,\tau)}{-\log(\delta)}.
\]
We note that lower bounds for standard $\Wp$ depend on the lower Wasserstein dimension given by $\lim_{\tau \to 0} d_*^\tau(\mu)$ \citep{weed2019}. To provide meaningful bounds for $\RWp$, on the other hand, we require control of $d_*^\tau(\mu)$ for some $\tau > \eps$, which can be understood as a robust notion of dimension.

\begin{restatable}[One-sample rates]{proposition}{onesamplerates}
\label{prop:one-sample-rates}
Fix $p \in [1,\infty]$ and $\tau \in (\eps,1]$. If $p < d/2$, we have
\begin{equation*}
    \E\left[\RWp(\mu,\hat{\mu}_n)\right] \leq C n^{-1/d}
\end{equation*} 
for a constant $C$ independent of $n$ and $\eps$.
Furthermore, if $d_*^\tau(\mu) > s$ , then
\begin{equation*}
    \RWp(\mu,\gamma) \geq C'\left(\tau - \eps\right)^{1/p} n^{-1/s}
\end{equation*}
for any $\gamma$ supported on at most $n$ points, including $\hat{\mu}_n$, where $C' > 0$ is an absolute constant.
\end{restatable}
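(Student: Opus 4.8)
\emph{Upper bound.} My plan is to reduce to the classical empirical-convergence rate for $\Wp$. By \cref{prop:RWp-eps-dependence}(i)--(ii), the map $\eps \mapsto \Wp^{\eps}(\mu,\hat{\mu}_n)$ is nonincreasing and equals $\Wp(\mu,\hat{\mu}_n)$ at $\eps = 0$, so $\RWp(\mu,\hat{\mu}_n) \le \Wp(\mu,\hat{\mu}_n)$ for every $\eps \in [0,1]$. It then suffices to invoke a standard bound $\E[\Wp(\mu,\hat{\mu}_n)] \le C n^{-1/d}$, valid for $p < d/2$ when $\cX \subset \R^d$ is bounded (more generally, under a mild moment condition; see \citet{weed2019} and references therein), with $C$ depending only on $d$ and $\mu$ --- in particular not on $\eps$. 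The hypothesis $d > 2$ is exactly what makes the range $1 \le p < d/2$ nonempty.

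\emph{Lower bound.} Here the plan is to adapt the covering-number lower bound for classical $\Wp$ to the robust setting. Fix $\gamma \in \cP(\cX)$ with $|\supp(\gamma)| \le n$, and let $\mu' \le \mu$, $\gamma' \le \gamma$ realize $\RWp(\mu,\gamma) = \Wp(\bar{\mu}', \bar{\gamma}')$ via \eqref{eq:RWp} (\cref{prop:minimizers}), where $\bar{\mu}' = \mu'/\mu'(\cX)$ and $\bar{\gamma}' = \gamma'/\gamma'(\cX)$; note $\mu'(\cX) \ge 1-\eps$ automatically and $S \coloneqq \supp(\gamma') \subseteq \supp(\gamma)$ has $|S| \le n$. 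For $\delta > 0$ let $A$ be the union of the at most $n$ closed $\delta$-balls centered at the points of $S$. First I would observe that under any coupling of $\bar{\mu}'$ and $\bar{\gamma}'$, any pair $(x,y)$ with $d(x,y) \le \delta$ must have $x \in A$ (since $y \in S$ a.s.), so that $\RWp(\mu,\gamma)^p \ge \delta^p\big(1 - \bar{\mu}'(A)\big)$. Next, since $\mu' \le \mu$ and $\mu'(\cX) \ge 1-\eps$, I would bound $\bar{\mu}'(A) \le \mu(A)/(1-\eps)$. Finally, assuming $n < \cN_\delta(\mu,\tau)$, the definition of the covering number forces $\mu(\cX \setminus A) > \tau$, hence $\mu(A) < 1-\tau$, giving $1 - \bar{\mu}'(A) > (\tau-\eps)/(1-\eps) \ge \tau-\eps$ and therefore $\RWp(\mu,\gamma) \ge \delta\,(\tau-\eps)^{1/p}$ whenever $n < \cN_\delta(\mu,\tau)$.

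To finish, I would choose $\delta = \delta_n$ as large as the constraint $n < \cN_{\delta_n}(\mu,\tau)$ permits, using $d_*^\tau(\mu) > s$: there is $\delta_0 > 0$ with $\cN_\delta(\mu,\tau) > \delta^{-s}$ for all $\delta < \delta_0$, so $\delta_n \asymp n^{-1/s}$ (with a small enough absolute constant, and $n$ large enough that $\delta_n < \delta_0$) gives $\cN_{\delta_n}(\mu,\tau) > \delta_n^{-s} > n$ and hence $\RWp(\mu,\gamma) \ge C'(\tau-\eps)^{1/p} n^{-1/s}$; the finitely many remaining small $n$ are absorbed into $C'$, which stays positive because $d_*^\tau(\mu) > 0$ precludes $\mu$ from having an atom of mass $\ge 1-\eps$. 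Since this holds for \emph{every} $n$-atomic $\gamma$, it holds for $\hat{\mu}_n$ and in expectation. The main obstacle is precisely this robust covering step: removing up to $\eps$ mass from $\mu$ and renormalizing inflates $\bar{\mu}'(A)$ only by the factor $(1-\eps)^{-1}$, which still leaves mass $\ge (\tau-\eps)/(1-\eps)$ outside any $n$-fold $\delta$-ball cover --- this is why control of $d_*^\tau(\mu)$ is needed at a level $\tau$ strictly above $\eps$ rather than in the usual limit $\tau \to 0$. Minor care is needed in building the cover from $\supp(\gamma')$ (not $\supp(\gamma)$) and in the small-$n$ edge cases behind the claimed absolute constant.
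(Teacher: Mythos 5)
Your proposal is correct and follows essentially the same route as the paper: the upper bound via $\RWp \le \Wp$ plus the standard $n^{-1/d}$ rate, and the lower bound via \cref{prop:minimizers}, covering $\supp(\gamma')$ with $n$ balls of radius $\asymp n^{-1/s}$, and using $\cN_\delta(\mu,\tau) > n$ to leave mass at least $\tau - \eps$ (up to normalization) at distance $> \delta$ from the support of $\gamma'$. The only cosmetic difference is that you normalize $\mu'$ before bounding the coupling cost while the paper keeps $\mu'$ unnormalized and divides by $(1-\eps)^{1/p}$ at the end; both yield the same bound.
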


The robust $\tau$-covering dimension can be smaller than standard notions of intrinsic dimension when all but the outlier mass is lower dimensional. Our lower bound captures this fact. The upper bound depends on the ambient dimension $d$ since there is no guarantee that only outlier mass is supported on a high-dimensional set\footnote{Our upper bound holds if we substitute $d$ with anything greater than upper Wasserstein dimension---another notion of intrinsic dimensionality defined in \cite{weed2019}.}. Indeed, the following corollary shows that when a significant portion of mass is high-dimensional, the $n^{-1/d}$ rate is sharp. 

\begin{restatable}[Simple one-sample rate]{corollary}{simpleonesamplerate}
\label{cor:simple-one-sample-rate}
Fix $p < d/2$ and $\eps \in (0,1]$. If $\mu$ has absolutely continuous part $f \dd \lambda$ with $\int f \dd \lambda > \eps$ and $f$ bounded from above, then $d_*^\tau(\mu) = d$ and $\E\left[\RWp(\mu,\hat{\mu}_n)\right] = \Theta(n^{-1/d})$.
\end{restatable}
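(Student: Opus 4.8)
The plan is to derive Corollary~\ref{cor:simple-one-sample-rate} directly from Proposition~\ref{prop:one-sample-rates}, so the only real work is to show that the stated absolute-continuity hypothesis forces $d_*^\tau(\mu) = d$ for an appropriate $\tau > \eps$; the matching upper bound $\E[\RWp(\mu,\hat\mu_n)] \lesssim n^{-1/d}$ is already handed to us by the first half of Proposition~\ref{prop:one-sample-rates} (using $p < d/2$), and once the dimension lower bound is in hand, plugging $s$ slightly below $d$ into the second half of that proposition yields $\E[\RWp(\mu,\hat\mu_n)] \gtrsim (\tau-\eps)^{1/p} n^{-1/s}$; combining the two gives $\Theta(n^{-1/d})$ after letting $s \uparrow d$ with the usual care about how the exponent in $n^{-1/s}$ behaves.

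Concretely, first I would fix the threshold: since $\int f \dd\lambda > \eps$, choose $\tau \in (\eps, \int f\dd\lambda)$, so $\mu$ has absolutely continuous mass strictly exceeding $\tau$. Next I would lower-bound the covering number $\cN_\delta(\mu,\tau)$. Suppose for contradiction that a collection of $N$ closed balls $B_1,\dots,B_N$ of radius $\delta$ covers $\cX$ up to a set $A$ with $\mu(A)\le\tau$. Then $\mu\big(\bigcup_i B_i\big) \ge 1-\tau$, hence the absolutely continuous part satisfies $\int_{\cup_i B_i} f \dd\lambda \ge \int f\dd\lambda - \tau > 0$. Since $f$ is bounded above by some constant $L$, we get $\int f\dd\lambda - \tau \le L\,\lambda\big(\bigcup_i B_i\big) \le L \cdot N \cdot \omega_d \delta^d$, where $\omega_d$ is the volume of the unit ball in $\R^d$. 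Rearranging, $N \ge \frac{\int f\dd\lambda - \tau}{L\,\omega_d}\,\delta^{-d}$, i.e. $\cN_\delta(\mu,\tau) \gtrsim \delta^{-d}$ for all small $\delta$. Taking $\liminf_{\delta\to 0}$ of $\cN_\delta(\mu,\tau)/(-\log\delta)$ then gives $d_*^\tau(\mu) = \infty$ in the sense relevant here, or more carefully: for any $s < d$ the inequality $\cN_\delta(\mu,\tau) \ge c\,\delta^{-d} > \delta^{-s}$ holds for $\delta$ small, which is exactly the condition $d_*^\tau(\mu) > s$ demanded by Proposition~\ref{prop:one-sample-rates}. (On $\cX\subset\R^d$ one also has trivially $d_*^\tau(\mu)\le d$ since $\cX$ itself is covered by $O(\delta^{-d})$ balls, so $d_*^\tau(\mu)=d$, though only the lower bound is needed downstream.)

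With $d_*^\tau(\mu) > s$ established for every $s < d$, Proposition~\ref{prop:one-sample-rates} gives $\RWp(\mu,\hat\mu_n) \ge C'(\tau-\eps)^{1/p} n^{-1/s}$ deterministically for every such $s$, hence $\E[\RWp(\mu,\hat\mu_n)] \ge C'(\tau-\eps)^{1/p} n^{-1/s}$. I would then note that for fixed $n$ this holds for all $s<d$, and since $n^{-1/s} \to n^{-1/d}$ as $s\uparrow d$, one concludes $\E[\RWp(\mu,\hat\mu_n)] \ge C'(\tau-\eps)^{1/p} n^{-1/d}$ — or, if one prefers to avoid the limiting argument, simply fix any $s\in(p\vee(d/2),d)$ once and for all and absorb the (bounded, $n$-independent) ratio $n^{-1/s}/n^{-1/d} \le 1$, accepting a rate of $n^{-1/s}$ for the lower bound and remarking that $s$ can be taken arbitrarily close to $d$; the cleanest statement is obtained by the $s\uparrow d$ passage. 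Together with the upper bound $\E[\RWp(\mu,\hat\mu_n)] \le Cn^{-1/d}$ from the first half of Proposition~\ref{prop:one-sample-rates}, this yields $\E[\RWp(\mu,\hat\mu_n)] = \Theta(n^{-1/d})$.

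The only mildly delicate point — and the one I'd flag as the main obstacle — is the interplay between the "$d_*^\tau(\mu) > s$" hypothesis of Proposition~\ref{prop:one-sample-rates}, which is an asymptotic ($\liminf$) statement, and the need for a clean $n^{-1/d}$ (rather than $n^{-1/s}$) lower bound in the corollary. Reading the proposition literally, it gives $n^{-1/s}$ for each admissible $s<d$; upgrading to $n^{-1/d}$ requires the harmless observation above that $\sup_{s<d} n^{-1/s} = n^{-1/d}$. Everything else — the volume/covering estimate, the use of boundedness of $f$, the choice of $\tau$ — is elementary. I would present the argument in this order: (1) choose $\tau$; (2) prove the covering-number lower bound via the volume packing estimate; (3) deduce $d_*^\tau(\mu) = d$; (4) invoke both halves of Proposition~\ref{prop:one-sample-rates} and combine.
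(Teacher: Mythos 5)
Your argument for $d_*^\tau(\mu) = d$ is correct and essentially identical to the paper's: pick $\tau$ strictly between $\eps$ and $\int f\dd\lambda$, then use the upper bound on $f$ and a volume estimate to force any $\tau$-covering by $\delta$-balls to contain $\gtrsim \delta^{-d}$ balls. That part matches.

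The gap is in how you pass from this covering bound to the $\Omega(n^{-1/d})$ lower bound. You correctly flag the subtlety, but your proposed fix (invoke the \emph{statement} of Proposition~\ref{prop:one-sample-rates} for each $s<d$, then take $s\uparrow d$ using $\sup_{s<d}n^{-1/s}=n^{-1/d}$) does not go through. For each fixed $s<d$, the proposition's lower bound $\RWp(\mu,\gamma)\ge C'(\tau-\eps)^{1/p}n^{-1/s}$ only holds for $n$ larger than some threshold $n_0(s)$: the proof sets $\delta\asymp n^{-1/s}$ and needs $\delta\le\delta_0(s)$, where $\delta_0(s)$ is the range on which $\cN_\delta(\mu,\tau)\ge\delta^{-s}$. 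In your setting $\cN_\delta(\mu,\tau)\ge c\,\delta^{-d}$ with $c$ possibly $<1$, so $\cN_\delta\ge\delta^{-s}$ only for $\delta\le c^{1/(d-s)}$, and this window shrinks to $\{0\}$ as $s\uparrow d$. Consequently $n_0(s)\to\infty$ as $s\uparrow d$, and for a fixed $n$ you cannot take the supremum over all $s<d$. The statement ``$\E[\RWp(\mu,\hat\mu_n)]\ge C'n^{-1/s}$ for $n\ge n_0(s)$, each $s<d$'' is strictly weaker than ``$\E[\RWp(\mu,\hat\mu_n)]\ge C''n^{-1/d}$ for $n$ large,'' which is what $\Theta(n^{-1/d})$ requires.

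The correct route — and what the paper does, albeit tersely (``by the proof of the previous lower bound'') — is to bypass the statement of Proposition~\ref{prop:one-sample-rates} and rerun its proof with your sharper polynomial covering bound directly: having $\cN_\delta(\mu,\tau)\ge c\,\delta^{-d}$ for all $\delta\le\delta_0$, choose $\delta$ so that $c\,\delta^{-d}>n$, e.g.\ $\delta=\tfrac12(c/n)^{1/d}$; then the packing argument from the proof gives $\RWp(\mu,\gamma)\gtrsim(\tau-\eps)^{1/p}\delta\asymp(\tau-\eps)^{1/p}n^{-1/d}$ for every $n$-point $\gamma$, with all constants independent of $n$. Your fallback (``fix one $s<d$'') likewise only yields $\Omega(n^{-1/s})$ rather than the claimed $\Theta(n^{-1/d})$. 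Replacing the limiting argument with this direct substitution into the proof closes the gap.
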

In words, if more than the $\eps$ mass that can be removed via robustification is absolutely continuous (w.r.t. Lebesgue on $\RR^d$), then the standard $n^{-1/d}$ ``curse of dimensionality'' applies. Nevertheless, we conjecture that an upper bound that depends on a robust upper dimension can be derived under appropriate assumptions on $\mu$, as discussed in \cref{SEC:summary}. The proofs of \cref{prop:one-sample-rates} and \cref{cor:simple-one-sample-rate} are found in Appendices \ref{prf:one-sample-rates} and \ref{prf:simple-one-sample-rate}, respectively.

Moving to the two-sample regime, we again have an upper bound which matches the standard rate for $\Wp$.
\begin{restatable}[Two-sample rate]{proposition}{twosamplerate}
\label{prop:two-sample-rate}
For any $\eps \in [0,1]$ and $p < d/2$, we have
\begin{equation*}
    \E\left[\left|\RWp(\hat{\mu}_n,\hat{\nu}_n)^p - \RWp(\mu,\nu)^p\right|\right] \leq C n^{-p/d}.
\end{equation*}
for a constant $C$ independent of $n$ and $\eps$.
\end{restatable}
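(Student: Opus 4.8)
The plan is to reduce the two-sample bound to the one-sample bound of Proposition~\ref{prop:one-sample-rates} by exploiting a triangle-type inequality for $\RWp^p$ together with a linearity/subadditivity property of the map $\eps \mapsto \RWp$ under mixtures. First I would establish that, for fixed clean measures, $\big|\RWp(\hat\mu_n,\hat\nu_n)^p - \RWp(\mu,\nu)^p\big|$ is controlled by a sum of ``one-sided replacement'' errors: replacing $\mu$ by $\hat\mu_n$ while holding $\nu$ fixed, and then replacing $\nu$ by $\hat\nu_n$. The key structural fact I want is that $\RWp^p$ obeys a weak triangle inequality of the form
\begin{equation*}
    \RWp(\mu,\nu)^p \le \big(\RWp(\mu,\rho) + \RWp(\rho,\nu)\big)^p
\end{equation*}
for any intermediate $\rho$ — or, more usefully here, a \emph{one-sided} version where one argument is perturbed only in its own robustness budget. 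Concretely, using the alternative primal (near-)coupling form of Proposition~\ref{prop:RWp-coupling-primal} and gluing near-couplings, one gets $\RWp(\mu,\nu) \le \RWp(\mu,\hat\mu_n \text{ under budget }\eps') + \RWp(\hat\mu_n,\nu \text{ under a shifted budget})$; the clean bookkeeping of how the $\eps$-budgets add is exactly where care is needed.

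The cleanest route, I expect, is to bound $\big|\RWp(\hat\mu_n,\hat\nu_n)^p - \RWp(\mu,\nu)^p\big|$ by $\RWone$-type quantities that telescope. Since for $p<\infty$ the function $t\mapsto t^p$ is locally Lipschitz on the bounded range forced by compactness of $\cX$, it suffices to bound $\big|\RWp(\hat\mu_n,\hat\nu_n) - \RWp(\mu,\nu)\big|$ up to a $\diam(\cX)$-dependent constant, and then
\begin{equation*}
    \big|\RWp(\hat\mu_n,\hat\nu_n) - \RWp(\mu,\nu)\big| \le \RWp(\mu,\hat\mu_n) + \RWp(\nu,\hat\nu_n),
\end{equation*}
provided such a triangle inequality holds when the ``error'' terms $\RWp(\mu,\hat\mu_n)$ are themselves $\eps$-robust distances. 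If the naive triangle inequality fails for $\RWp$ (robustification generally breaks the metric axioms, as the paper warns), I would instead pass through the mass-addition reformulation of Proposition~\ref{prop:mass-addition}, writing $\RWp$ up to the constant $(1-2\delta)^{-1/p}$ as an infimum of genuine $\Wp$ distances over Huber balls, then apply the honest triangle inequality for $\Wp$ and re-absorb the ball constraints. Either way, the right-hand side is $O(n^{-1/d})$ in expectation by Proposition~\ref{prop:one-sample-rates} (taking any admissible $\tau>\eps$, with the upper bound there being $\eps$-independent), and raising to the $p$th power and using $|a^p-b^p| \le p\max(a,b)^{p-1}|a-b|$ with $\max(a,b)\le\diam(\cX)$ converts the $n^{-1/d}$ rate into the claimed $n^{-p/d}$ rate for the $p$th powers.

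The main obstacle is verifying the triangle-type inequality in the correct form — in particular, ensuring that when we split the error through the empirical measures, the removed-mass budgets on each side can be taken to be $\eps$ (not $2\eps$ or something that degrades the constant), so that the one-sample proposition applies verbatim with an $\eps$-free constant. I would handle this by working with the coupling-based primal form: given a near-coupling $\pi$ witnessing $\RWp(\mu,\hat\mu_n)$ (so $\mu\in\cB_\eps(\pi_1)$, $\hat\mu_n\in\cB_\eps(\pi_2)$) and one witnessing $\RWp(\hat\mu_n,\hat\nu_n)$, glue them along their shared $\hat\mu_n$-ish marginal via a disintegration and check that the composite near-coupling still meets the marginal-ball constraints needed to upper-bound $\RWp(\mu,\hat\nu_n)$, possibly after one more $\Gamma$-style limiting or an application of Proposition~\ref{prop:minimizers} to put the near-couplings in a canonical form. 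A secondary (routine) point is confirming that the constant $C$ can be taken independent of $\eps$: this follows because the one-sample upper bound in Proposition~\ref{prop:one-sample-rates} is already stated with an $\eps$-independent constant, and the Lipschitz constant of $t\mapsto t^p$ on $[0,\diam(\cX)]$ depends only on $p$ and $\diam(\cX)$.
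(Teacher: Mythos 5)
There is a genuine gap in your final step. You bound $\big|\RWp(\hat{\mu}_n,\hat{\nu}_n) - \RWp(\mu,\nu)\big|$ by $O\big(\Wp(\mu,\hat{\mu}_n)+\Wp(\nu,\hat{\nu}_n)\big) = O(n^{-1/d})$ and then ``convert'' to $p$th powers via $|a^p-b^p|\le p\max(a,b)^{p-1}|a-b|$ with $\max(a,b)\le\diam(\cX)$. Since $\max(a,b)$ is of constant order (roughly $\RWp(\mu,\nu)$, not $o(1)$), this yields only $|a^p-b^p|\lesssim |a-b| = O(n^{-1/d})$, which is strictly weaker than the claimed $O(n^{-p/d})$ for every $p>1$. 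The $n^{-p/d}$ rate is not the $p$th power of the rate for the differences; it comes from $\E[\Wp(\mu,\hat{\mu}_n)^p]\lesssim n^{-p/d}$ for $p<d/2$, and to use this one must bound the difference of the \emph{powered} robust distances directly by a \emph{sum of powered} classic distances. No triangle inequality delivers that, since $(a+b+c)^p - a^p\not\le b^p+c^p$ in general. The paper instead argues through the dual form of \cref{thm:RWp-dual}: writing $(1-\eps)\RWp(\mu,\nu)^p$ and $(1-\eps)\RWp(\hat{\mu}_n,\hat{\nu}_n)^p$ as suprema of the same range-penalized functional (swapping the roles of $f$ and $f^c$ in one of them), the difference of suprema is at most $\sup_f \big|\int f\dd\mu + \int f^c\dd\hat{\mu}_n + \int f\dd\hat{\nu}_n + \int f^c\dd\nu\big|$, and each cross-pairing $\int f\dd\mu+\int f^c\dd\hat{\mu}_n\le\Wp(\mu,\hat{\mu}_n)^p$ by Kantorovich duality because $(f,f^c)$ is an admissible potential pair; the range penalties cancel. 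This produces $\Wp(\mu,\hat{\mu}_n)^p+\Wp(\nu,\hat{\nu}_n)^p$ in one stroke.

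Two secondary issues. First, gluing near-couplings from \cref{prop:RWp-coupling-primal} does not give an $\eps$-budget triangle inequality: the near-couplings witnessing $\RWp(\mu,\hat{\mu}_n)$ and $\RWp(\hat{\mu}_n,\hat{\nu}_n)$ generally have distinct middle marginals, each merely satisfying $\hat{\mu}_n\in\cB_\eps(\cdot)$, and the budgets accumulate; undoing that costs a $\diam(\cX)$-order additive term by \cref{prop:RWp-eps-dependence}(iii), which does not vanish with $n$. Your fallback via \cref{prop:mass-addition} does work for the un-powered difference---transporting the optimal added masses onto the empirical measures and using the honest $\Wp$ triangle inequality gives $|\RWp(\hat{\mu}_n,\hat{\nu}_n)-\RWp(\mu,\nu)|\le(1-2\delta)^{-1/p}\big(\Wp(\mu,\hat{\mu}_n)+\Wp(\nu,\hat{\nu}_n)\big)$---but this again proves only an $n^{-1/d}$ bound, not the stated one. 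Second, the one-sample quantities needed on the right-hand side are the classic $\Wp(\mu,\hat{\mu}_n)$ (with its standard powered rate), not $\RWp(\mu,\hat{\mu}_n)$ from \cref{prop:one-sample-rates}.
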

The proof in \cref{prf:two-sample-rate} is a consequence of the dual form from \cref{thm:RWp-dual}, combined with standard one-sample rates for $\Wp$. There, we also discuss obstacles to extending two-sample lower bounds for standard $\Wp$ to the robust setting. 

\subsection{Additional Robustness Guarantees}

Finally, we provide conditions under which $\Wp(\mu,\nu)$ can be recovered precisely from $\RWp(\mu,\nu)$ despite data contamination. Naturally, these conditions are stronger than those needed for approximate (minimax optimal) robust approximation, as studied in \cref{sec:robust}.
We return to the Huber $\eps$-contamination model, taking $\tilde{\mu} \in \cB_\eps(\mu)$ and $\tilde{\nu} \in \cB_\eps(\nu)$. One cannot hope to achieve exact recovery via $\RWp$ in general, since $\RWp(\tilde{\mu},\tilde{\nu}) < \Wp(\mu,\nu)$ when $\tilde{\mu} = \mu$ and $\tilde{\nu} = \nu$ for $p < \infty$. Nevertheless, we can provide exact recovery guarantees under appropriate mass separation assumptions.

\begin{restatable}[Exact recovery]{proposition}{exactrecovery}
\label{prop:exact-recovery}
Fix $\mu,\nu \in \cP(\cX)$, $\eps \in [0,1]$, and suppose that $\tilde{\mu} = (1-\eps)\mu + \eps \alpha$ and $\tilde{\nu} = (1-\eps)\nu + \eps \beta$, for some $\alpha,\beta \in \cP(\cX)$. Let $S = \supp(\mu + \nu)$. If $d(\supp(\alpha),S)$, $d(\supp(\beta),S)$, and $d(\supp(\alpha),\supp(\beta))$ are all greater than $\diam(S)$, then $\RWp(\tilde{\mu},\tilde{\nu}) = \Wp(\mu,\nu)$.
\end{restatable}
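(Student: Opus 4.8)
The plan is to prove the two inequalities separately. The bound $\RWp(\tilde\mu,\tilde\nu) \leq \Wp(\mu,\nu)$ is immediate: the case $\eps=0$ is trivial and $\eps=1$ degenerate, so assume $0<\eps<1$ and take $\mu' \coloneqq (1-\eps)\mu \leq \tilde\mu$, $\nu' \coloneqq (1-\eps)\nu \leq \tilde\nu$, which are feasible in \eqref{eq:RWp} since $\|\tilde\mu-\mu'\|_\tv = \eps = \|\tilde\nu-\nu'\|_\tv$ and which renormalize to $\mu$ and $\nu$. For the reverse inequality, I would show that $\Wp(\mu'/\mu'(\cX),\nu'/\nu'(\cX)) \geq \Wp(\mu,\nu)$ for \emph{every} feasible pair $\mu',\nu'$ in \eqref{eq:RWp}, and then pass to the infimum. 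Write $A \coloneqq \supp(\alpha)$ and $B \coloneqq \supp(\beta)$. The hypotheses $d(A,S),d(B,S),d(A,B) > \diam(S) \geq 0$ force $S$, $A$, $B$ to be pairwise disjoint, and since $\supp(\tilde\mu) \subseteq S \cup A$ with $\tilde\mu|_S = (1-\eps)\mu$, any feasible $\mu'$ splits as $\mu' = a + a'$ with $a = \mu'|_S \leq (1-\eps)\mu$ and $a' = \mu'|_A \leq \eps\alpha$. The total-variation constraint gives $\mu'(\cX) \geq 1-\eps > 0$, so $\bar a \coloneqq a/\mu'(\cX) \leq \mu$; with $\bar a' \coloneqq a'/\mu'(\cX)$ and $t \coloneqq \|\bar a'\|_\tv$ one checks $\|\mu-\bar a\|_\tv = 1 - \|\bar a\|_\tv = \|\bar a'\|_\tv = t$. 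Symmetrically $\nu'/\nu'(\cX) = \bar b + \bar b'$ with $\bar b \leq \nu$ supported on $S$, $\bar b'$ supported on $B$, and, setting $s \coloneqq \|\bar b'\|_\tv$, also $\|\nu-\bar b\|_\tv = s$.

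For $p<\infty$, the heart of the matter is to build a dual certificate that charges the retained outlier mass $t,s$. First I would take optimal $c$-concave potentials $g = (g^c)^c$ and $h = g^c$ for $\Wp(\mu,\nu)$, restricted to $S$, so that $g(x)+h(y) \leq d(x,y)^p$ on $S\times S$, $\int_S g\dd\mu + \int_S h\dd\nu = \Wp(\mu,\nu)^p$, $g(x)+h(x) \leq 0$ for all $x\in S$, and $\sup_S h - \inf_S h \leq \diam(S)^p$; the last two facts give the key estimate $\sup_S g + \sup_S h \leq \sup_S h - \inf_S h \leq \diam(S)^p$. I then extend $(g,h)$ to a bounded Borel pair $(\tilde g,\tilde h)$ on $\cX$ by setting, for a sufficiently large constant $N$: $\tilde g \equiv \sup_S g$ and $\tilde h \equiv -N$ on $A$; $\tilde h \equiv \sup_S h$ and $\tilde g \equiv -N$ on $B$; and $\tilde g \equiv \tilde h \equiv -N$ on $\cX \setminus (S\cup A\cup B)$. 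Verifying $\tilde g(x)+\tilde h(y) \leq d(x,y)^p$ on every product of regions is routine: almost all pairs follow from choosing $N$ large and using $d(\cdot,\cdot)^p \geq 0$, while the pairs $(A,S)$, $(S,B)$, $(A,B)$ use exactly $\sup_S g + \sup_S h \leq \diam(S)^p < \min\{d(A,S),d(B,S),d(A,B)\}^p$.

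Given this, weak duality for $\Wp(\bar\mu',\bar\nu')$, together with the fact that $\bar\mu' = \bar a + \bar a'$ and $\bar\nu' = \bar b + \bar b'$ are supported in $S\cup A$ and $S\cup B$ respectively, yields
\[
\Wp(\mu'/\mu'(\cX),\nu'/\nu'(\cX))^p \;\geq\; \int \tilde g\dd\bar\mu' + \int \tilde h\dd\bar\nu' \;=\; \int_S g\dd\bar a + \int_S h\dd\bar b + (\sup_S g)\,t + (\sup_S h)\,s .
\]
Finally, $\int_S g\dd\bar a \geq \int_S g\dd\mu - (\sup_S g)\|\mu-\bar a\|_\tv = \int_S g\dd\mu - (\sup_S g)\,t$ and, symmetrically, $\int_S h\dd\bar b \geq \int_S h\dd\nu - (\sup_S h)\,s$, so the right-hand side collapses to $\int_S g\dd\mu + \int_S h\dd\nu = \Wp(\mu,\nu)^p$. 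Taking the infimum over feasible $\mu',\nu'$ gives $\RWp(\tilde\mu,\tilde\nu) \geq \Wp(\mu,\nu)$, finishing the case $p<\infty$. For $p=\infty$ the argument is purely coupling-based and simpler: if $t=s=0$ then $\bar a = \mu$ and $\bar b = \nu$ so equality holds, while otherwise every coupling of $\bar\mu',\bar\nu'$ transports positive mass out of $A$ or into $B$ over a distance exceeding $\diam(S) \geq \Winfty(\mu,\nu)$, whence $\Winfty(\bar\mu',\bar\nu') > \Winfty(\mu,\nu)$.

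I expect the construction and admissibility check of $(\tilde g,\tilde h)$ to be the crux. The certificate must simultaneously (i) agree with $(g,h)$ on $S$ so the $\mu$- and $\nu$-integrals are unchanged, (ii) be flat at the maximal values $\sup_S g$, $\sup_S h$ over the outlier supports $A$, $B$ so the $t$- and $s$-terms exactly offset the losses from replacing $\mu,\nu$ by $\bar a,\bar b$, and (iii) remain globally admissible, which is possible precisely because $c$-concavity pins $\sup_S g + \sup_S h$ below the separation threshold $\diam(S)^p$ demanded in the statement. Everything else is bookkeeping.
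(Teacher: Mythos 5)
Your proof is correct, but it takes a genuinely different route from the paper. The paper argues on the primal side: it takes \emph{optimal} $\mu',\nu'$ (so it leans on \cref{prop:minimizers}), decomposes $\mu'$ into its inlier part $\mu'_g$ and a nonzero outlier part $\mu'_b$, and derives a contradiction by showing that an optimal coupling could be strictly improved by swapping retained outlier mass (which must travel farther than $\diam(S)$) for removed inlier mass (which travels at most $\diam(S)$). You instead build a dual certificate: extend $c$-concave Kantorovich potentials $(g,h)$ for $\Wp(\mu,\nu)$ off $S$ by the constants $\sup_S g$ on $\supp(\alpha)$ and $\sup_S h$ on $\supp(\beta)$, with admissibility secured by the chain $\sup_S g + \sup_S h \leq \sup_S h - \inf_S h \leq \diam(S)^p < \min\{d(A,S),d(B,S),d(A,B)\}^p$, and then weak duality handles \emph{every} feasible pair at once. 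Your checks (disjointness of $S,A,B$, the splitting $\mu'=\mu'|_S+\mu'|_A$ with $\mu'|_S \leq (1-\eps)\mu$, the cancellation of the $(\sup_S g)t$ and $(\sup_S h)s$ terms, and the separate coupling argument for $p=\infty$) are all sound. What the dual route buys is that it does not presuppose existence of primal minimizers and it makes transparent the structure predicted by \cref{prop:RWp-dual-maximizers} (potentials flat at their extremes over the outlier supports); what the paper's route buys is the stronger conclusion that $(1-\eps)\mu,(1-\eps)\nu$ are the \emph{unique} minimizers, a fact it reuses in the proofs of \cref{prop:radius,prop:robust-consistency}. (Your strict inequality $\diam(S)^p < d(A,\cdot)^p$ would let you recover strictness, hence uniqueness, with a little extra work, but you do not need it for the stated claim.) Two cosmetic caveats: existence of exactly optimal potentials uses compactness of $\cX$ (the paper's standing assumption here), though $\delta$-optimal potentials would do; and the degenerate case $\eps=1$, which you set aside, is equally outside the scope of the paper's own argument.
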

Our proof in \cref{prf:exact-recovery} uses an infinitesimal perturbation argument and shows that, when outliers are sufficiently far away, removing outlier mass is strictly better than inlier mass for minimizing $\Wp$. The appendix also discussed more flexible albeit technical assumptions under which exact recovery is possible, and provides bounds for when the robustness radius does not match the contamination level $\eps$ exactly.

\cref{prop:exact-recovery} relies on the contamination level $\eps$ being known, which is often not the case in practice. To account for this, we prove in \cref{prf:radius} that, under the same assumptions, there exists a principled approach for selecting the robustness radius when $\eps$ is unknown.

\begin{restatable}[Robustness radius for unknown~$\eps$]{proposition}{radius}
\label{prop:radius}
Assume the setting of \cref{prop:exact-recovery} and let $D$ be the maximum of $d(\supp(\alpha),S)$, $d(\supp(\beta),S)$, and $d(\supp(\alpha),\supp(\beta))$. Then, for $p \in [1,\infty)$, we have
\begin{align*}
    & \textstyle{\dv{\delta}}\! \left[(1\!-\!\delta)\Wp^{\delta}(\tilde{\mu},\tilde{\nu})^p\right] \leq -D^p, & \delta \in [0,\eps)\\
    & \textstyle{\dv{\delta}}\!\left[(1\!-\!\delta)\Wp^{\delta}(\tilde{\mu},\tilde{\nu})^p\right] \geq -\diam(S)^p > -D^p,&\delta \in (\eps,1]
\end{align*}
at the (all but countably many) points where the derivative is defined.
\end{restatable}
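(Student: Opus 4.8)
The plan is to compute the derivative of the function $\delta \mapsto (1-\delta)\Wp^{\delta}(\tilde\mu,\tilde\nu)^p$ using the mass-addition reformulation together with the structure of optimal perturbations under the separation hypothesis. First, I would apply \cref{prop:mass-addition} to write $(1-\delta)\Wp^{\delta}(\tilde\mu,\tilde\nu)^p = \tfrac{1-\delta}{1-2\delta/(1+\delta)} \inf_{\mu'\in\cB_{\delta'}(\tilde\mu),\,\nu'\in\cB_{\delta'}(\tilde\nu)} \Wp(\mu',\nu')^p$ with $\delta' = \delta/(1+\delta)$; after simplifying the prefactor this should reduce, up to an algebraic identity, to a clean expression of the form $g(\delta) = $ (constant in $\delta$ after rescaling) $\cdot\,\Phi(\delta)$ where $\Phi(\delta) := \inf \Wp$ over the enlarged balls. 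Alternatively — and I suspect more transparently — I would work directly from the original definition \eqref{eq:RWp}: since $\tilde\mu = (1-\eps)\mu+\eps\alpha$, removing mass $\delta$ from $\tilde\mu$ and renormalizing gives, for $\delta \le \eps$, the option of removing exactly the $\delta$-fraction of the outlier component $\alpha$, which (invoking the separation argument behind \cref{prop:exact-recovery}) is in fact optimal; for $\delta > \eps$ one is forced to cut into inlier mass of $\mu$ and $\nu$.

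The key steps, in order: (1) Characterize optimal perturbations on each side of $\eps$. For $\delta \in [0,\eps)$, by the infinitesimal-perturbation argument of \cref{prf:exact-recovery}, the optimal $\mu',\nu'$ remove only outlier mass (from $\alpha$, resp.\ $\beta$), so after renormalization $\Wp^{\delta}(\tilde\mu,\tilde\nu) = \big(\tfrac{1-\eps}{1-\delta}\big)^{1/p}\Wp^{\,\delta'}(\mu,\nu)$-type expressions collapse and in particular $(1-\delta)\Wp^{\delta}(\tilde\mu,\tilde\nu)^p = (1-\eps)\Wp(\mu,\nu)^p + (\eps-\delta)\cdot(\text{cost of transporting the retained outlier mass})$; since each unit of outlier mass in $\alpha$ sits at distance $> D$ from $\supp(\beta)$ and from $S$, transporting it costs more than $D^p$ per unit, whence the derivative is $\le -D^p$. (2) For $\delta \in (\eps,1]$, all $\eps$ outlier mass is already gone and the marginal unit of additional removed mass can be taken from $\mu\wedge\nu$ or matched inlier mass within $S$, costing at most $\diam(S)^p$ per unit; this yields a derivative $\ge -\diam(S)^p$, and since $D > \diam(S)$ by hypothesis we get the strict inequality $-\diam(S)^p > -D^p$. (3) Justify differentiability almost everywhere: the function $\delta \mapsto (1-\delta)\Wp^{\delta}(\tilde\mu,\tilde\nu)^p$ is, by \cref{prop:RWp-eps-dependence}(ii)–(iii) and the prefactor, monotone and locally Lipschitz in $\delta$ on each of $[0,\eps)$ and $(\eps,1]$ (concavity would be even cleaner if available), hence differentiable off a countable set, and at points of differentiability the one-sided difference-quotient bounds from (1)–(2) pass to the derivative.

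The main obstacle I anticipate is step (1): cleanly showing that for $\delta < \eps$ it is \emph{strictly suboptimal} to remove any inlier mass, and quantifying the cost of the retained outlier mass as exactly $> D^p$ per unit rather than merely $\ge \diam(S)^p$. This requires pushing the separation hypothesis — specifically that $d(\supp\alpha,S)$, $d(\supp\beta,S)$, and $d(\supp\alpha,\supp\beta)$ all exceed $\diam(S)$ — through the coupling: any coupling of the perturbed measures must either transport outlier mass of $\alpha$ to $\supp\beta$ (cost $> D^p$) or to inlier locations in $S$ (cost $> D^p$ by the $d(\supp\alpha,S)$ bound), and one must rule out cancellation effects where moving outlier mass somehow reduces inlier transport cost. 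The infinitesimal/exchange argument from \cref{prf:exact-recovery} handles exactly this, so the bulk of the work is in adapting that argument to track the \emph{derivative} (i.e., the marginal rate) rather than just the sign of a finite perturbation, and in treating the boundary point $\delta=\eps$ (where the two regimes meet and differentiability may genuinely fail).
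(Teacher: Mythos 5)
Your proposal is correct and follows essentially the same route as the paper: it uses the structure of optimal perturbations from the exact-recovery argument (outlier mass is removed first, so optimizers at nearby radii are nested), bounds the difference quotient by the per-unit transport cost of the marginal removed mass ($>D^p$ for outlier mass when $\delta<\eps$, $\leq\diam(S)^p$ for inlier mass when $\delta>\eps$), and invokes monotonicity/continuity for a.e.\ differentiability. The coupling-restriction step you flag as the remaining work is exactly how the paper quantifies the marginal rate, so no essential idea is missing.
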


As $(1-\delta)\Wp^{\delta}(\tilde{\mu},\tilde{\nu})^p$ is continuous and decreasing in $\delta$ by \cref{prop:RWp-eps-dependence}, we have identified an ``elbow'' in this curve located exactly at the true contamination level~$\eps$. 

Next, we return to the statistical setting and show how to obtain exact recovery assuming that the fraction of corrupted samples vanishes as $n \to \infty$.
Consider i.i.d.\ samples $X_1, X_2, \dots \sim \mu$ and $Y_1, Y_2, \dots \sim \nu$, arbitrarily corrupted to obtain $\tilde{X}_1, \tilde{X}_2, \dots$ and $\tilde{Y}_1, \tilde{Y}_2, \dots$. We measure the level of corruption by the rates
$\tau_\mu(n) = \frac{1}{n}\sum_{i=1}^n \mathds{1}_{\{X_i \neq \tilde{X_i}\}}$ and $\tau_\nu(n) = \frac{1}{n}\sum_{i=1}^n \mathds{1}_{\{Y_i \neq \tilde{Y_i}\}}$.
For this result, $\cX$ may be unbounded, and only the clean distributions $\mu$ and $\nu$ need compact support.
Let $\tilde{\mu}_n$ and $\tilde{\nu}_n$ denote the empirical measures associated with $\tilde{X}_1, \dots, \tilde{X}_n$ and $\tilde{Y}_1, \dots, \tilde{Y}_n$, respectively.

\begin{restatable}[Robust consistency]{proposition}{robustconsistency}
\label{prop:robust-consistency}
Fix $p < \infty$ and suppose that $\tau_\mu(n) \lor \tau_\nu(n) = O(n^{-a})$ almost surely, for some $a > 0$. Then, setting $\eps_n = n^{-b}$, for any $0 < b < a$, we have
\begin{equation*}
    \left|\Wp(\mu,\nu) - \Wp^{\eps_n}\left(\tilde{\mu}_n, \tilde{\nu}_n\right)\right| \to 0
\end{equation*}
as $n \to \infty$, almost surely.
\end{restatable}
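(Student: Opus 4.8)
The plan is to control $\big|\Wp(\mu,\nu) - \Wp^{\eps_n}(\tilde{\mu}_n,\tilde{\nu}_n)\big|$ by inserting the clean empirical measures $\hat{\mu}_n, \hat{\nu}_n$ as an intermediate object and splitting the error into a robustification gap and a sampling/corruption gap. First I would observe that, because at most $\tau_\mu(n) n = O(n^{1-a})$ of the samples $\tilde{X}_1,\dots,\tilde{X}_n$ differ from the clean $X_1,\dots,X_n$, the contaminated empirical measure $\tilde{\mu}_n$ lies in a TV-ball of radius $\tau_\mu(n)$ around $\hat{\mu}_n$; in particular $\hat{\mu}_n$ restricted to the uncorrupted indices is a common sub-measure, so $\tilde{\mu}_n$ is a feasible ``$\eps_n$-contamination'' of $\hat{\mu}_n$ once $\eps_n \geq \tau_\mu(n)$. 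Since $\eps_n = n^{-b}$ with $b < a$, almost surely $\tau_\mu(n) \vee \tau_\nu(n) = O(n^{-a}) \ll \eps_n$ for all large $n$, so this feasibility kicks in eventually.

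Next I would use \cref{prop:RWp-eps-dependence}(ii)--(iii) to bridge $\Wp^{\eps_n}(\tilde{\mu}_n,\tilde{\nu}_n)$ to $\Wp(\hat{\mu}_n,\hat{\nu}_n) = \Wp^0(\hat{\mu}_n,\hat{\nu}_n)$. The monotonicity (ii) and the quantitative bound (iii), applied with $\eps = 0$ and $\eps' = \eps_n$ (together with the analogous lower direction obtained by viewing $\hat\mu_n$ as a contamination of $\tilde\mu_n$ at level comparable to $\eps_n$), give
\begin{equation*}
\big|\Wp^{\eps_n}(\tilde{\mu}_n,\tilde{\nu}_n) - \Wp(\hat{\mu}_n,\hat{\nu}_n)\big| \leq \big[1-(1-\eps_n)^{1/p}\big]\diam(\cX') + 4\,\diam(\cX')\,\eps_n^{1/p} \xrightarrow[n\to\infty]{} 0,
\end{equation*}
where $\diam(\cX')$ is the diameter of a compact set containing $\supp(\mu)\cup\supp(\nu)$ and (for large $n$, where few corrupted points contribute) the supports of the empirical measures — here one must be slightly careful since the corrupted samples could be far away, but they carry total mass at most $\tau_\mu(n)\vee\tau_\nu(n)$, which is $o(\eps_n)$, so their contribution to $\Wp^{\eps_n}$ can be absorbed since robustification removes up to $\eps_n$ mass from each measure. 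Concretely: for $n$ large, remove the corrupted atoms first (cost $0$, uses mass $\tau_\mu(n) \leq \eps_n$), reducing to a comparison of clean empirical measures with a residual robustness budget $\eps_n - \tau_\mu(n) \to 0$, then apply (iii).

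Finally I would invoke the classical Glivenko–Cantelli-type result for the Wasserstein distance — $\Wp(\hat{\mu}_n,\mu) \to 0$ and $\Wp(\hat{\nu}_n,\nu) \to 0$ almost surely for compactly supported $\mu,\nu$ (e.g.\ via \citet{weed2019} or the strong law for empirical Wasserstein distances) — combined with the triangle inequality for $\Wp$ to conclude $\Wp(\hat{\mu}_n,\hat{\nu}_n) \to \Wp(\mu,\nu)$ a.s. Chaining the three pieces gives the claim. The main obstacle I anticipate is the bookkeeping around the corrupted atoms potentially sitting at arbitrarily large distance: since $\cX$ may be unbounded, one cannot naively apply the $\diam(\cX)$-dependent bound (iii), and the argument must carefully exploit that the robustification budget $\eps_n$ strictly dominates the corruption fraction $\tau_\mu(n)\vee\tau_\nu(n)$ so that \emph{all} corrupted mass is deletable, after which only compactly supported clean empirical measures remain and a diameter-dependent bound is legitimate. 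Making this deletion-then-compare step rigorous (including verifying the resulting measures are valid feasible perturbations in the sense of \eqref{eq:RWp} or its mass-addition form \eqref{eq:mass-addition}) is the crux.
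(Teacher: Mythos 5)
Your decomposition (contaminated robust distance $\to$ clean empirical distance $\to$ population distance) matches the paper's, and your upper-bound direction---deleting the corrupted atoms as a feasible perturbation once $\eps_n \geq \tau_\mu(n)\lor\tau_\nu(n)$---is exactly the paper's first inequality; the closing Glivenko--Cantelli step is also fine. The gap is in the \emph{lower} bound on $\Wp^{\eps_n}(\tilde{\mu}_n,\tilde{\nu}_n)$. ``Removing the corrupted atoms first'' selects one feasible point of the infimum defining $\Wp^{\eps_n}$ and therefore only yields an upper bound; it says nothing about what the optimal perturbation and coupling actually do. The optimizer may retain corrupted atoms (e.g., pairing a far-away corrupted $\tilde{X}_i$ with a nearby corrupted $\tilde{Y}_j$ at negligible cost) while spending its budget deleting inlier mass, and your fallback---``viewing $\hat{\mu}_n$ as a contamination of $\tilde{\mu}_n$'' and applying \cref{prop:RWp-eps-dependence}(iii)---fails for two reasons: $\hat{\mu}_n \notin \cB_{\eps}(\tilde{\mu}_n)$ in general (corruption replaces samples rather than adding mass, so $\hat{\mu}_n \not\geq (1-\eps)\tilde{\mu}_n$), and (iii) carries a $\diam(\cX)$ factor that is unavailable on an unbounded space. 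Your displayed inequality involving $\diam(\cX')$ is thus not justified as written.

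The paper closes precisely this gap with a structural argument you would need to supply: recycling the infinitesimal-perturbation argument from \cref{prop:exact-recovery}, it shows that any optimal coupling $\pi$ for the optimal $\Wp(\mu',\nu')$ satisfies $\|d\|_{L^\infty(\pi)} \leq \diam(\supp(\mu+\nu))$---if mass were transported farther, one endpoint would have to be corrupted and deleting that contamination instead of inlier mass would strictly decrease the cost. With this in hand, restricting $\mu',\nu'$ to $\supp(\mu+\nu)$ loses at most $\tau_\mu(n)\lor\tau_\nu(n)$ mass, changes the cost by at most $(\tau_\mu(n)\lor\tau_\nu(n))\,\diam(\supp(\mu+\nu))^p$, and produces measures feasible for $\Wp^{2\eps_n}(\mu_n,\nu_n)$; only then does the diameter-dependent bound of \cref{prop:RWp-eps-dependence}(iii) legitimately apply, on the compact set $\supp(\mu+\nu)$ rather than on $\cX$. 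You correctly flag this ``deletion-then-compare'' step as the crux, but flagging it is not resolving it: as written, the proposal establishes only one of the two inequalities needed.
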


That is, so long as the fraction of (potentially unbounded) corrupted samples converges to 0, there exists a schedule for robustness radii so that the correct distance is recovered. The proof in \cref{prf:robust-consistency} uses ideas from the previous result to alleviate potential problems arising from large corruptions.%

\begin{remark}[Median-of-means]
This consistency result mirrors that presented by \cite{staerman21} for a median-of-means (MoM) estimator $\mathcal{W}_\mathrm{MoM}$. They produce a robust estimate for $\Wone$ by partitioning the contaminated samples into blocks and replacing the each mean appearing in the $\Wone$ dual with a median of block means, where the number of blocks depends on the contamination fractions. We remark that when $\tau_\mu \lor \tau_\nu = \Omega(1)$ and contaminations are stochastic---i.e., each $\tilde{X}_i$ and $\tilde{Y}_i$ are sampled from some $\tilde{\mu} \in \cB_{\eps}(\mu)$ and $\tilde{\nu} \in \cB_{\eps'}(\nu)$, respectively---we have $\mathcal{W}_\mathrm{MoM}(\tilde{\mu}_n,\tilde{\nu}_n) \to \Wone(\tilde{\mu},\tilde{\nu})$. Hence, this approach cannot provide guarantees in the vein of \cref{sec:robust}, since it may be that $\Wone(\tilde{\mu},\tilde{\nu}) \gg \Wone(\mu,\nu)$.
\end{remark}

\section{Applications}\label{SEC:applications}

We now move to applications of the proposed robust OT framework. We first discuss computational aspects and provide an algorithm to compute $\RWp$ based on its dual form. The algorithm, which requires only a minor modification to standard OT solvers, is then used for generative modeling from contaminated data. %

\subsection{Computation}
\label{subsec:comp}
In practice, similarity between datasets is often measured using the so-called neural network (NN) distance $\mathsf{d}_\cF(\mu,\nu) = \sup_{\theta \in \Theta} \int f_\theta \dd \mu - \int f_\theta \dd \nu$, where $\cF=\{ f_\theta \}_{\theta \in \Theta}$ is a NN class \citep{arora2017}. Given two batches of samples $X_1, \dots, X_n \sim \mu$ and $Y_1, \dots, Y_n \sim \nu$, we approximate integrals by sample means and estimate the supremum via stochastic gradient ascent. Namely, we follow the update rule
\[\theta_{t+1} \gets \theta_t + \frac{\alpha_t}{n} \sum_{i=1}^n \big[ \nabla_\theta f_\theta(X_i) - \nabla_\theta f_\theta(Y_i) \big].\]
When $\cF$ approximates the class of 1-Lipschitz functions, we approach a Kantorovich dual and obtain an estimate for $\Wone(\mu,\nu)$, which is the core idea behind the WGAN \citep{arjovsky_wgan_2017} (see \cite{makkuva2020} for an extension to $\Wtwo$).

By virtue of our duality theory, OT solvers as described above can be easily adapted to the robust framework. For the purpose of generative modeling, the one-sided robust distance defined by
\begin{equation}
\label{eq:one-sided}
    \mathsf{W}_p^\eps(\mu \| \nu)\coloneqq\inf_{\substack{0 \leq \mu' \leq \mu\\ \mu'(\cX) = 1-\eps}}\Wp\left(\frac{\mu'}{1-\eps},\nu\right)
\end{equation} 
is most appropriate, since data may be contaminated but generated samples from the model are not. In \cref{app:asymmetric-results}, we translate our duality result to this setting,
finding that
\begin{align}
\label{eq:one-sided-dual}
\begin{split}
(1-\eps)\RWp(\mu \| \nu)^p = \sup_{f \in C_b(\cX)}\int\mspace{-3mu} f \dd \mu + (1- \eps)\mspace{-3mu}\int\mspace{-3mu} f^c \dd \nu - \eps \sup_{x \in \cX}f(x).
\end{split}
\end{align}
This representation motivates a modified gradient update for the corresponding NN distance\footnote{An inspection of the proof of \cref{thm:RWp-dual} reveals that a similar duality holds for NN distances, and, generally,~IPMs} estimate:
\begin{align*}
    \theta_{t+1} \gets \theta_t + \frac{\alpha_t}{n} \sum_{i=1}^n \big[ \nabla_\theta f_\theta(X_i) - (1 - \eps) \nabla_\theta f_\theta(Y_i) \big]
    - \eps \, \nabla_\theta f_\theta(X_{i^*}),\quad i^* \in \argmax_i f_\theta(X_i)
\end{align*}
(note here that we are formally computing Clarke subgradients \citep{clarke90}). For example, in a \texttt{PyTorch} implementation of WGAN, the modified update can be implemented with a one-line adjustment of code:

\newcommand\Small{\fontsize{13}{9.2}\selectfont}
\newcommand*\LSTfont{\Small\ttfamily\SetTracking{encoding=*}{-60}\lsstyle}
\begin{lstlisting}[language=Python,basicstyle=\LSTfont,deletendkeywords={max},columns=fullflexible]
score = f_data.mean() - f_generated.mean() # old
score = f_data.mean() - (1-eps)*f_generated.mean()  - eps*f_data.max() # new
\end{lstlisting}
Due to the non-convex and non-smooth nature of the objective, formal optimization guarantees seem challenging to obtain, and we defer this exploration for future work. Nevertheless, as we will see next, this approach proves quite fruitful in practice. Full experimental details for the following results and comparisons with existing work are provided in \cref{app:experiment-details}, and code is provided at \url{https://github.com/sbnietert/robust-OT}. Computations were performed on a cluster machine equipped with a NVIDIA Tesla V100. %

\subsection{Generative Modeling}
\label{subsec:gen_mod}

\begin{wrapfigure}{r}{0.5\textwidth}
\centering
\vspace{-5mm}
\hspace{2mm} Robust GANs \hspace{11mm} Standard GANs\\
\vspace{2mm}
\includegraphics[width=.48\linewidth]{./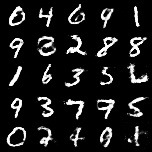}
\includegraphics[width=.48\linewidth]{./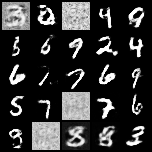}\\
\vspace{0.75mm}
\includegraphics[width=.48\linewidth]{./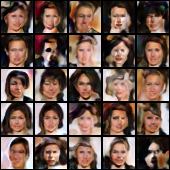}
\includegraphics[width=.48\linewidth]{./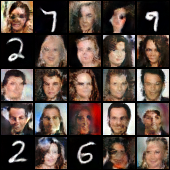}
\caption{
(top): samples generated by robustified (left) and standard (right) WGAN-GP after training on corrupted MNIST dataset. (bottom): samples generated by robustified (left) and standard (right) StyleGAN 2 after training on corrupted CelebA-HQ dataset (left).}\label{fig:generated-samples}
\end{wrapfigure}

We examine outlier-robust generative modeling using the modification suggested above. We train two WGAN with gradient penalty (WGAN-GP) %
models \citep{gulrajani2017improved} on a contaminated dataset with 80\% MNIST data and 20\% random noise, running both with a standard selection of hyper-parameters but adjusting one to compute gradient updates according to the robust objective with a selection of $\eps = 0.25$. In \cref{fig:generated-samples} (top), we display generated samples produced by both networks after processing 125k contaminated batches of 64 samples. The effect of outliers is clearly mitigated by training with the robustified objective.

One subtlety to the above is that WGAN-GP employs a regularizer $R(f_\theta)$ penalizing large gradients, rather than enforcing a Lipschitz constraint. In \cref{app:regularization}, we provide results demonstrating that our duality result still holds in the presence of many types of regularizers. This further motivates us to apply the robustness technique to more sophisticated GANs which incorporate additional regularization, in particular StyleGAN~2 \citep{karras2020}. We again train two off-the-shelf models using contaminated data---this time, 80\% CelebA-HQ face photos and 20\% MNIST data---with one tweaked to perform gradient updates according to the robust objective with $\eps = 0.25$. We present generated samples in \cref{fig:generated-samples} (bottom). Once again, the modified objective enables learning a model that is largely free of outliers despite being trained on a contaminated dataset.

\section{Summary and Concluding Remarks}\label{SEC:summary}

This paper introduced the outlier-robust Wasserstein distance $\RWp$, which measures proximity between probability distributions while discarding an $\eps$-fraction of outlier mass. We conducted a theoretical study of its structural and statistical properties, covering robustness guarantees, strong duality, characteristics of optimal perturbations and dual potentials, regularity in~$\eps$, and empirical convergence. The derived dual form amounts to a simple modification of classic Kantorovich duality that regularizes the objective w.r.t.\ the sup-norm of the potential function. This gave rise an elementary robustification technique for duality-based OT solvers (by introducing said penalty), which enables adapting computational methods for classic $\Wp$ to compute $\RWp$. Leveraging this, we demonstrated the utility of $\RWp$ for generative modeling with contaminated data.
 
Future research directions are abundant, both theoretical and practical. First, the derived duality can be leveraged for many high-dimensional real-world inference tasks where outlier-robustness is desired, although a large-scale empirical exploration is beyond the scope of this~work. 
We also aim to sharpen our statistical bounds and provide finite-sample robustness guarantees. In the one-sample case, we expect a tighter upper bound that depends on an upper (robust) intrinsic dimension to hold when only a small amount of high-dimensional mass is present. Indeed, high-dimensional regions are harder to sample and we therefore  expect $\RWp$ to treat those as outliers in the empirical approximation setting. 
The following generalizations of the considered robust framework are also of interest: (i) general transportation costs (many of our structural results immediately generalize); (ii) unbounded domains; and (iii) other base and constraining distances (e.g., IPMs, NN distances, etc.).

\subsubsection*{Acknowledgements}
The authors would like to thank Benjamin Grimmer for helpful conversations surrounding minimax optimization, as well as Jacob Steinhardt and Adam Sealfon for useful discussions on robust statistics. S. Nietert was supported by the National Science Foundation (NSF) Graduate Research Fellowship under Grant DGE-1650441. R. Cummings was supported in part by NSF grants CNS-1850187 and CNS-1942772 (CAREER), a Mozilla Research Grant, and a JPMorgan Chase Faculty Research Award. Z. Goldfeld was supported by NSF grants CCF-1947801 and CCF-2046018 (CAREER), and the 2020 IBM Academic Award.

\bibliographystyle{abbrvnat}
\bibliography{references}

\begin{appendices}
\crefalias{section}{appendix}
\crefalias{subsection}{appendix}

\section{Proofs of Main Results}
We begin with some further preliminaries and notation. When convenient, we write $\E_\mu[f(X)]$ for the expectation of $f(X)$ with $X \sim \mu$. Next, we recall the general definition of $\Wp$ between non-negative measures of equal (but potentially not unit) mass. Given two measures $\mu,\nu \in \cM_+(\cX)$ with $\mu(\cX) = \nu(\cX)$, let $\Pi(\mu,\nu) = \{ \mu(\cX) \pi : \pi \in \Pi(\mu/\mu(\cX),\nu/\nu(\cX)) \}$, and, for $p \in [1,\infty]$, define
\begin{equation*}
    \Wp(\mu,\nu) \coloneqq \inf_{\pi \in \Pi(\mu,\nu)} \|d\|_{L^p(\pi)}.
\end{equation*}
We further recall the two-potential version of Kantorovich duality, which states that for $p \in [1,\infty)$,
\begin{equation}
\label{eq:kantorovich-duality}
\Wp(\mu,\nu)^p = \sup_{\substack{f,g \in C_b(\cX) \\ f(x) + g(y) \leq d(x,y)^p}} \int f \dd \mu + \int g \dd \nu.
\end{equation}
Finally, we define the push-forward of a measure $\mu \in \cM_+(\cX)$ w.r.t.\ a measurable map $T:\cX \to \cY$ by $T_\# \mu (B) \coloneqq \mu(T^{-1}(B))$ for all measurable $B \subseteq \cY$. 

\subsection{Preliminary Results}

We begin with a useful fact and lemma.

\begin{fact}
\label{fact:ignore-shared-mass}
For $p \in [1,\infty]$, $\eps \in [0,1]$, and $\mu,\nu \in \cP(\cX)$, we have $\Wp(\mu,\nu) \leq \Wp(\mu - \mu \land \nu, \nu - \mu \land \nu)$.
\end{fact}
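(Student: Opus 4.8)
The plan is to exhibit an explicit coupling of $\mu$ and $\nu$ whose transport cost is at most $\Wp(\mu - \mu\land\nu,\,\nu - \mu\land\nu)$. Write $\sigma \coloneqq \mu \land \nu \in \cM_+(\cX)$, and set $\mu_0 \coloneqq \mu - \sigma$, $\nu_0 \coloneqq \nu - \sigma$. Both $\mu_0$ and $\nu_0$ are nonnegative (by definition of the infimum of measures) and have common mass $\mu(\cX) - \sigma(\cX) = 1 - \sigma(\cX) = \nu(\cX) - \sigma(\cX)$, so $\Wp(\mu_0,\nu_0)$ is well-defined via the equal-mass extension recalled above. The key idea is that we may transport the shared part $\sigma$ ``for free'' using the identity (diagonal) coupling, which incurs zero cost since $d(x,x) = 0$, and transport the leftover parts $\mu_0,\nu_0$ optimally.

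Concretely, first I would handle the case $p < \infty$. Let $\pi_0 \in \Pi(\mu_0,\nu_0)$ be an optimal coupling for $\Wp(\mu_0,\nu_0)$ (existence of optimizers for $\Wp$ between equal-mass positive measures is standard), and let $\pi_\sigma \coloneqq (\mathrm{id},\mathrm{id})_\# \sigma$ be the pushforward of $\sigma$ under the diagonal map $x \mapsto (x,x)$, which is supported on $\{(x,x) : x \in \cX\}$. Define $\pi \coloneqq \pi_0 + \pi_\sigma$. Then $\pi \in \cM_+(\cX\times\cX)$ has total mass $1$, and its first marginal is $(\pi_0)_1 + (\pi_\sigma)_1 = \mu_0 + \sigma = \mu$, and similarly its second marginal is $\nu_0 + \sigma = \nu$; hence $\pi \in \Pi(\mu,\nu)$. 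Its transport cost is
\begin{equation*}
\int_{\cX\times\cX} d(x,y)^p \dd\pi(x,y) = \int d(x,y)^p \dd\pi_0(x,y) + \int d(x,x)^p \dd\sigma(x) = \Wp(\mu_0,\nu_0)^p + 0,
\end{equation*}
so $\Wp(\mu,\nu)^p \leq \Wp(\mu_0,\nu_0)^p$, which is the claim after taking $p$th roots. For $p = \infty$, the same construction works: $\|d\|_{L^\infty(\pi)} = \max\{\|d\|_{L^\infty(\pi_0)},\, \|d\|_{L^\infty(\pi_\sigma)}\} = \max\{\Winfty(\mu_0,\nu_0),\,0\} = \Winfty(\mu_0,\nu_0)$, using that $\pi_\sigma$ is concentrated on the zero set of $d$.

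The argument is essentially a one-line observation once the right objects are in place, so there is no serious obstacle; the only points requiring a little care are (i) confirming that $\mu_0 = \mu - \mu\land\nu$ and $\nu_0 = \nu - \mu\land\nu$ are genuinely nonnegative Radon measures of equal mass, so that $\Wp(\mu_0,\nu_0)$ is meaningful under the extended definition, and (ii) verifying the marginal computation $\pi_1 = \mu$, $\pi_2 = \nu$ at the level of Borel sets, which follows since for any Borel $B$ we have $\pi(B\times\cX) = \pi_0(B\times\cX) + \sigma(\{x \in B : x \in \cX\}) = \mu_0(B) + \sigma(B) = \mu(B)$. One edge case worth noting: if $\mu = \nu$ then $\mu_0 = \nu_0 = 0$, both sides are $0$, and the inequality holds trivially; if $\mu \land \nu = 0$ (mutually singular), the statement is a tautology. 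I would present the $p < \infty$ case in full and remark that $p = \infty$ is identical.
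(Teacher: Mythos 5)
Your proof is correct and is essentially identical to the paper's argument: both add the diagonal coupling $(\Id,\Id)_\#(\mu\land\nu)$ to a coupling of the residual measures and observe that the diagonal piece contributes zero cost. The extra care you take with marginals, equal masses, and the $p=\infty$ case is fine but does not change the substance.
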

\begin{proof}
For any feasible coupling $\pi \in \Pi(\mu - \mu \land \nu, \nu - \mu \land \nu)$ for the second distance, we have that $\pi' = \pi + (\Id,\Id)_\#(\mu \land \nu) \in \Pi(\mu,\nu)$ is feasible for the first distance with $\|d\|_{L^p(\pi')} = \|d\|_{L^p(\pi)}$.
\end{proof}

The following is a helpful rewriting of the $\RWp$ primal problem.

\begin{lemma}
\label{lem:conservative-estimate}
For $p \in [1,\infty]$, $\eps \in [0,1]$, and $\mu,\nu \in \cP(\cX)$, we have
\begin{equation}
\RWp(\mu,\nu) = \inf_{\substack{\mu',\nu' \in \cP(\cX):\\ \mu \in \cB_\eps(\mu'), \: \nu \in \cB_\eps(\nu')}} \Wp(\mu',\nu') \label{eq:mass-removal-2}.
\end{equation}
\end{lemma}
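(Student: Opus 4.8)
The plan is to prove \eqref{eq:mass-removal-2} by a double inequality, translating feasible points of each optimization problem into feasible points of the other while preserving the objective value. Write $Q$ for the right-hand side of \eqref{eq:mass-removal-2}. The case $\eps = 0$ is immediate, since then $\cB_0(\mu') = \{\mu'\}$ forces $\mu' = \mu$ and $\nu' = \nu$, while $\mathsf{W}_p^0 = \Wp$ by \cref{prop:RWp-eps-dependence}; so I would assume $\eps \in (0,1)$ throughout.

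For the bound $\RWp(\mu,\nu) \le Q$, I would start from an arbitrary pair $\mu',\nu' \in \cP(\cX)$ feasible for the right-hand side, i.e.\ $\mu = (1-\eps)\mu' + \eps\alpha$ and $\nu = (1-\eps)\nu' + \eps\beta$ with $\alpha,\beta \in \cP(\cX)$, and propose the ``clean parts'' $\hat\mu := (1-\eps)\mu'$, $\hat\nu := (1-\eps)\nu'$ as competitors in \eqref{eq:RWp}. One checks $\hat\mu \le \mu$ (their difference is $\eps\alpha \ge 0$), $\hat\mu(\cX) = 1-\eps$, and $\|\mu - \hat\mu\|_\tv = \eps \le \eps$, so $(\hat\mu,\hat\nu)$ is feasible; since $\hat\mu/\hat\mu(\cX) = \mu'$ and $\hat\nu/\hat\nu(\cX) = \nu'$, its objective value is exactly $\Wp(\mu',\nu')$, and taking the infimum gives $\RWp(\mu,\nu) \le Q$.

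For the reverse bound $Q \le \RWp(\mu,\nu)$, I would take an arbitrary pair $\mu'',\nu'' \in \cM_+(\cX)$ feasible for \eqref{eq:RWp}, set $m := \mu''(\cX)$ and $m' := \nu''(\cX)$, and note that $\mu'' \le \mu$ forces $\|\mu - \mu''\|_\tv = 1-m$, so the TV constraints give $m, m' \ge 1-\eps$. The candidate for the right-hand side is the renormalization $\bar\mu := \mu''/m$, $\bar\nu := \nu''/m'$. The point to verify is $\mu \in \cB_\eps(\bar\mu)$: since $(1-\eps)\bar\mu = \tfrac{1-\eps}{m}\mu'' \le \mu'' \le \mu$ (using $\tfrac{1-\eps}{m} \le 1$) and $\mu - (1-\eps)\bar\mu$ has total mass $\eps$, the measure $\alpha := \eps^{-1}\bigl(\mu - (1-\eps)\bar\mu\bigr)$ is a probability measure realizing $\mu = (1-\eps)\bar\mu + \eps\alpha$; the same works for $\bar\nu$. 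Then $\Wp(\bar\mu,\bar\nu)$ equals the $\RWp$-objective at $(\mu'',\nu'')$, and taking the infimum over $\mu'',\nu''$ yields $Q \le \RWp(\mu,\nu)$, completing the proof.

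The only step with any subtlety is the reverse direction, where $\mu''$ is permitted to carry mass strictly between $1-\eps$ and $1$: one must confirm that renormalizing it to a probability measure $\bar\mu$ and then scaling back down by $1-\eps$ keeps the result dominated by $\mu$, which is precisely where $m \ge 1-\eps$ enters. Everything else is routine bookkeeping with total-variation norms of nonnegative measures.
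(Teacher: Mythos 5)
Your proof is correct and follows essentially the same route as the paper's (much terser) argument: the paper's proof consists exactly of the two observations you spell out, namely that feasible $\mu''$ for \eqref{eq:RWp} may be rescaled to have mass $1-\eps$ without changing the normalized objective, and that $\mu \in \cB_\eps(\mu')$ if and only if $(1-\eps)\mu' \leq \mu$. Your two directions are just the two implications of that equivalence carried out explicitly, so there is nothing to add beyond noting that the (degenerate) endpoint $\eps=1$ you set aside is harmless.
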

\begin{proof}
Here, we use that optimal perturbed measures for the original primal \eqref{eq:RWp} may be taken to have mass exactly $1-\eps$ (since any feasible measures may be scaled down until this is the case, without changing the original objective due to its normalization). We further observe that $\mu \in \cB_\eps(\mu')$ if and only if $(1-\eps)\mu' \leq \mu$.
\end{proof}

\subsection{Proof of \cref{thm:population-limit-robustness}}
\label{prf:robustness}

Given $\cD \subseteq \cP(\cX)$ and $0 \leq \eps \leq 1$, define the moduli of continuity
\begin{align*}
    \mathfrak{m}_-(\cD,\eps) \coloneqq \sup_{\substack{\kappa,\kappa' \in \cD\\ \kappa \in \cB_\eps(\kappa')}} \Wp(\kappa,\kappa'), \qquad \mathfrak{m}_+(\cD,\eps) \coloneqq \sup_{\substack{\kappa \in \cD, \,\kappa' \in \cP(\cX) \\ \kappa \in \cB_\eps(\kappa')}} \Wp(\kappa,\kappa').
\end{align*}
The following lemma shows that these quantities characterize minimax risk, mirroring arguments from \cite{donoho88}.

\begin{lemma}
For any $\cD \subseteq \cP(\cX)$ and $\eps < 1$, we have
\begin{equation*}
    \frac{1}{2} \mathfrak{m}_-(\cD,\eps) \leq R(\cD,\eps) \leq 4\mathfrak{m}_+(\cD,2\eps).
\end{equation*}
\end{lemma}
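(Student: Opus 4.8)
The plan is to prove the two inequalities separately, using the standard Le Cam / Donoho-style two-point argument for the lower bound and a direct estimator construction with a triangle-inequality argument for the upper bound.

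\textbf{Lower bound $\tfrac12 \mathfrak{m}_-(\cD,\eps) \le R(\cD,\eps)$.} Fix any estimator $\hat W$. The idea is that a contaminated observation cannot distinguish two clean pairs whose Huber balls overlap. Concretely, suppose $\kappa,\kappa' \in \cD$ with $\kappa \in \cB_\eps(\kappa')$; then also $\kappa' \in \cB_\eps(\kappa)$ is \emph{false} in general, so I instead use the symmetric trick: if $\kappa \in \cB_\eps(\kappa')$, write $\kappa = (1-\eps)\kappa' + \eps\alpha$. Consider the two clean problems $(\mu,\nu) = (\kappa',\kappa')$ and $(\mu,\nu) = (\kappa,\kappa)$... actually the cleaner route is: take clean pairs $(\kappa,\kappa')$ and $(\kappa',\kappa)$ — wait, these need $\kappa,\kappa' \in \cD$, which holds. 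For the pair $(\kappa,\kappa')$ the adversary can produce the common contamination $\tilde\mu = \tilde\nu = (1-\eps)(\kappa \wedge \kappa')/(\kappa\wedge\kappa')(\cX) \cdot(\dots)$; more simply, since $\kappa \in \cB_\eps(\kappa')$, there is a common point $\tilde\rho \in \cB_\eps(\kappa) \cap \cB_\eps(\kappa')$ — for instance $\tilde\rho = (1-\eps)\kappa' + \eps\alpha = \kappa$ lies in $\cB_\eps(\kappa)$ trivially (take $\alpha' = \kappa$) and in $\cB_\eps(\kappa')$ by hypothesis. So the adversary can force the contaminated pair $(\tilde\mu,\tilde\nu) = (\kappa,\kappa)$ whether the truth is $(\kappa,\kappa)$ or... hmm, I need the two clean pairs to have different $\Wp$ values. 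Use clean pairs $(\kappa,\kappa)$ and $(\kappa',\kappa)$: the first has $\Wp = 0$, the second has $\Wp(\kappa',\kappa) = \Wp(\kappa,\kappa')$. Both pairs admit the common contaminated observation $(\kappa,\kappa)$ — for $(\kappa',\kappa)$ because $\kappa \in \cB_\eps(\kappa')$ lets us corrupt $\kappa' \rightsquigarrow \kappa$ and $\kappa \rightsquigarrow \kappa$ is $\eps$-corruption-free. Hence $|\hat W(\kappa,\kappa) - 0| + |\hat W(\kappa,\kappa) - \Wp(\kappa,\kappa')| \ge \Wp(\kappa,\kappa')$ by the triangle inequality, so $\sup \ge \tfrac12 \Wp(\kappa,\kappa')$. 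Taking the supremum over all such $\kappa,\kappa'$ gives the claim.

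\textbf{Upper bound $R(\cD,\eps) \le 4\mathfrak{m}_+(\cD,2\eps)$.} Here I use the estimator suggested in the surrounding text: given $\tilde\mu,\tilde\nu$, pick \emph{any} $\hat\mu,\hat\nu \in \cD$ with $\tilde\mu \in \cB_\eps(\hat\mu)$, $\tilde\nu \in \cB_\eps(\hat\nu)$ (such exist since the true clean measures witness feasibility), and output $\hat W = \Wp(\hat\mu,\hat\nu)$. Then by the triangle inequality for $\Wp$,
\begin{equation*}
  |\Wp(\hat\mu,\hat\nu) - \Wp(\mu,\nu)| \le \Wp(\hat\mu,\mu) + \Wp(\hat\nu,\nu).
\end{equation*}
To bound $\Wp(\hat\mu,\mu)$: both $\hat\mu,\mu \in \cD$, and both have $\tilde\mu$ in their $\eps$-Huber ball, i.e. $\tilde\mu = (1-\eps)\hat\mu + \eps\beta = (1-\eps)\mu + \eps\gamma$. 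This forces $\hat\mu$ and $\mu$ to be close in total variation: $(1-\eps)\|\hat\mu - \mu\|_\tv = \|\eps\gamma - \eps\beta\|_\tv \le 2\eps$, and more usefully $\hat\mu \wedge \mu \ge (1-\eps)\hat\mu \wedge (1-\eps)\mu$-type bounds show $\hat\mu \in \cB_{2\eps/(1-\eps)}(\mu)$ — I'd argue $\hat\mu = \tilde\mu/(1-\eps) - \eps\beta/(1-\eps)$... cleaner: $(1-\eps)\hat\mu \le \tilde\mu$ and $(1-\eps)\mu \le \tilde\mu$, so $\hat\mu$ and $\mu$ both dominate nothing directly, but one shows $\mu \ge \tilde\mu - \eps\mathbf{1} \ge (1-\eps)\hat\mu$... rescaling, $\mu \in \cB_{\eps'}(\hat\mu)$ with $\eps' = \eps/(1-\eps) \le 2\eps$ for $\eps \le 1/2$. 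Then $\Wp(\hat\mu,\mu) \le \mathfrak{m}_+(\cD, 2\eps)$ by definition of $\mathfrak{m}_+$ (taking $\kappa = \mu \in \cD$, $\kappa' = \hat\mu \in \cD \subseteq \cP(\cX)$, with $\kappa \in \cB_{2\eps}(\kappa')$). Symmetrically for $\Wp(\hat\nu,\nu)$. Summing gives $\le 2\mathfrak{m}_+(\cD,2\eps)$; the factor $4$ in the statement leaves slack (possibly the clean $\eps \le 1/2$ vs. the precise $\eps/(1-\eps)$ bookkeeping, or a factor-of-2 in the TV normalization convention, accounts for it), so I would just track the constants carefully and land at something $\le 4\mathfrak{m}_+(\cD,2\eps)$.

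\textbf{Main obstacle.} The delicate point is the total-variation / Huber-ball bookkeeping in the upper bound: showing precisely that if $\tilde\mu$ lies in the $\eps$-Huber ball of both $\hat\mu$ and $\mu$ then $\mu$ lies in the $2\eps$-Huber ball of $\hat\mu$ (given $\eps \le 1/2$), being careful with the renormalization $\eps \mapsto \eps/(1-\eps)$ and the paper's convention that $\|\cdot\|_\tv$ omits the factor $1/2$. Everything else is a routine triangle-inequality argument. I would double-check whether the constant should be $2$ rather than $4$ and, if so, simply note that the weaker bound with $4$ suffices for Theorem~\ref{thm:population-limit-robustness}.
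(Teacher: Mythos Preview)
Your lower bound is correct and matches the paper's two-point argument.

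Your upper bound has a genuine gap. The step where you conclude ``$\mu \in \cB_{2\eps}(\hat\mu)$'' from $\tilde\mu \in \cB_\eps(\mu)\cap\cB_\eps(\hat\mu)$ is false in general. Concretely, with $\eps=0.1$, take $\mu=\tfrac89\delta_0+\tfrac19\delta_1$, $\hat\mu=\tfrac89\delta_0+\tfrac19\delta_2$, and $\tilde\mu=0.8\,\delta_0+0.1\,\delta_1+0.1\,\delta_2$. Then $\tilde\mu\in\cB_{0.1}(\mu)\cap\cB_{0.1}(\hat\mu)$, but $(1-2\eps)\hat\mu\not\le\mu$ (it puts mass at $2$ where $\mu$ has none) and symmetrically $(1-2\eps)\mu\not\le\hat\mu$. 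So \emph{neither} $\mu\in\cB_{2\eps}(\hat\mu)$ nor $\hat\mu\in\cB_{2\eps}(\mu)$ holds, and you cannot invoke $\mathfrak m_+(\cD,2\eps)$ directly on the pair $(\mu,\hat\mu)$.

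The missing idea, and the reason the constant is $4$ rather than $2$, is to pass through the normalized meet $\mu'\coloneqq(\mu\wedge\hat\mu)/(\mu\wedge\hat\mu)(\cX)$. From $(1-\eps)\mu\le\tilde\mu$ and $(1-\eps)\hat\mu\le\tilde\mu$ one gets $(\mu\wedge\hat\mu)(\cX)\ge(1-2\eps)/(1-\eps)\ge 1-2\eps$, hence $(1-2\eps)\mu'\le\mu\wedge\hat\mu\le\mu$ and likewise $\le\hat\mu$; i.e.\ both $\mu,\hat\mu\in\cB_{2\eps}(\mu')$. Each of $\Wp(\mu,\mu')$ and $\Wp(\hat\mu,\mu')$ is then bounded by $\mathfrak m_+(\cD,2\eps)$ (note $\mu'$ need not lie in $\cD$, which is exactly why $\mathfrak m_+$ allows $\kappa'\in\cP(\cX)$), giving $\Wp(\mu,\hat\mu)\le 2\,\mathfrak m_+(\cD,2\eps)$ and the overall bound $4\,\mathfrak m_+(\cD,2\eps)$.
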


\begin{proof}
For the upper bound, fix $\mu,\nu \in \cD$, and consider any corrupted versions $\tilde{\mu} \in \cB_\eps(\mu), \tilde{\nu} \in \cB_\eps(\mu)$. 
To estimate $\Wp$, take any $\hat{\mu},\hat{\nu}$ such that $\tilde{\mu} \in \cB_\eps(\hat{\mu})$ and $\tilde{\nu} \in \cB_\eps(\hat{\nu})$, and set $\hat{W}(\tilde{\mu},\tilde{\nu}) = \Wp(\hat{\mu},\hat{\nu})$.
Since $\|\hat{\mu} - \mu\|_\tv, \|\hat{\nu} - \nu\|_\tv \leq 4\eps$, the midpoint distributions $\mu' = \frac{1}{(\mu \land \hat{\mu})(\cX)} \mu \land \hat{\mu}$ and $\nu' = \frac{1}{(\nu \land \hat{\nu})(\cX)} \nu \land \hat{\nu}$ must satisfy $\mu,\hat{\mu} \in \cB_{2\eps}(\mu')$ and $\nu,\hat{\nu} \in \cB_{2\eps}(\nu')$. We thus bound
\begin{align*}
    |\Wp(\tilde{\mu},\tilde{\nu}) - \Wp(\mu,\nu)| &= |\Wp(\hat{\mu},\hat{\nu}) - \Wp(\mu,\nu)|\\
    &\leq \Wp(\hat{\mu},\mu') + \Wp(\mu',\mu) + \Wp(\hat{\nu},\nu') + \Wp(\nu',\nu)\\
    &\leq 4 \mathfrak{m}_+(\cD,2\eps).
\end{align*}
For the lower bound, fix any feasible $\kappa,\kappa' \in \cD$ with $\kappa \in \cB_\eps(\kappa')$, and suppose that both contaminated measures $\tilde{\mu}$ and $\tilde{\nu}$ are equal to $\kappa$. This is consistent with two cases: (1) the clean measures are $\mu = \nu = \kappa'$ (in which case $\Wp(\mu,\nu) = 0$); and (2) the clean measures are $\mu = \kappa$ and $\nu = \kappa'$ (in which case $\Wp(\mu,\nu) = \Wp(\kappa,\kappa')$). Hence, no matter which distance a robust proxy $\hat{W}$ assigns to $\tilde{\mu}$ and $\tilde{\nu}$, it must incur error at least $\Wp(\kappa,\kappa')/2$ in one of these cases, proving the lemma.
\end{proof}

Next, we bound these moduli for the family of interest. For the upper modulus, we borrow a standard result from the robust mean estimation literature.

\begin{lemma}
\label{lem:mean-resilience-bdd-moments}
Suppose $\cX = \R$ and $\mu \in \cD_q$ for $q \geq 1$. Then, for any $\nu \in \cP(\cX)$ such that $\mu \in \cB_\eps(\nu)$, we have $\left|\E_\mu[X] - \E_\nu[X]\right| \lesssim M \eps^{1 - 1/q} \land M (1-\eps)^{-1/q}$.
\end{lemma}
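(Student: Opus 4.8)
\textbf{Proof plan for Lemma~\ref{lem:mean-resilience-bdd-moments}.} The plan is to reduce the statement about the shift in means under Huber contamination to the standard notion of resilience of a distribution with bounded $q$th moment, and then to handle the two regimes of $\eps$ separately. Write $\mu \in \cB_\eps(\nu)$, which by \eqref{eq:huber-ball} means $\mu = (1-\eps)\nu + \eps\alpha$ for some $\alpha \in \cP(\cX)$; equivalently $(1-\eps)\nu \leq \mu$, so $\nu$ is obtained from $\mu$ by deleting a mass-$\eps$ piece $\mu - (1-\eps)\nu$ and renormalizing. Thus it suffices to bound how much the mean can move when an $\eps$-fraction of mass is deleted from $\mu$ and the remainder rescaled to a probability measure. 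I would first record the elementary identity $\E_\mu[X] - \E_\nu[X] = \frac{\eps}{1-\eps}\big(\E_\nu[X] - \E_\alpha[X]\big)$ (and also the symmetric form $= \eps\big(\E_\mu[X] - \E_\alpha[X]\big)/(1-\eps)$ after substituting), so the whole problem is to control $\E_\nu[X] - \E_\alpha[X]$ or a related quantity using only the moment bound on $\mu$.

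\medskip
\noindent For the bound $\lesssim M\eps^{1-1/q}$ (the relevant regime when $\eps$ is bounded away from $1$), the key step is the classical resilience estimate: if $\mu$ has $\|d(\cdot,x_0)\|_{L^q(\mu)} \le M$ (WLOG $x_0 = \E_\mu[X]$ after centering, or one absorbs the shift), then for any event $A$ with $\mu(A) \ge 1-\eps$,
\begin{equation*}
    \left| \E_\mu[X \mid A] - \E_\mu[X] \right| \;\le\; \frac{1}{\mu(A)}\,\E_\mu\big[|X - \E_\mu[X]|\,\mathds{1}_{A^c}\big] \;\le\; \frac{\mu(A^c)^{1-1/q}}{\mu(A)}\,\|X - \E_\mu[X]\|_{L^q(\mu)},
\end{equation*}
by Hölder's inequality with exponents $q$ and $q/(q-1)$. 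Taking $A$ to be the support of the retained mass (so $\mu(A^c) \le \eps$) and noting $\mu(A) \ge 1-\eps \ge 1/2$ gives $|\E_\nu[X] - \E_\mu[X]| \le 2 M \eps^{1-1/q}$, hence the first term in the min. The only care needed is that $\nu$ is not literally $\mu$ conditioned on a set but $\mu$ with an arbitrary mass-$\le\eps$ signed perturbation removed; since $\mu - (1-\eps)\nu \ge 0$ has total mass $\eps$, it is $\eps$ times a probability measure, and the same Hölder bound applies to $\E_\mu[X] - \E_\nu[X] = \frac{1}{1-\eps}\E\big[(X-\E_\mu X)\,\dd(\mu-(1-\eps)\nu)\big]$ directly.

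\medskip
\noindent For the bound $\lesssim M(1-\eps)^{-1/q}$ (relevant when $\eps \to 1$ and the previous bound degrades), I would instead bound $|\E_\nu[X] - \E_\mu[X]|$ crudely by controlling $\E_\nu|X - \E_\mu X|$: since $(1-\eps)\nu \le \mu$, we have $(1-\eps)\,\E_\nu|X - \E_\mu X| \le \E_\mu|X - \E_\mu X| \le M$ by Hölder/Jensen, so $\E_\nu|X - \E_\mu X| \le M/(1-\eps)$, and a better exponent $(1-\eps)^{-1/q}$ comes from applying Hölder to $\E_\nu|X-\E_\mu X| = \int |X - \E_\mu X| \,\dd\nu \le \|X - \E_\mu X\|_{L^q(\mu)} \cdot \|\dd\nu/\dd\mu\|_{L^{q/(q-1)}(\mu)}$ and bounding the density ratio, which is $\le 1/(1-\eps)$ pointwise, so its $L^{q/(q-1)}(\mu)$ norm is at most $(1-\eps)^{-1+1/q}\cdot$ (wait — one must be careful: $\|\dd\nu/\dd\mu\|_{L^{q/(q-1)}(\mu)}^{q/(q-1)} = \int (\dd\nu/\dd\mu)^{q/(q-1)}\dd\mu \le (1-\eps)^{-1/(q-1)}\int (\dd\nu/\dd\mu)\dd\mu = (1-\eps)^{-1/(q-1)}$, giving norm $(1-\eps)^{-1/q}$). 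Combining both regimes via $\min$ yields the claim.

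\medskip
\noindent \textbf{Main obstacle.} The conceptual content is light; the main thing to get right is the bookkeeping converting ``$\mu \in \cB_\eps(\nu)$'' into a statement about deleting/reweighting mass from $\mu$ so that all moment control is exerted on $\mu$ (the distribution we have an assumption on) rather than on $\nu$ or $\alpha$, together with tracking the constant factors of $1/(1-\eps)$ so that the two claimed bounds emerge with the right exponents. The Hölder step itself is routine; the subtlety is purely in which measure plays which role and in not accidentally needing a moment bound on the contamination.
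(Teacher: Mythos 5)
Your approach is sound and, unlike the paper, self-contained: the paper's entire proof of this lemma is a one-line citation to Lemma E.2 of \citet{zhu2019resilience} (applied with Orlicz function $\psi(t)=t^q$), which packages exactly the H\"older/resilience computation you carry out by hand. Your two branches are the right ones. Writing $\mu=(1-\eps)\nu+\eps\alpha$, the piece $\eps\alpha=\mu-(1-\eps)\nu$ is a positive measure of mass $\eps$ dominated by $\mu$, so H\"older against $\mu$ gives the $\eps^{1-1/q}$-type bound; the pointwise density bound $\dd\nu/\dd\mu\le(1-\eps)^{-1}$ gives the $(1-\eps)^{-1/q}$ bound. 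Both correctly place all moment control on $\mu$, the only measure carrying an assumption, and the recentering at $\E_\mu[X]$ costs only a factor of $2$ by Jensen, absorbed by $\lesssim$. One bookkeeping slip: from $\E_\mu[X]=(1-\eps)\E_\nu[X]+\eps\E_\alpha[X]$ one gets $\E_\mu[X]-\E_\nu[X]=\eps\left(\E_\alpha[X]-\E_\nu[X]\right)=\frac{\eps}{1-\eps}\left(\E_\alpha[X]-\E_\mu[X]\right)$, not the identities you recorded; this is harmless since your actual bounds never use them.

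The one substantive point: your first branch genuinely produces $\frac{1}{1-\eps}\,M\eps^{1-1/q}$, and that $\frac{1}{1-\eps}$ cannot be removed. So ``combining via $\min$'' yields $M\bigl(\frac{\eps^{1-1/q}}{1-\eps}\land(1-\eps)^{-1/q}\bigr)$, which coincides with the stated $M\eps^{1-1/q}\land M(1-\eps)^{-1/q}$ only when $\eps$ is bounded away from $1$ (the regime you explicitly flag, where your proof is complete). As $\eps\to 1$ the stated minimum tends to $M$, and that bound is in fact false: take $q=2$, $M=1$, $\eps=1-\delta$ with $\delta$ small, $R=\delta^{-1/2}$, $\mu=(1-\delta)\delta_0+\delta\,\delta_R\in\cD_2$, $\nu=\delta_R$, $\alpha=\delta_0$; then $\mu=(1-\eps)\nu+\eps\alpha$ but $|\E_\mu[X]-\E_\nu[X]|\asymp\delta^{-1/2}=(1-\eps)^{-1/q}\gg M$. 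So the mismatch lies in the lemma's statement rather than in your argument --- the version you prove is the correct one, and it is also all the paper needs downstream (in its only application the contamination parameter is $\tau=\eps\lor(1-\eps)\ge 1/2$, where the $(1-\tau)^{-p/q}$ branch is both valid and sufficient). If you write this up, state the conclusion as $M\eps^{1-1/q}$ for $\eps\le 1/2$ and $M(1-\eps)^{-1/q}$ for $\eps\ge 1/2$, rather than as the literal minimum over all $\eps$.
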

\begin{proof}
We apply Lemma E.2 of \cite{zhu2019resilience} with the Orlicz function $\psi(t) = t^{q}$, function class $\cF = \{ x \mapsto \pm x\}$, and $\eta = \eps$.
\end{proof}

\begin{lemma}
For any $q > p$ and $0 \leq \eps \leq 0.99$, we have
\begin{equation*}
    \mathfrak{m}_+(\cD_q,\eps) \lesssim M \eps^{1/p - 1/q}.
\end{equation*}
\end{lemma}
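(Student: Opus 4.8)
The plan is to bound $\Wp(\kappa,\kappa')$ uniformly over all feasible pairs $\kappa \in \cD_q$, $\kappa' \in \cP(\cX)$ with $\kappa \in \cB_\eps(\kappa')$, and then take the supremum. Fix such a pair and pick $x_0 \in \cX$ with $\|d(\cdot,x_0)\|_{L^q(\kappa)} \leq M$, which exists by definition of $\cD_q$. The first observation is that $\kappa \in \cB_\eps(\kappa')$ forces $(1-\eps)\kappa' \leq \kappa$ (since $\kappa = (1-\eps)\kappa' + \eps\alpha$ with $\alpha \geq 0$), which in turn yields a moment bound on the a priori unconstrained measure $\kappa'$: namely $\|d(\cdot,x_0)\|_{L^q(\kappa')} \leq M(1-\eps)^{-1/q}$.

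Next I would isolate the shared mass. Set $m \coloneqq 1 - (\kappa \land \kappa')(\cX)$, $\rho \coloneqq \kappa - \kappa\land\kappa'$, and $\rho' \coloneqq \kappa' - \kappa\land\kappa'$. Since $\kappa\land\kappa' \geq (1-\eps)\kappa'$, we get $m \leq \eps$; moreover $\rho \leq \kappa$, $\rho' \leq \kappa'$, and $\rho(\cX) = \rho'(\cX) = m$. The obvious extension of \cref{fact:ignore-shared-mass} to equal-mass positive measures (couple the shared part on the diagonal) gives $\Wp(\kappa,\kappa') \leq \Wp(\rho,\rho')$, and the triangle inequality routed through $m\delta_{x_0}$ gives $\Wp(\rho,\rho') \leq \|d(\cdot,x_0)\|_{L^p(\rho)} + \|d(\cdot,x_0)\|_{L^p(\rho')}$, since $\Wp(\rho, m\delta_{x_0})^p = \int d(\cdot,x_0)^p \dd\rho$.

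The crux is a resilience-type estimate: if $\rho \leq \gamma$ with $\rho(\cX) \leq \eps$, then $\int d(\cdot,x_0)^p \dd\rho \leq \|d(\cdot,x_0)\|_{L^q(\gamma)}^p \, \eps^{1 - p/q}$. This follows by writing $\rho = h \dd\gamma$ with $0 \leq h \leq 1$ (Radon--Nikodym, using $\rho \leq \gamma$), applying Hölder with conjugate exponents $q/p$ and $q/(q-p)$, and using $h^{q/(q-p)} \leq h$ to bound $\|h\|_{L^{q/(q-p)}(\gamma)} \leq \big(\int h \dd\gamma\big)^{(q-p)/q} \leq \eps^{1 - p/q}$. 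Applying this with $\gamma = \kappa$ and then $\gamma = \kappa'$ gives $\|d(\cdot,x_0)\|_{L^p(\rho)} \leq M\eps^{1/p - 1/q}$ and $\|d(\cdot,x_0)\|_{L^p(\rho')} \leq M(1-\eps)^{-1/q}\eps^{1/p - 1/q}$.

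Combining the displays, $\Wp(\kappa,\kappa') \leq M\eps^{1/p - 1/q}\big(1 + (1-\eps)^{-1/q}\big)$, and $\eps \leq 0.99$ bounds $(1-\eps)^{-1/q} \leq 100^{1/q} \leq 100$; taking the supremum over feasible $\kappa,\kappa'$ yields $\mathfrak{m}_+(\cD_q,\eps) \lesssim M\eps^{1/p - 1/q}$. The only mildly delicate point is that the resilience estimate is applied to a sub-measure $\rho$ rather than a restriction $\gamma|_A$, which is precisely why the truncation $h^{q/(q-p)} \leq h$ is needed; everything else is a routine combination of the triangle inequality and Hölder. One could alternatively invoke the Orlicz-resilience bound (Lemma E.2 of \citet{zhu2019resilience}) with function class $\{x \mapsto \pm d(x,x_0)^p\}$, but the direct computation is self-contained.
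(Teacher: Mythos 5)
Your proof is correct, and while it shares the paper's overall skeleton --- strip the shared mass via \cref{fact:ignore-shared-mass}, then route the triangle inequality through a Dirac at the moment center --- the key estimate is obtained by a genuinely different and more self-contained route. The paper keeps the full contaminating distributions: it bounds $\Wp(\kappa,\kappa') \leq \eps^{1/p}\Wp(\kappa',\alpha) \leq \eps^{1/p}\bigl(\Wp(\kappa',\delta_y)+\Wp(\alpha,\delta_y)\bigr)$, so the $\eps^{1/p}$ gain comes from the mass prefactor, and the remaining $p$th moments of $\kappa'$ and $\alpha$ (each a unit-mass measure) are controlled by a black-box mean-resilience lemma \citep[Lemma E.2]{zhu2019resilience} applied to $d(X,y)^p$, which has bounded $q/p$th moments under $\kappa$. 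You instead work directly with the leftover sub-measures $\rho\leq\kappa$, $\rho'\leq\kappa'$ of mass $m\leq\eps$ and prove the needed truncated-moment bound $\int d(\cdot,x_0)^p\,\dd\rho \leq \|d(\cdot,x_0)\|_{L^q(\gamma)}^p\,\eps^{1-p/q}$ by hand via Radon--Nikodym plus H\"older with the truncation $h^{q/(q-p)}\leq h$ --- which is essentially an inlined proof of the resilience lemma, specialized to sub-measures. Your version buys self-containedness and a cleaner accounting (the full $\eps^{1-p/q}$ emerges in one H\"older step, and the constants $(1-\eps)^{-1/q}\leq 100$ for $\eps\leq 0.99$ are explicit); the paper's version buys brevity by outsourcing to the robust-statistics literature and keeps the argument in the form that generalizes to other resilience classes. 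All the supporting steps you use --- $(1-\eps)\kappa'\leq\kappa$ and the induced moment bound on $\kappa'$, the equal masses $\rho(\cX)=\rho'(\cX)=m\leq\eps$, the extension of \cref{fact:ignore-shared-mass} and of the triangle inequality to equal-mass positive measures, and $\Wp(\rho,m\delta_{x_0})^p=\int d(\cdot,x_0)^p\,\dd\rho$ --- check out.
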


\begin{proof}
Without loss of generality, we suppose that $M=1$ (otherwise, one can always apply the lemma to the same metric space with distances --- and hence OT measurements --- shrunk by a factor of $M$). To control this modulus, fix any $\kappa \in \cD_q$ and $\kappa' \in \cP(\cX)$ such that $\kappa = (1-\eps)\kappa' + \eps \alpha$ for some $\alpha \in \cP(\cX)$. Take $y \in \cX$ with $\E_\kappa[d(X,\cdot)^q] \leq 1$. Writing $\tau = \eps \lor 1-\eps$, we then have
\begin{align*}
    \Wp(\kappa,\kappa') &\leq \eps^{1/p}\,\Wp(\kappa',\alpha)\\
    &\leq \eps^{1/p} \left( \Wp(\kappa',\delta_y) + \Wp(\alpha ,\delta_y)\right)\\
    &\leq 2 \eps^{1/p} \sup_{\substack{\beta \in \cP(\cX) \\ \kappa \in \cB_{\tau}(\beta)}} \E_\beta[d(X,y)^p]^{1/p}.
\end{align*}
The first inequality follows from \cref{fact:ignore-shared-mass} and the second follows from the triangle inequality for $\Wp$. To bound the remaining supremum, fix any $\beta$ such that $\kappa \in \cB_{\tau}(\beta)$. We then have
\begin{align*}
    \E_\beta[d(X,y)^p] &\leq  \E_\kappa[d(X,y)^p] + |\E_\beta[d(X,y)^p] -  \E_\kappa[d(X,y)^p]|\\
    &\leq 1 + |\E_\beta[d(X,y)^p] -  \E_\kappa[d(X,y)^p]|. 
\end{align*}
To bound $|\E_\beta[d(X,y)^p] -  \E_\kappa[d(X,y)^p]|$, we observe that for $X \sim \kappa$, the random variable $d(X,y)^p$ has bounded $q/p$th moments. Thus, \cref{lem:mean-resilience-bdd-moments} gives
\begin{equation*}
    |\E_\beta[d(X,y)^p] -  \E_\kappa[d(X,y)^p]| \lesssim \tau^{1 - p/q} \land (1-\tau)^{-p/q}.
\end{equation*}
Combining the previous bounds, we obtain
\begin{align*}
    \Wp(\kappa,\kappa') \lesssim 2\eps^{1/p}\bigl(1 + \tau^{1 - p/q} \land (1-\tau)^{-p/q}\bigr)^{1/p} \lesssim \eps^{1/p - 1/q},
\end{align*}
implying the lemma.
\end{proof}

\begin{lemma}
\label{lem:modulus-lower-bound}
For any $q > p$ and $0 \leq \eps < 1$,
\begin{equation*}
    \mathfrak{m}_-(\cD_q,\eps) \geq M \eps^{1/p - 1/q},
\end{equation*}    
so long as there exist $x,y \in \cX$ with $d(x,y) = M \eps^{-1/q}$.
\end{lemma}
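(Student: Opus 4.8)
The plan is to prove the lower bound by exhibiting an explicit feasible pair in the supremum defining $\mathfrak{m}_-(\cD_q,\eps)$, following the recipe sketched after \cref{thm:population-limit-robustness}. Pick $x,y \in \cX$ with $d(x,y) = M\eps^{-1/q}$ (such points exist by hypothesis), and set $\kappa' \coloneqq \delta_x$ and $\kappa \coloneqq (1-\eps)\delta_x + \eps\,\delta_y$. Since $\kappa = (1-\eps)\kappa' + \eps\,\delta_y$ with $\delta_y \in \cP(\cX)$, we immediately have $\kappa \in \cB_\eps(\kappa')$.

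First I would verify $\kappa,\kappa' \in \cD_q$. For $\kappa'$ this is trivial, as $\|d(\cdot,x)\|_{L^q(\kappa')} = 0$. For $\kappa$, centering at $x$ gives
\[
\|d(\cdot,x)\|_{L^q(\kappa)}^q = (1-\eps)\,d(x,x)^q + \eps\,d(x,y)^q = \eps\big(M\eps^{-1/q}\big)^q = M^q,
\]
so $\|d(\cdot,x)\|_{L^q(\kappa)} = M$ and hence $\kappa \in \cD_q$; the separating distance $M\eps^{-1/q}$ was chosen precisely to saturate the moment budget.

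Next I would compute $\Wp(\kappa,\kappa')$. Because $\kappa' = \delta_x$ is a point mass, $\Pi(\kappa',\kappa)$ consists of the single coupling $\delta_x \otimes \kappa$, so
\[
\Wp(\kappa,\kappa')^p = \int_\cX d(x,z)^p \dd\kappa(z) = \eps\,d(x,y)^p = \eps\big(M\eps^{-1/q}\big)^p = M^p\,\eps^{\,1-p/q}.
\]
Taking $p$th roots yields $\Wp(\kappa,\kappa') = M\eps^{1/p-1/q}$, and since $(\kappa,\kappa')$ is feasible in the supremum, $\mathfrak{m}_-(\cD_q,\eps) \geq M\eps^{1/p-1/q}$, as claimed.

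There is essentially no obstacle: the proof is a direct computation. The only subtlety worth flagging is that the exponent $-1/q$ on the separating distance is forced by two competing requirements — the $q$th moment must stay within $M$ (which caps $d(x,y)$ at $M\eps^{-1/q}$), while the transported $\eps$-fraction contributes $\eps\,d(x,y)^p = M^p \eps^{1-p/q}$ to the $p$th power of the transport cost — and the gap between these exponents is exactly what produces the rate $\eps^{1/p-1/q}$, matching the upper bound established in the preceding lemmas.
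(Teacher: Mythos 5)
Your proof is correct and follows the same approach as the paper: exhibit the pair $\{\delta_x,\ (1-\eps)\delta_x + \eps\delta_y\}$ with $d(x,y)=M\eps^{-1/q}$, verify membership in $\cD_q$, and compute $\Wp$ directly. One small point in your favor: the paper's proof writes $\kappa=\delta_x$ and $\kappa'=(1-\eps)\delta_x+\eps\delta_y$, but with those labels the constraint $\kappa\in\cB_\eps(\kappa')$ fails (one would need $\delta_x=(1-\eps)\kappa'+\eps\alpha$ with $\alpha\in\cP(\cX)$, which forces a negative coefficient on $\delta_y$); your assignment $\kappa'=\delta_x$, $\kappa=(1-\eps)\delta_x+\eps\delta_y$ is the one that actually satisfies $\kappa\in\cB_\eps(\kappa')$, so you have quietly corrected a labelling slip while otherwise reproducing the paper's argument, including the observation that a Dirac endpoint makes the coupling unique so $\Wp$ can be evaluated exactly.
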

\begin{proof}
Fix $\kappa = \delta_x$ and $\kappa' = (1-\eps)\delta_x + \eps \delta_y$. By design, both distributions belong to $\cD_q$ (since $\E_{\kappa'}[d(x,X)^q] = \eps d(x,y)^q = M$) and $\Wp(\kappa,\kappa') = \eps^{1/p}d(x,y) = M\eps^{1/p - 1/q}$.
\end{proof}

Finally, we prove the stated risk bound for $\RWp$.

\begin{lemma}
Fix $p \in [1,\infty]$ and $\eps \leq 1/4$. Take $\mu,\nu \in \cP(\cX)$ and let $\tilde{\mu} \in \cB_\eps(\mu), \tilde{\nu} \in \cB_\eps(\nu)$. Then, we have
\begin{equation}
\label{eq:RWp-risk-1}
    (1-3\eps)\Wp^{3\eps}(\mu,\nu) \leq \RWp(\tilde{\mu},\tilde{\nu}) \leq \Wp(\mu,\nu).
\end{equation}
Consequently, for $\mu,\nu \in \cD_q$ and $\tau = 1 - (1-3\eps)^{1/p} \in [3\eps/p,3\eps]$,
\begin{equation}
\label{eq:RWp-risk-2}
    |\RWp(\tilde{\mu},\tilde{\nu}) - \Wp(\mu,\nu)| \lesssim M \eps^{1/p - 1/q} + \tau \Wp(\mu,\nu)
\end{equation}
\end{lemma}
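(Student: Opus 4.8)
The plan is to establish the two-sided bound \eqref{eq:RWp-risk-1} first, since \eqref{eq:RWp-risk-2} follows from it by invoking \cref{prop:RWp-eps-dependence}(iv)-type estimates together with the modulus bounds already proven. For the \emph{upper} bound $\RWp(\tilde\mu,\tilde\nu) \le \Wp(\mu,\nu)$, I would use \cref{lem:conservative-estimate}: since $\tilde\mu \in \cB_\eps(\mu)$ means exactly that $\mu$ is a feasible ``de-noised'' measure $\mu'$ with $\tilde\mu \in \cB_\eps(\mu')$ — indeed $(1-\eps)\mu \le \tilde\mu$ — and likewise $\nu$ is feasible for $\tilde\nu$, the pair $(\mu,\nu)$ is admissible in the infimum defining $\RWp(\tilde\mu,\tilde\nu)$ via \eqref{eq:mass-removal-2}, giving $\RWp(\tilde\mu,\tilde\nu) \le \Wp(\mu,\nu)$ immediately.

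For the \emph{lower} bound, the idea is that any feasible perturbation in $\RWp(\tilde\mu,\tilde\nu)$ cannot drift too far from $\mu$ and $\nu$. Take optimal $\hat\mu,\hat\nu \in \cP(\cX)$ for $\RWp(\tilde\mu,\tilde\nu)$ in the form \eqref{eq:mass-removal-2}, so $\tilde\mu \in \cB_\eps(\hat\mu)$ and $\tilde\nu \in \cB_\eps(\hat\nu)$, i.e. $(1-\eps)\hat\mu \le \tilde\mu$. Combined with $\tilde\mu \in \cB_\eps(\mu)$, i.e. $(1-\eps)\mu \le \tilde\mu$, one checks via total-variation bookkeeping that $\|\hat\mu - \mu\|_\tv \le 2\eps/(1-\eps) \le 3\eps$ for $\eps \le 1/4$ (and symmetrically for $\hat\nu,\nu$); more precisely $\mu$ and $\hat\mu$ share mass at least $1 - 2\eps/(1-\eps)$, so there is a common sub-probability measure $\rho$ with $\|\mu - \rho'\|_\tv, \|\hat\mu - \rho'\|_\tv \le \delta$ for $\delta = 2\eps/(1-\eps)$ (normalizing $\rho$ to $\rho'$). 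Hence $\mu \in \cB_{\delta}(\rho')$ and $\hat\mu \in \cB_{\delta}(\rho')$, and likewise for $\nu$. Now apply the definition of $\Wp^{3\eps}$: since $3\eps \ge \delta$, $\rho'_\mu$ (the common measure for $\mu,\hat\mu$) and $\rho'_\nu$ are feasible de-noised versions, so
\begin{equation*}
    (1 - 3\eps)^{1/p}\,\Wp^{3\eps}(\mu,\nu) \;\le\; \inf \Wp(\mu',\nu') \;\le\; \Wp\!\left(\tfrac{\rho_\mu}{1-\delta},\tfrac{\rho_\nu}{1-\delta}\right),
\end{equation*}
where I will need to relate this to $\RWp(\tilde\mu,\tilde\nu) = \Wp(\hat\mu,\hat\nu)$ by a triangle-inequality argument together with the fact that $\rho'_\mu$ is close to $\hat\mu$ in $\Wp$ controlled by $\delta$ and $\diam(\cX)$ — except we want a diameter-free statement, so instead I would route through the \emph{mass-removal} characterization directly: $\hat\mu$ and $\mu$ both dominate (up to scaling $1-\delta$) the same sub-measure, making that sub-measure feasible for $\Wp^{3\eps}(\mu,\nu)$ \emph{and} bounding its $\Wp$-distance to the optimal coupling for $\Wp(\hat\mu,\hat\nu)$. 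Carefully, $\Wp^{3\eps}(\mu,\nu)$ only requires removing $3\eps$ mass from each of $\mu,\nu$ and renormalizing; taking those removed pieces to be exactly $\mu - \rho_\mu$ and $\nu - \rho_\nu$ (rescaled) shows $(1-3\eps)^{1/p}\Wp^{3\eps}(\mu,\nu) \le \Wp(\rho_\mu/\rho_\mu(\cX)\cdot(1-3\eps), \ldots)$, which after accounting for the common-mass scaling is bounded by $\Wp(\hat\mu,\hat\nu) = \RWp(\tilde\mu,\tilde\nu)$ up to the $(1-3\eps)^{1/p}$ factor. I would double-check the exact constants, since the statement asserts the clean factor $(1-3\eps)$ rather than $(1-3\eps)^{1/p}$, which suggests the bound is really on $p$-th powers or uses $(1-3\eps) \le (1-3\eps)^{1/p}$.

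Finally, \eqref{eq:RWp-risk-2} follows by combining \eqref{eq:RWp-risk-1} with the upper bound: on one side $\RWp(\tilde\mu,\tilde\nu) - \Wp(\mu,\nu) \le 0$, and on the other $\Wp(\mu,\nu) - \RWp(\tilde\mu,\tilde\nu) \le \Wp(\mu,\nu) - (1-3\eps)^{1/p}\Wp^{3\eps}(\mu,\nu) = \tau\,\Wp(\mu,\nu) + (1-3\eps)^{1/p}\big(\Wp(\mu,\nu) - \Wp^{3\eps}(\mu,\nu)\big)$, and the last difference is controlled for $\mu,\nu \in \cD_q$ by $O(M\eps^{1/p-1/q})$ using the argument behind the modulus bound $\mathfrak{m}_+(\cD_q,\cdot)$ — indeed $\Wp(\mu,\nu) - \Wp^{3\eps}(\mu,\nu) \le \sup \Wp(\mu,\mu') + \Wp(\nu,\nu')$ over $3\eps$-trimmings, which is a moment-type quantity of the same order. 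The bound $\tau \in [3\eps/p, 3\eps]$ is elementary calculus on $t \mapsto 1-(1-t)^{1/p}$. The main obstacle I anticipate is getting the diameter-free control of $\Wp(\mu,\nu) - \Wp^{3\eps}(\mu,\nu)$ for the moment family \emph{without} circular dependence on the very risk bound being proven; I expect this is handled by the same $\delta_y$-centering trick used in the $\mathfrak{m}_+$ lemma, bounding trimmed-mass transport by $L^p(\mu)$-moments of $d(\cdot, y)$.
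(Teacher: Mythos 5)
Your proposal follows essentially the same route as the paper's proof: the upper bound via feasibility of $(\mu,\nu)$ in the reformulation of \cref{lem:conservative-estimate}; the lower bound by intersecting the optimal de-noised measures for $\RWp(\tilde{\mu},\tilde{\nu})$ with the clean $\mu,\nu$ and restricting the optimal coupling (the paper does this with unnormalized trimmings $\mu_0'\land(1-\eps)\mu$, you with normalized ones, which is the same computation); and \eqref{eq:RWp-risk-2} via the triangle inequality and the modulus $\mathfrak{m}_+(\cD_q,3\eps)$, exactly as in the paper. Your observation that the natural constant is $(1-3\eps)^{1/p}$ and that $(1-3\eps)\leq(1-3\eps)^{1/p}$ closes the gap is also correct. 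Two caveats. First, your mass bookkeeping is off by a factor of two in a direction that matters: from $(1-\eps)\hat{\mu}\leq\tilde{\mu}$ and $(1-\eps)\mu\leq\tilde{\mu}$ one gets $(\mu\land\hat{\mu})(\cX)\geq 1-\eps/(1-\eps)$ (equivalently $\|\mu-\hat{\mu}\|_\tv\leq 2\eps/(1-\eps)$ under the paper's convention $\|\gamma\|_\tv=|\gamma|(\cX)$), not shared mass $1-2\eps/(1-\eps)$; with your looser figure, restricting the coupling on both sides retains only $1-4\eps/(1-\eps)$ mass, which is below $1-3\eps$ for all $\eps$, so the argument as written would not reach $\Wp^{3\eps}$ — you need the sharper count, under which the retained mass is $1-2\eps/(1-\eps)\geq 1-3\eps$ for $\eps\leq 1/3$. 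Second, the decisive step relating the common sub-measures to $\Wp(\hat{\mu},\hat{\nu})$ must go through restricting an optimal coupling $\pi\in\Pi(\hat{\mu},\hat{\nu})$ to a sub-coupling whose marginals are dominated by $\mu\land\hat{\mu}$ and $\nu\land\hat{\nu}$ (so its cost can only decrease and its marginals are feasible trimmings of $\mu,\nu$); domination of measures alone does not give monotonicity of $\Wp$, and your phrasing of this step is too vague to stand on its own, though you correctly reject the diameter-dependent triangle-inequality alternative.
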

\begin{proof}
The second inequality of \eqref{eq:RWp-risk-1} follows directly from \cref{lem:conservative-estimate}. For the other direction, we begin with $\mu'_0,\nu'_0 \in \cM_+(\cX)$ feasible for the original primal \eqref{eq:RWp} for $\RWp(\tilde{\mu},\tilde{\nu})$, i.e., $\mu'_0 \leq \tilde{\mu}$, $\nu'_0 \leq \tilde{\nu}$ and $\mu'_0(\cX),\nu'_0(\cX) \geq 1-\eps$. Then, we intersect $\mu'_0$ with $(1-\eps)\mu$, intersect $\nu'_0$ with $(1-\eps)\nu$, and remove up to $\eps$ additional mass from each as needed to obtain $\mu'_1 \leq (1-\eps)\mu$ and $\nu'_1 \leq (1-\eps)\nu$ with equal mass such that $\Wp(\mu'_1,\nu'_1) \leq \Wp(\mu'_0,\nu'_0)$ and $\mu'_1(\cX) = \nu'_1(\cX) \geq 1-3\eps$. Dividing both measures by $1 - \eps$, we obtain $\mu'_2, \nu'_2$ feasible for $\Wp^{2\eps/(1-\eps)}(\mu,\nu)$ such that
\begin{align*}
    \Wp^{2\eps/(1-\eps)}(\mu,\nu) &\leq \Wp\!\left(\frac{\mu'_2}{\mu'_2(\cX)},\frac{\nu'_2}{\nu'_2(\cX)}\right)\\
    &\leq \left(\frac{1-\eps}{1-3\eps}\right)^{1/p} \, \Wp(\mu'_0,\nu'_0)\\
    &\leq \frac{(1-\eps)^{2/p}}{(1-3\eps)^{1/p}} \, \Wp\left(\frac{\mu'_0}{1-\eps},\frac{\nu'_0}{1-\eps}\right)\\
    &\leq (1-3\eps)^{-1/p} \, \Wp\left(\frac{\mu'_0}{1-\eps},\frac{\nu'_0}{1-\eps}\right)
\end{align*}
Infimizing over $\mu'_0$ and $\nu'_0$ gives $\Wp^{2\eps/(1-\eps)}(\mu,\nu) \leq (1-3\eps)^{-1/p}\,\RWp(\tilde{\mu},\tilde{\nu})$. Noting that $\Wp^{2\eps/(1-\eps)}(\mu,\nu) \geq \Wp^{3\eps}(\mu,\nu)$ gives the first inequality of \eqref{eq:RWp-risk-1}. To prove \eqref{eq:RWp-risk-2}, we first bound
\begin{align*}
   \Wp^{3\eps}(\mu,\nu) &= \inf_{\substack{0 \leq \mu' \leq \mu,\,0 \leq \nu' \leq \nu\\ \mu'(\cX) = \nu'(\cX) = 1-3\eps}} \Wp\!\left(\frac{\mu'}{1-3\eps},\frac{\nu'}{1-3\eps}\right)\\
   &\geq \Wp(\mu,\nu) - \sup_{\substack{0 \leq \mu' \leq \mu\\ \mu'(\cX) = 1-3\eps}} \Wp\left(\mu,\frac{\mu'}{1-3\eps}\right) - \sup_{\substack{0 \leq \nu' \leq \nu\\ \nu'(\cX) = 1 - 3\eps}} \Wp\left(\nu,\frac{\nu'}{1-3\eps}\right)\\
   &\geq \Wp(\mu,\nu) - 2\mathfrak{m}^+(\cD_q,3\eps).
\end{align*}
Finally, we compute
\begin{align*}
    \RWp(\tilde{\mu},\tilde{\nu}) &\geq (1-3\eps)\Wp^{3\eps}(\mu,\nu)\\
    &\geq (1 - \tau)\Wp(\mu,\nu) - 2\mathfrak{m}^+(\cD_q,3\eps)\\
    &\geq (1 - \tau)\Wp(\mu,\nu) - O(M\eps^{1/p - 1/q}),
\end{align*}
as desired.
\end{proof}

\subsection{Proof of \cref{cor:robustness-cov}}
\label{prf:robustness-cov}

For the upper bound, we observe that for $\kappa \in \cD_2^\mathrm{cov}$ with mean $\E_\kappa[X] = x_\kappa$, we have
\begin{align*}
    \E[\|x_\kappa - X\|_2^2] = \tr(\Sigma_\kappa) \leq d.
\end{align*}
Hence, we obtain the desired upper bound as an application of Theorem 1. For the other direction,
we apply \cref{lem:modulus-lower-bound} to lower bound $E(\cD_2^{\mathrm{cov}},\eps)$ by $\frac{1}{2}\Wp(\kappa,\kappa')$ where $\kappa = \delta_0$ and $\kappa' = (1-\eps)\delta_0 + \eps \cN(0,I/\eps)$. Indeed, both distributions belong to $\cD_2^{\mathrm{cov}}$ (since, in case (2), $\Sigma_{\nu} = I$) and $\Wp(\mu,\nu) \geq \eps^{1/p} \E_{X \sim \cN(0,I/\eps)}[\|X\|] = \Omega(\sqrt{d}\,\eps^{1/p - 1/2})$, as desired.

\subsection{Proof of \cref{prop:mass-addition}}
\label{prf:mass-addition}

\massaddition*

\begin{proof}
To begin, we rewrite the RHS as
\begin{equation}
\label{eq:mass-addition-step-1}
    \left(\frac{1+\eps}{1-\eps}\right)^{1/p} \inf_{\substack{\tilde{\mu} \in \cB_{\frac{\eps}{1+\eps}}(\mu) \\ \tilde{\nu} \in \cB_{\frac{\eps}{1+\eps}}(\nu)}} \Wp(\tilde{\mu},\tilde{\nu}) = \inf_{\substack{\mu' \geq \mu, \nu' \geq \nu \\ \mu'(\cX), \nu'(\cX) = 1 + \eps}} \Wp\left(\frac{\mu'}{1-\eps},\frac{\nu'}{1-\eps}\right) =: \overline{\mathsf{W}}_p^{\,\eps}(\mu,\nu).
\end{equation}
First, we prove that $\RWp(\mu,\nu) \geq \overline{\mathsf{W}}_p^{\,\eps}(\mu,\nu)$. Take $\mu' \leq \mu$ and $\nu' \leq \nu$ optimal for the original formulation \eqref{eq:RWp} of $\RWp(\mu,\nu)$, with $\mu'(\cX) = \nu'(\cX) = 1-\eps$ (see \cref{prop:minimizers} for existence of minimizers). Take $\pi \in \Pi(\mu',\nu')$ to be an optimal coupling for $\Wp(\mu',\nu')$. Then, consider the alternative perturbed measures $\mu'_+ \coloneqq \mu + (\nu - \nu')$ and $\nu'_+ \coloneqq \nu + (\mu - \mu')$, which are feasible for $\overline{\mathsf{W}}_p^{\,\eps}(\mu,\nu)$, and define the coupling $\pi_+ \in \Pi(\mu'_+,\nu'_+)$ by $\pi_+ = \pi + (\Id,\Id)_\#(\mu - \mu' + \nu - \nu')$. By construction, we have $ \|d\|_{L^p(\pi_+)} =  \|d\|_{L^p(\pi)}$, and so
\begin{equation*}
    \overline{\mathsf{W}}_p^{\,\eps}(\mu,\nu) \leq (1-\eps)^{-1/p} \|d\|_{L^p(\pi_+)} = (1-\eps)^{-1/p} \, \Wp(\mu',\nu') = \RWp(\mu,\nu),
\end{equation*}
as desired.

For the other direction, consider any $\mu' \geq \mu$ and $\nu' \geq \nu$ feasible for $\overline{\mathsf{W}}_p^{\,\eps}(\mu,\nu)$, and write $\mu' = \mu + \alpha, \nu' = \nu + \beta$ for $\alpha,\beta \in \cM_+(\cX)$ with $\alpha(\cX) = \beta(\cX) = \eps$. Take $\pi \in \cP(\mu',\nu')$ to be an optimal coupling for $\Wp(\mu',\nu')$, and let $\pi(y|x)$ be the regular conditional probability distribution such that $\nu'(\cdot) = \int_\cX \pi(\cdot|x) \dd \mu'(x)$. Informally, we next show that the added masses $\alpha$ and $\beta$ need not be moved during transport, since we might as well replace them with their destinations after transport. Formally, this requires a bit of labeling.

To start, we decompose $\nu'$ into $\nu' = \nu'_{\gets \mu} + \nu'_{\gets \alpha}$, where $\nu'_{\gets \mu}(\cdot) \coloneqq \int_\cX \pi(\cdot|x) \dd \mu(x)$ denotes the mass transported from $\mu$ to $\nu'$ via $\pi$ and $\nu'_{\gets \alpha}(\cdot) \coloneqq \int_\cX \pi(\cdot|x) \dd \alpha(x)$ denotes the mass transported from $\alpha$. Similarly, we decompose $\nu$ into $\nu_{\gets \mu} = \nu'_{\gets \mu} \land \nu$ and $\nu_{\gets \alpha} = \nu - \nu_{\gets \mu}$ and split $\beta$ into $\beta_{\gets \mu} = \nu'_{\gets \mu} - \nu_{\gets \mu}$ and $\beta_{\gets \alpha} = \beta - \beta_{\gets \mu}$. By this construction, we have
\begin{align*}
    \nu = \nu_{\gets \mu} + \nu_{\gets \alpha}, \quad \beta = \beta_{\gets \mu} + \beta_{\gets \alpha}, \quad \nu'_{\gets \mu} = \nu_{\gets \mu} + \beta_{\gets \mu}, \quad \nu'_{\gets \alpha} = \nu_{\gets \alpha} + \beta_{\gets \alpha}.
\end{align*}
Next, we arbitrarily decompose $\mu$ into $\mu_{\to \nu} + \mu_{\to \beta}$ and $\alpha$ into $\alpha_{\to \nu} + \alpha_{\to \beta}$ so that $\nu(\cdot) = \int_\cX \pi(\cdot|x) \dd(\mu_{\to \nu} + \alpha_{\to \nu})(x)$. To see that this is always possible, consider the restricted coupling $\bar{\pi} \in \Pi(\mu,\nu'_{\gets \mu})$ defined by $\bar{\pi}(A \times B) = \int_A \pi(B|x) \dd \mu(x)$, as well as the regular conditional probability distribution $\bar{\pi}(x|y)$ satisfying $\mu(\cdot) = \int_\cX \bar{\pi}(\cdot|y) \dd \nu'_{\gets \mu}(y)$. We can then set $\mu_{\to \nu}(\cdot) = \int_\cX \bar{\pi}(\cdot|y) \dd \nu_{\gets \mu}(y)$ and $\mu_{\to \beta} = \mu - \mu_{\to \nu}$. The same method works to construct $\alpha_{\to \nu}$ and $\alpha_{\to \beta}$.

 Now, consider the alternative perturbed measures $\tilde{\mu}' = \mu + \nu'_{\gets \alpha} = \mu_{\to \nu} + \nu'_{\gets \alpha} + \mu_{\to \beta}$ and $\tilde{\nu}' = \nu + \mu_{\to \beta} + \beta_{\gets \alpha} = \nu_{\gets \mu} + \nu'_{\gets \alpha} + \mu_{\to \beta}$ with  $\tilde{\pi} \in \Pi(\tilde{\mu}',\tilde{\nu}')$ defined by
\begin{align*}
    \tilde{\pi}(A \times B) &\coloneqq \int_A \pi(B|x) \dd \mu_{\to \nu}(x) + (\Id,\Id)_\#(\nu'_{\gets \alpha} + \mu_{\to \beta})(A \times B)\\
    &= \int_A \pi(B|x) \dd \mu_{\to \nu}(x) + (\mu_{\to \beta} + \nu'_{\gets \alpha})(A \cap B).
\end{align*}
Note that $\tilde{\mu}'$ and $\tilde{\nu}'$ are still feasible for the mass-addition problem and $\|d\|_{L^p(\tilde{\pi})} \leq \Wp(\mu',\nu')$, implying that $\|d\|_{L^p(\tilde{\pi})} = \Wp(\mu',\nu')$. Finally, observe that $\mu_{\to \nu} \leq \mu$ and $\nu_{\gets \mu} \leq \nu$ are feasible for the initial mass-subtraction problem \eqref{eq:RWp}, so
\begin{align*}
\RWp(\mu,\nu)^p &\leq \Wp\left(\frac{\mu_{\to \nu}}{\mu_{\to \nu}(\cX)}, \frac{\nu_{\gets \mu}}{\mu_{\to \nu}(\cX)} \right)^p\\
&= \frac{1}{\mu_{\to\nu}(\cX)} \, \Wp(\mu_{\to \nu}, \nu_{\gets \mu})^p\\
&\leq \frac{1}{\mu_{\to\nu}(\cX)} \|d\|_{L^p(\tilde{\pi})}^p\\
&= \frac{1}{\mu_{\to\nu}(\cX)} \Wp(\mu',\nu')^p\\
&=  \frac{1 - \eps}{\mu_{\to\nu}(\cX)} \overline{\mathsf{W}}_p^{\,\eps}(\mu,\nu)^p\\
&\leq \overline{\mathsf{W}}_p^{\,\eps}(\mu,\nu)^p,
\end{align*}
as desired.%
\end{proof}

\subsection{Proof of \cref{thm:RWp-dual}}
\label{prf:RWp-dual}

\RWpdual*

\begin{proof}
To start, we apply \cref{prop:mass-addition} and Kantorovich duality \eqref{eq:kantorovich-duality} to obtain
\begin{align}
    (1-\eps) \RWp(\mu,\nu)^p &= \inf_{\substack{\alpha,\beta \in \cM_+(\cX)\\ \alpha(\cX) = \beta(\cX) = \eps}} \Wp\left(\mu + \alpha,\nu + \beta \right)\\
    &= \inf_{\substack{\alpha,\beta \in \cM_+(\cX)\\ \alpha(\cX) = \beta(\cX) = \eps}} \sup_{\substack{f,g \in C_b(\cX) \\f(x)+g(y) \leq d(x,y)^p}}  \:  \int f \dd (\mu + \alpha) + \int g \dd (\nu + \beta).\label{eq:minimax}
\end{align}
By compactness of $\cX$, the infimum set is itself compact w.r.t.\ the weak topology. Having that, it is readily verified that the conditions for Sion's minimax theorem (convexity, continuity of objective, and compactness of infimum set) apply, giving
\begin{align*}
    (1-\eps) \RWp(\mu,\nu)^p &= \sup_{\substack{f,g \in C_b(\cX) \\f(x)+g(y) \leq d(x,y)^p}} \int f \dd \mu + \int g \dd \nu + \inf_{\alpha \in \cM_+(\cX), \alpha(\cX) = \eps} \int f \dd \alpha + \inf_{\beta \in \cM_+(\cX), \beta(\cX) = \eps} \int g \dd \beta\\
    &= \sup_{\substack{f,g \in C_b(\cX) \\f(x)+g(y) \leq d(x,y)^p}} \int f \dd \mu + \int g \dd \nu + \eps \left(\inf_{x \in \cX} f(x) + \inf_{y \in \cX} g(y) \right).
\end{align*}
Noting that replacing $g$ with $f^c$ preserves the constraints and can only increase the objective, we further have
\begin{align*}
      (1 - \eps) \RWp(\mu,\nu)^p &= \sup_{f \in C_b(\cX)}\: \int f \dd \mu + \int f^c \dd \nu - \eps\left(\sup_{x \in \cX} f(x) - \inf_{x \in \cX} f(x)\right)\\
       &= \sup_{f \in C_b(\cX)}\: \int f \dd \mu + \int f^c \dd \nu - \eps \Range(f),
\end{align*}
using the fact that $\inf_y f^c(y) = \inf_{x,y} d(x,y)^p - f(x) = - \sup_{x} f(x)$. The same reasoning allows us to restrict to $f = (f^c)^c$ if desired. Since adding a constant to $f$ decreases $f^c$ by the same constant, we are free to shift $f$ so that the final term equals $2\eps \|f\|_\infty$. Without shifting, we always have $\Range(f) \leq 2\|f\|_\infty$, so 
the problem simplifies to
\begin{align*}
    (1 - \eps) \RWp(\mu,\nu)^p &= \sup_{f \in C_b(\cX)}\: \int f \dd \mu + \int f^c \dd \nu - 2 \eps \|f\|_\infty,
\end{align*}
as desired. Since $\RWp(\mu,\nu) \leq \Wp(\mu,\nu)$, we can assume that the supremum set is uniformly bounded with $\|f\|_\infty \leq \Wp(\mu,\nu)/(2\eps)$. Furthermore, the argument preceding \cite[Proposition 1.11]{santambrogio2015} proves that the supremum set is uniformly equicontinuous. Since $\cX$ is compact, Arzel{\`a}–Ascoli implies that the supremum is achieved.

See \cref{app:asymmetric-results} for an extension of this result to the asymmetric setting.
\end{proof}

\subsection{Proof of \cref{prop:minimizers}}
\label{prf:minimizers-existence}

\minimizers*

\begin{proof}
We first prove existence via a compactness argument, and then turn to the lower envelope property $\mu',\nu' \geq \mu \land \nu$. This turns out to be considerably simpler to prove in the discrete setting, so we begin there and extend to the general case via a discretization argument. We already addressed the mass equality constraints in the proof of \cref{lem:conservative-estimate}. 

\paragraph{Existence:} 
Because measures in this feasible set are positive and bounded by $\mu$ and $\nu$, the set is tight, i.e. pre-compact w.r.t.\ the topology of weak convergence. If $p < \infty$, then for any sequence $\mu'_n,\nu'_n$ in the infimum set,
\begin{equation*}
    \lim_{R \to \infty} \limsup_{n \to \infty} \int_{d(x_0,x) \geq R} d(x_0,x)^p \dd \mu'_n(x) \leq \lim_{R \to \infty} \int_{d(x_0,x) \geq R} d(x_0,x)^p \dd \mu(x) = 0,
\end{equation*}
since $\mu \in \cP_p(\cX)$, with the same holding for the $\nu'_n$ sequence. Thus, by the characterization of convergence in $\Wp$ given by \cite[Theorem 6.8]{villani2009}, the infimum set is precompact w.r.t.\ $\Wp$. Finally, the mass function in the constraints is continuous w.r.t.\ the weak topology, so the infimum set is compact w.r.t.\ $\Wp$. As the objective is clearly continuous with w.r.t.\ $\Wp$, the infimum is achieved.

For $p=\infty$, we note that $\Winfty$ is lower semicontinuous on $\cP_\infty(\cX)$ w.r.t.\ the weak topology (and since supports are compact, the $\Wp$ topology, for all $p < \infty$). Indeed, for any $\mu'_n \stackrel{w}{\to} \mu'$, $\nu'_n \stackrel{w}{\to} \nu'$ in $\cP_\infty(\cX)$, we have
\begin{align*}
    \liminf_{n \to \infty} \Winfty(\mu'_n,\nu'_n) &=  \liminf_{n \to \infty} \sup_{p \to \infty} \Wp(\mu'_n,\nu'_n)\\
    &\geq \sup_{p \to \infty} \liminf_{n \to \infty} \Wp(\mu'_n,\nu'_n)\\
    &\geq \sup_{p \to \infty} \Wp(\mu',\nu') = \Winfty(\mu',\nu').
\end{align*}
The infimum of a lower semicontinuous function over a compact set is always achieved, as desired. %

\paragraph{Lower envelope (discrete case):} Having proven the existence of minimizers $\mu' \leq \mu$ and $\nu' \leq \nu$, it remains to show that we can take $\mu',\nu' \geq \mu \land \nu$. We begin with the case where $\cX$ is countable and treat measures and their mass functions interchangeably. Then, if $\mu' \geq \mu \land \nu$ fails to hold, there exists $x_0 \in \cX$ such that $\mu'(x_0) < \mu(x_0)$ and $\mu'(x_0) < \nu(x_0)$. We can further assume that $\nu(x_0) = \nu'(x_0)$. Otherwise, $a \coloneqq (\nu - \nu')(x_0) \land (\mu - \mu')(x_0)$ mass could be returned to both $\mu'$ and $\nu'$ at $x_0$ without increasing their transport cost. Indeed, for any $\pi \in \Pi(\mu',\nu')$, the modified plan $\pi + a \cdot \delta_{(x_0,x_0)}$ is valid for the new measures, $\mu'+a\delta_{x_0}$ and $\nu'+a\delta_{x_0}$ (still feasible by the choice of $a$), and attains the same cost.

Taking $\pi \in \Pi(\mu',\nu')$ optimal for $\Wp(\mu',\nu')$, we define the measure $\kappa \leq \mu'$ by $\kappa(x_0) \coloneqq 0$, $\kappa(x) \coloneqq \pi(x,x_0)$ for $x \neq x_0$. This captures the distribution of the mass that is transported away from $x_0$ w.r.t. $\pi$ when transporting $\nu'$ to $\mu'$.
By conservation of mass, $\kappa$ has total mass at least $(\nu' - \mu')(x_0) > 0$, so we can set $\tilde{\kappa} \leq \kappa$ to be a scaled-down copy of $\kappa$ with total mass $(\mu \land \nu' - \mu')(x_0) > 0$. Now, we define an alternative perturbed measure $\tilde{\mu}' \coloneqq \mu' + \tilde{\kappa}(\cX) \cdot \delta_{x_0} - \tilde{\kappa}$. By definition, we have $0 \leq \tilde{\mu}' \leq \mu'$ with $\tilde{\mu}'(\cX) = \mu'(\cX)$ and $\tilde{\mu}'(x_0) = (\mu \land \nu')(x_0)$. Furthermore, the modified plan $\tilde{\pi}$ defined by
\begin{align*}
    \tilde{\pi}(x,y) = \begin{cases}
        \pi(x,y), & y \neq x_0\\
        \frac{\kappa(\cX) - \tilde{\kappa}(\cX)}{\kappa(\cX)} \pi(x,x_0) , & x \neq x_0, y = x_0\\
        \pi(x_0,x_0) + \tilde{\kappa}(\cX), & x = y = x_0
    \end{cases}
\end{align*}
satisfies $\tilde{\pi} \in \Pi(\tilde{\mu}',\nu')$ and $\|d\|_{L^p(\tilde{\pi})} < \|d\|_{L^p(\pi)} = \Wp(\mu',\nu')$, contradicting the optimality of $\mu'$.

\paragraph{Lower envelope (general case):} For general $\cX$, we will need the following lemma, which allows us to apply our discrete argument to all settings.

\begin{lemma}[Dense approximation]
\label{lem:dense-approximation}
Let $(\cY,\rho)$ be a separable metric space with dense subset $D \subseteq \cY$. For any $y \in \cY$, let $y^\lambda$ denote a representative from $D$ with $d(y,y^\lambda) \leq \lambda$ (which exists by separability). Similarly, for any $K \subseteq \cY$, define $K^\lambda = \{ y^\lambda : y \in K\}$. Then, if $f:\cY \to \R$ is uniformly continuous, we have $\inf_{y \in K} f(y) = \lim_{\lambda \to 0} \inf_{y \in K^\lambda} f(y)$.
\end{lemma}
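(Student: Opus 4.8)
The plan is to use the uniform continuity of $f$ to sandwich $\inf_{y \in K^\lambda} f(y)$ between $\inf_{y \in K} f(y) - \eta$ and $\inf_{y \in K} f(y) + \eta$ as soon as the perturbation scale $\lambda$ is small relative to $\eta$. The whole argument is a two-sided estimate, so I would prove each inequality separately and then combine.

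Concretely, I would first fix $\eta > 0$ and apply uniform continuity to produce $\delta > 0$ such that $\rho(y,y') < \delta$ forces $|f(y) - f(y')| < \eta$, and then restrict attention to $\lambda < \delta$. For the lower bound on $\inf_{K^\lambda} f$: every element of $K^\lambda$ is $y^\lambda$ for some $y \in K$ with $\rho(y,y^\lambda) \le \lambda < \delta$, hence $f(y^\lambda) > f(y) - \eta \ge \inf_{z \in K} f(z) - \eta$; taking the infimum over $K^\lambda$ yields $\inf_{K^\lambda} f \ge \inf_K f - \eta$. For the upper bound: for each $y \in K$ the point $y^\lambda$ lies in $K^\lambda$ and satisfies $f(y^\lambda) < f(y) + \eta$, so $\inf_{K^\lambda} f \le f(y) + \eta$; infimizing over $y \in K$ gives $\inf_{K^\lambda} f \le \inf_K f + \eta$. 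Combining the two, $|\inf_{K^\lambda} f - \inf_K f| \le \eta$ for every $\lambda < \delta$, which is exactly the asserted convergence $\inf_{y \in K^\lambda} f(y) \to \inf_{y \in K} f(y)$ as $\lambda \to 0$.

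I do not expect a genuine obstacle here, as the argument is entirely soft. The only point that merits care is that neither infimum need be attained, and $\inf_K f$ could a priori be $-\infty$; I would therefore work throughout with inequalities between infima rather than selecting minimizers, and note that if $\inf_K f = -\infty$ the same two inequalities force $\inf_{K^\lambda} f = -\infty$, so the conclusion holds trivially. In the intended application $\cY$ is compact, so $f$ is bounded and this degenerate case never arises.
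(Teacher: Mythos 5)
Your proof is correct and rests on the same single ingredient as the paper's: uniform continuity of $f$ lets you compare $f(y)$ with $f(y^\lambda)$ to within $\eta$ once $\lambda$ is below the corresponding $\delta$, and then the two infima can only differ by $\eta$. The paper packages this via infimizing sequences and a $\liminf$/$\limsup$ computation, while your uniform two-sided estimate $|\inf_{y \in K^\lambda} f(y) - \inf_{y \in K} f(y)| \leq \eta$ for all $\lambda < \delta$ is a cleaner rendering of the same idea --- and in fact it directly establishes convergence of the full limit, whereas the paper's argument as written only bounds the $\liminf$ from above and the $\limsup$ from below by $\inf_{y\in K} f(y)$, leaving the final step implicit.
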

\begin{proof}
First, if $\{y_n\}_{n \in \N}$ is an infimizing sequence for $\inf_{y \in K} f(y)$, then
\begin{equation*}
    \liminf_{\lambda \to 0} \inf_{y \in K^\lambda} f(y) \leq \liminf_{n \to \infty} f(y_n^{1/n}) = \lim_{n \to \infty} f(y_n) = \inf_{y \in K} f(y),
\end{equation*}
where the second equality relies on the uniform continuity of $f$. Similarly, if $\{y_n\}_{n \in \N}$ with $y_n \in K^{\lambda_n}$, $\lambda_n \searrow 0$, is an limiting sequence for $\limsup_{\lambda \to 0} \inf_{y \in K^\lambda} f(x)$, then we can write $y_n = x_n^{1/n}$ for $x_n \in K$ and find
\begin{equation*}
    \limsup_{\lambda \to 0} \inf_{y \in K^\lambda} f(y) = \limsup_{n \to \infty} f(y_n) = \limsup_{n \to \infty} f(x_n) \geq \inf_{y \in K} f(y),
\end{equation*}
where the second equality again follows from the uniform continuity of $f$. The two inequalities imply the lemma.
\end{proof}

We will essentially apply this lemma to the space of measures $\cY = ((1-\eps)\cP_p(\cX), \Wp)$, letting $D$ be its dense subset of discrete measures. 
For any $\lambda > 0$, separability of $\cX$ implies the existence of a countable partition $\{A_i^\lambda\}_{i \in \N}$ of $\cX$ such that $\diam(A_i^\lambda) \leq \lambda$ for all $i \in \N$, with representatives $x_i^\lambda \in A_i^\lambda$ for all $i \in \N$. For any measure $\kappa \in \cY$, we let $\kappa^\lambda \in \cM_+(\cup_{i \in \N} \{x_i^\lambda\})$ denote the discretized measure defined by $\kappa^\lambda \coloneqq \sum_{i \in \N} \kappa(A_i^\lambda) \cdot \delta_{x_i^\lambda}$. For $p < \infty$, we have
\begin{equation*}
    \Wp(\kappa,\kappa^\lambda)^p \leq \sum_{i \in \N} \Wp(\kappa|_{A_i^\lambda},\kappa(A_i^\lambda) \cdot \delta_{x_i^\lambda})^p \leq \sum_{i \in \N} \kappa(A_i^\lambda) \lambda^p = \lambda^p,
\end{equation*}
and so $\Wp(\kappa,\kappa^\lambda) \leq \lambda$ (including $p = \infty$ by continuity). This discretization lifts to sets as in the lemma.
Now, for any $\alpha, \beta \in \cM_+(\cX)$ with $\alpha \leq \beta$, let $K_{\alpha,\beta} = \{ \kappa \in \cY : \alpha \leq \kappa \leq \beta, \kappa(\cX) = 1-\eps \}$. Importantly, this choice of discretization satisfies $K_{\alpha,\beta}^\lambda = K_{\alpha^\lambda,\beta^\lambda}$. Indeed if $\kappa \in K_{\alpha,\beta}$, then $\kappa^\lambda \in K_{\alpha^\lambda,\beta^\lambda}$ by our discretization definition, and total mass is preserved. Likewise, if $\kappa \in K_{\alpha^\lambda,\beta^\lambda}$, then we can consider
\begin{align*}
    \tilde{\kappa} \coloneqq \sum_{i : x_i \in \supp(\kappa')} \alpha|_{A_i^\lambda} + c_i (\beta - \alpha)|_{A_i^\lambda},
\end{align*}
where $c_i \in [0,1]$ are chosen such that $\tilde{\kappa}(\{i\}) = \kappa(A_i)$ for all $i$. By this construction, we have $\kappa = \tilde{\kappa}^\lambda$ and $\tilde{\kappa} \in K_{\alpha,\beta}$ with $\tilde{\kappa}(\cX) = \kappa(\cX) = 1-\eps$, as desired. We also observe that $\alpha^\lambda \land \beta^\lambda = (\alpha \land \beta)^\lambda$. Putting everything together, we finally obtain
\begin{align}
    (1-\eps)^{1/p} \,\RWp(\mu,\nu) &= \inf_{\substack{\mu' \in K_{0,\mu}\\\nu' \in K_{0,\nu}}} \Wp(\mu',\nu')\\
    &= \lim_{\lambda \to 0} \inf_{\substack{\mu' \in K_{0,\mu}^\lambda \\\nu' \in K_{0,\nu}^\lambda}} \Wp(\mu',\nu') \label{eq:discretization-step-2}\\
    &= \lim_{\lambda \to 0} \inf_{\substack{\mu' \in K_{0,\mu^\lambda} \\\nu' \in K_{0,\nu^\lambda}}} \Wp(\mu',\nu') \label{eq:discretization-step-3}\\
    &= \lim_{\lambda \to 0} \inf_{\substack{\mu' \in K_{\mu^\lambda \land \nu^\lambda,\mu^\lambda} \\\nu' \in K_{\mu^\lambda \land \nu^\lambda,\nu^\lambda}}} \Wp(\mu',\nu') \label{eq:discretization-step-4}\\
    &= \lim_{\lambda \to 0} \inf_{\substack{\mu' \in K_{(\mu \land \nu)^\lambda,\mu^\lambda} \\\nu' \in K_{(\mu \land \nu)^\lambda,\nu^\lambda}}} \Wp(\mu',\nu') \label{eq:discretization-step-5}\\
    &= \lim_{\lambda \to 0} \inf_{\substack{\mu' \in K^\lambda _{\mu \land \nu,\mu} \\\nu' \in K^\lambda_{\mu \land \nu,\nu}}} \Wp(\mu',\nu')\\
    &= \inf_{\substack{\mu' \in K_{\mu \land \nu,\mu} \\\nu' \in K_{\mu \land \nu,\nu}}} \Wp(\mu',\nu').
\end{align}
concluding the proof. Here, \eqref{eq:discretization-step-2} is an application of \cref{lem:dense-approximation} (technically, we use the product space $\cY \times \cY$ with metric $(\alpha,\beta),(\alpha',\beta') \mapsto \max\{\Wp(\alpha,\alpha'),\Wp(\beta,\beta')\}$), \eqref{eq:discretization-step-3} uses that $K_{\alpha,\beta}^\lambda = K_{\alpha^\lambda,\beta^\lambda}$, \eqref{eq:discretization-step-4} is an application of the discrete result, and the remaining steps apply the same results in reverse order. The final equality shows that the lower envelope of $\mu \land \nu$ can be assumed even in this general setting. %
\end{proof}

\subsection{Proof of \cref{prop:RWp-eps-dependence}}
\label{prf:RWp-eps-dependence}
\RWpepsdependence*

\begin{proof}
For (i), we clearly have $\Wp^{0} = \Wp$, and
\begin{equation*}
    \Wp^{\|\mu - \nu\|_\tv/2}(\mu,\nu) \leq (1-\|\mu - \nu\|_\tv/2)^{-1/p}\,  \Wp(\mu \land \nu, \mu \land \nu) = 0.
\end{equation*}

For (ii), let $\mu',\nu'$ be optimal for $\Wp^{\eps_1}(\mu,\nu)$ (in the mass removal formulation) and take $\pi \in \Pi(\mu',\nu')$ to be an optimal coupling for $\Wp(\mu',\nu')$. We can transform $\pi$ into a feasible coupling for $\Wp^{\eps_2}$ by restricting ourselves to the fraction $\frac{1 - \eps_1}{1-\eps_2}$ of mass $\pi'$ minimizing $\|d\|_{L^p(\pi')}$. This gives that 
\begin{align*}
    (1-\eps_2)^{1/p}\, \Wp^{\eps_2}(\mu,\nu) \leq \left(\frac{1-\eps_2}{1-\eps_1}\right)^{1/p} (1-\eps_1)^{1/p}\, \Wp^{\eps_1}(\mu,\nu) \iff \Wp^{\eps_2}(\mu,\nu) \leq \Wp^{\eps_1}(\mu,\nu),
\end{align*}
as desired.

For (iii), we recall from \citet[Theorem 6.13]{villani2009} that for $\alpha,\beta \in \cM_+(\cX)$ with $\alpha(\cX) = \beta(\cX)$, we have
\begin{equation*}
    \Wp(\alpha,\beta) \leq 2\diam(\cX) \|\alpha - \beta\|_\tv^{1/p}.
\end{equation*}
Since any pair of feasible measures for $\Wp^{\eps_2}$ are within (coordinate-wise) TV distance $\eps_2 - \eps_1$ from a pair of feasible measures for $\Wp^{\eps_1}$, this bound combined with the triangle inequality for $\Wp$ gives
\begin{align*}
    (1-\eps_2)^{1/p}\,\Wp^{\eps_2}(\mu,\nu) \geq  (1-\eps_1)^{1/p}\,\Wp^{\eps_1}(\mu,\nu) - 4\diam(\cX)(\eps_2 - \eps_1)^{1/p},
\end{align*}
which can be rearranged to give the desired inequality.
\end{proof}

\subsection{Proof of \cref{prop:RWp-coupling-primal}}
\label{prf:RWp-coupling-primal}

\RWpcouplingprimal*

\begin{proof}
Starting from formulation given by \cref{lem:conservative-estimate}, we compute
\begin{align*}
    \RWp(\mu,\nu) = \inf_{\substack{\mu' : \, \mu \in \cB_\eps(\mu') \\ \nu' :\, \nu \in \cB_\eps(\nu')}} \Wp(\mu',\nu') = \inf_{\substack{\mu' : \, \mu \in \cB_\eps(\mu') \\ \nu' :\, \nu \in \cB_\eps(\nu') \\ \pi \in \Pi(\mu',\nu')}} \|d\|_{L^p(\pi)} = \inf_{\substack{\pi \in \cP(\cX \times \cX) \\ \mu \in \cB_\eps(\pi_1) \\ \nu \in \cB_\eps(\pi_2)}} \|d\|_{L^p(\pi)},
\end{align*}
eliminating auxiliary variables $\mu'$ and $\nu'$ for the final equality.%
\end{proof}

\subsection{Proof of \cref{prop:coupling-delta-dependence}}
\label{prf:coupling-delta-dependence} 

\couplingdeltadependence*

\begin{proof}
In this case, because the two marginals of each $\pi_n$ are bounded by $\frac{\mu}{1-\eps}$ and $\frac{\nu}{1-\eps}$ respectively, this sequence is tight and admits a weakly convergent subsequence converging to some $\pi_\star \in \cP(\cX \times \cX)$.
Furthermore, $\pi_\star(A \times \cX) = \lim_{n \to \infty} \pi_1^n(A) = \mu(A)$, and, likewise, $\pi_\star(\cX \times B) = \nu(B)$; hence $\pi_\star \in \Pi(\mu,\nu)$.
We now recall the definition of $\Gamma$-convergence. %

\begin{definition}[$\Gamma$-Convergence]
Let $\cY$ be a metric space and consider a sequence $F_n : \cY \to \R \cup \{\infty\}$, $n \in \N$. We say that $\{F_n\}_{n \in \N}$ $\Gamma$-converges to $F:\cY \to \R \cup \{\infty\}$ and write $F_n \stackrel{\Gamma}{\to} F$~if
\begin{enumerate}[(i)]
\item For every $y_n, y \in \cY, n \in \N$, with $y_n \to y$, we have $F(y) \leq \liminf_{n \to \infty} F_n(y_n)$;
\item For any $y \in \cY$, there exists $y_n \in \cY, n \in \N$, with $y_n \to y$ and $F(y) \geq \limsup_{n \to \infty} F_n(y_n)$.
\end{enumerate}
\end{definition}

If $\{y_n\}_{n \in \N}$ is a sequence of minimizers for $F_n$, for each $n \in \N$, and $F_n \stackrel{\Gamma}{\to} F$, then any limit point of $\{y_n\}_{n \in \N}$ is a minimizer of $F$ \cite[Corollary 7.20]{dalmaso2012Gamma_conv}. Hence, it suffices to prove the $\Gamma$-convergence of $F_n:\cP(\cX \times \cX) \to \R \cup \{ \infty \}$ to $F:\cP(\cX \times \cX) \to \R \cup \{ \infty \}$ as defined by
\begin{align*}
    F_n(\gamma) &= \begin{cases}
    \|d\|_{L^p(\gamma)}, & \mu \in \cB_{\eps_n}(\gamma_1), \nu \in \cB_{\eps_n}(\gamma_2)\\
    \infty, & \text{o.w.}
    \end{cases}\\
    F(\gamma) &= \begin{cases}
    \|d\|_{L^p(\gamma)}, & \gamma \in \Pi(\mu,\nu)\\
    \infty, & \text{o.w.}
    \end{cases}
\end{align*}
For the $\liminf$ inequality, we start with any sequence $\gamma_n \in \cP(\cX \times \cX), n \in \N$, with weak limit $\gamma \in \cP(\cX \times \cX)$. If $\{\gamma_n\}_{n \in \N}$ does not contain a subsequence with $\mu \in \cB_{\eps_n}((\gamma_n)_1), \nu \in \cB_{\eps_n}((\gamma_n)_2)$, then the claim is trivial. Otherwise, $\gamma \in \Pi(\mu,\nu)$, and the Portmanteau Theorem gives
\begin{align*}
    F(\gamma) = \|d\|_{L^p(\gamma)} \leq \liminf_{n \to \infty} \| d \|_{L^p(\gamma_n)} = F_n(\gamma_n),
\end{align*}
where we used the fact that $d^p$ is non-negative and continuous for $p < \infty$. The fact that $L^p$ norms are continuous in $p \in [1,\infty]$ implies that the inequality holds for $p = \infty$ as well. %

For the $\limsup$ direction, fix $\gamma \in \Pi(\mu,\nu)$. Then, we must have $\mu \in \cB_{\eps_n}(\gamma_1), \nu \in \cB_{\eps_n}(\gamma_2)$ for all $n \in \N$, so we can consider the constant sequence $\gamma_n \equiv \gamma, n \in \N,$ and obtain
\begin{equation*}
    F_k(\gamma_n) = \|d\|_{L^p(\gamma)} = F(\gamma),
\end{equation*}
which implies the desired condition.%
\end{proof}

\subsection{Proof of \cref{prop:lp}}
\label{prf:LP}
\lp*

\begin{proof}
Assume first that $p < \infty$ and $\eps n = k$ for some $k \in \N$. Suppose that $\mu$ is uniform over $\{x_1, \dots, x_n\}$ and $\nu$ is uniform over $\{y_1, \dots, y_n\}$. Define the cost matrix $C \in \R^{n \times n}$ with $C_{ij} = d(x_i,y_j)^p$. Then, we can rewrite definition \eqref{eq:RWp} for $\RWp$ as a linear program, computing
\begin{align*}
    (1-\eps)\,\RWp(\mu,\nu)^p = \min_{\substack{\pi \in \R_+^{n \times n}\\ \pi \mathds{1} \leq \mathds{1}/n\\ \pi^\intercal \mathds{1} \leq \mathds{1}/n\\ \mathds{1}^\intercal \pi \mathds{1} \geq 1 - \eps}} \tr(\pi C) = \frac{1}{n} \min_{\substack{\pi \in \R_+^{n \times n}\\ \pi \mathds{1} \leq \mathds{1}\\ \pi^\intercal \mathds{1} \leq \mathds{1}\\ \mathds{1}^\intercal \pi \mathds{1} \geq n - k}} \tr(\pi C).
\end{align*}
This representation relates to the problem of characterizing the extreme points of the polytope of doubly substochastic $n \times n$ matrices with entries summing to an specified integer. This polytope was studied in \cite{cao2019}, where Theorem 4.1 shows that the extreme points of interest are exactly the partial permutation matrices of order $n - k$. This implies the existence of a coupling for $\RWp(\mu,\nu)^p$ whose marginals are uniform over $n - k = (1 - \eps)n$ points (giving mass $1/n$ to each point). 

When $\eps$ is not a multiple of $1/n$, the same result \cite[Theorem 4.1]{cao2019} reveals that there are optimal perturbed measures that each give mass $1/n$ to $\lfloor (1 - \eps)n \rfloor$ points and give the remaining mass to a single point. For $p$ sufficiently large, the set of minimizers to the above problem stabilizes to a constant set, so this argument captures the $p=\infty$ case as well.%
\end{proof}

\subsection{Proof of \cref{prop:RWp-dual-maximizers}}
\label{prf:RWp-dual-maximizers}

\RWpdualmaximizers*

\begin{proof}
Take $f \in C_b(\cX)$ maximizing \eqref{eq:RWp-dual} (by \cref{thm:RWp-dual}, such $f$ is guaranteed to exist). %
Take \emph{any} $\mu' = \mu - \alpha, \nu' = \nu - \beta$ optimal for the mass removal formulation of $\RWp(\mu,\nu)$, where $\alpha,\beta \in \cM_+(\cX)$ with $\alpha(\cX) = \beta(\cX) = \eps$. Of course, $\mu + \beta$ and $\nu + \alpha$ are then optimal for the mass addition formulation of $\RWp(\mu,\nu)$. Examining the proof of \cref{thm:RWp-dual}, we see that $\mu + \beta,\nu + \alpha$ and $f,f^c$ must be a minimax equilibrium for the problem given in \eqref{eq:minimax}. Consequently, we have
\begin{align}
    (1-\eps)\RWp(\mu,\nu)^p &= \int f \dd \mu + \int f^c \dd \nu + \int f \dd \beta + \int f^c \dd \alpha \label{eq:dual-maximizers-1}\\
    &=\int f \dd \mu + \int f^c \dd \nu + \eps \inf_x f(x) - \eps \sup_x f(x) \label{eq:dual-maximizers-2}.
\end{align}
Since $\inf_y f^c(y) = - \sup_{x} f(x)$ (and the minimizers of $f^c$ correspond to the maximizers of $f$),  we have that \eqref{eq:dual-maximizers-1} is strictly less than \eqref{eq:dual-maximizers-2} unless $\supp(\beta) \subseteq \argmin(f)$ and $\supp(\alpha) \subseteq \argmax(f)$. This suggests taking $\alpha = \mu|_{\argmax(f)}$ and $\beta = \nu|_{\argmin(f)}$, but we cannot do so in general; consider the case where $\mu$ and $\nu$ are uniform discrete measures on $n$ points and $\eps$ is not a multiple of $1/n$. Issues with that approach also arise when $\mu$ and $\nu$ are both supported on $\argmax(f)$ and the optimal $\mu'$ satisfies $\mu' \geq \mu \land \nu$.%
\end{proof}

\subsection{Proof of \cref{prop:loss-trimming}}
\label{prf:loss-trimming}

\losstrimming*

\begin{proof}
We place no restrictions on $\mu$ and $\nu$ initially and extend to the two-sided robust distance $\Wp^{\eps_1,\eps_2}$ defined in \cref{app:asymmetric-results}. For this result, we will apply Sion's minimax theorem to the mass-removal formulation of $\Wp^{\eps_1,\eps_2}$. Mirroring the proof of \cref{thm:RWp-dual}, we compute
\begin{align*}
    \Wp^{\eps_1,\eps_2}(\mu,\nu) &= \inf_{\substack{0 \leq \mu' \leq \mu\\ 0 \leq \nu' \leq \nu\\ \mu'(\cX) = 1-\eps_1 \\ \nu'(\cX) = 1-\eps_2}} \sup_{\substack{f,g \in C_b(\cX)\\ f(x) + g(y) \leq d(x,y)^p }} \frac{1}{1 - \eps_1} \int f \dd \mu' + \frac{1}{1 - \eps_2} \int g \dd \nu'\\
    &=  \sup_{\substack{f,g \in C_b(\cX)\\ f(x) + g(y) \leq d(x,y)^p }} \left( \frac{1}{1 - \eps_1} \inf_{\substack{0 \leq \mu' \leq \mu\\ \mu'(\cX) = 1-\eps_1}} \int f \dd \mu' + \frac{1}{1 - \eps_2} \inf_{\substack{0 \leq \nu' \leq \nu\\ \nu'(\cX) = 1-\eps_2}} \int g \dd \nu' \right)\\
    &=  \sup_{f \in C_b(\cX)} \left(\frac{1}{1 - \eps_1} \inf_{\substack{0 \leq \mu' \leq \mu\\ \mu'(\cX) = 1-\eps_1}} \int f \dd \mu' + \frac{1}{1 - \eps_2} \inf_{\substack{0 \leq \nu' \leq \nu\\ \nu'(\cX) = 1-\eps_2}} \int f^c \dd \nu'\right).
\end{align*}
When $\mu$ and $\nu$ are uniform distributions over $n$ points, and $\eps_1,\eps_2$ are multiples of $1/n$, we simplify further to
\begin{align*}
    \Wp^{\eps_1,\eps_2}(\mu,\nu) &=  \sup_{f \in C_b(\cX)} \left(\frac{1}{|\cA|} \min_{\substack{\cA \subseteq \supp(\mu)\\ |\cA| = (1-\eps_1)n}} \sum_{x \in A} f(x) + \frac{1}{|\cB|} \min_{\substack{\cB \subseteq \supp(\nu)\\ |\cB| = (1-\eps_2)n}} \sum_{y \in B} f^c(y) \right),
\end{align*}
as desired
\end{proof}

\subsection{Proof of \cref{prop:one-sample-rates}}
\label{prf:one-sample-rates}

\onesamplerates*

\begin{proof}
For the upper bound, we simply use that $\RWp(\mu,\nu) \leq \Wp(\mu,\nu)$ and apply well-known empirical convergence results for $\Wp$ which give $\E[\Wp(\mu,\hat{\mu}_n)] \leq C n^{-1/d}$ for a constant $C$ independent of $n$ (and $\eps$); see, e.g., \cite{weed2019}.

For the lower bound, if $d_*(\mu) > s$, then there is $\delta_0 > 0$ such that $\cN_{\delta}(\mu,\tau) \geq \delta^{-t}$ for all $\delta \leq \delta_0$. By \cref{prop:minimizers}, we know that $\RWp(\mu,\gamma)$ is achieved by $\mu' \leq \mu$ and $\gamma' \leq \gamma$, with $|\supp(\gamma')| \leq |\supp(\gamma)| \leq n$. Fix $\delta = n^{-s}/2$ and take $n$ sufficiently large so that $\delta \leq \delta_0$. Let $S = \cup_{x \in \supp(\gamma')} B(x,\delta/2)$, where $B(x,r)\coloneqq\{y\in\cX: d(x,y)\leq r\}$ is the ball of radius $r$ centered at $x$. Since $\cN(\mu,\tau) \geq \delta^{-t} > n$, we have that $\mu(S) \leq 1 - \tau$, which further implies $\mu'(S) \leq 1 - \tau + \eps$. 

For any $\pi \in \Pi(\mu',\gamma')$, we now have
\begin{equation*}
    \int d(x,y)^p \dd \pi(x,y) \geq \mu'(\cX \setminus S)\left(\frac{\delta}{2}\right)^p > (\tau - \delta)\left(\frac{\delta}{2}\right)^p = (\tau - \eps) \, 4^{-p}n^{-\frac{p}{s}},
\end{equation*}
which gives $\RWp(\mu,\gamma) = (1-\eps)^{-1/p}\, \Wp(\mu',\gamma') \geq \frac{1}{4}(\tau - \eps)^{1/p} (1 - \eps)^{-1/p}\, n^{-1/s}$. This suffices since $(1-\eps)^{-1/p} \geq 1$.
\end{proof}

\subsection{Proof of \cref{cor:simple-one-sample-rate}}
\label{prf:simple-one-sample-rate}

\simpleonesamplerate*

\begin{proof}
Set $\tau = (\int f \dd \lambda + \eps)/2$, so that $\eps < \tau < \inf f \dd \lambda$.
Then, for any covering of $\cX$ with balls of radius $\delta$ except for $A$ with $\mu(A) \leq \tau$, we have
\begin{align*}
    \int_A f \dd \lambda \leq \mu(A) \leq \tau \leq \int_\cX f \dd \lambda - c
\end{align*}
for a positive constant $c > 0$. This gives that $\int_{\cX \setminus A} f \dd \lambda > c$, and hence $\lambda(\cX \setminus A) \geq c/\|f\|_\infty > 0$. Since $\cX \setminus A$ has Lebesgue measure bounded away from 0, a volumetric argument implies that the covering must contain at least $C\delta^{-d}$ balls of radius $\delta$ for some $C > 0$. This implies the desired lower covering dimension and, by the proof of the previous lower bound, the desired empirical convergence rate.%
\end{proof}

\subsection{Proof of \cref{prop:two-sample-rate}}
\label{prf:two-sample-rate}

\twosamplerate*

\begin{proof}
To begin, we write
\begin{align*}
    (1-\eps)\RWp(\mu, \nu)^p &= \sup_{f \in C_b(\cX)} \mu(f) + \nu(f^c) - \Range(f) \\
    (1-\eps)\RWp(\hat{\mu}_n, \hat{\nu}_n)^p &= \sup_{f \in C_b(\cX)} -\hat{\mu}_n(f^c) - \hat{\nu}_n(f) - \Range(f),
\end{align*}
using that $C_b(\cX)$ is closed under negation. From here, we compute
\begin{align*}
   (1-\eps) \left| \RWp(\hat{\mu}_n, \hat{\nu}_n)^p - \RWp(\mu,\nu)^p \right| &\leq \sup_{f \in C_b(\cX)} \left| \int f \dd \mu + \int f^c \dd \hat{\mu}_n + \int f \dd \hat{\nu}_n + \int f^c \dd \nu \right|\\
    &\leq\Wp(\mu,\hat{\mu}_n)^p + \Wp(\nu,\hat{\nu}_n)^p.
\end{align*}
Hence, standard one-sample empirical convergence rates for $\Wp$ (e.g., \cite{weed2019}), give
\begin{equation*}
    \E\left[\left| \RWp(\hat{\mu}_n, \hat{\nu}_n)^p - \RWp(\mu,\nu)^p \right|\right] \leq C (1-\eps)^{-1} n^{-p/d} \leq 2 C n^{-p/d},
\end{equation*}
for a constant $C$ independent of $n$ and $\eps$.
\end{proof}

The standard lower bounds for two-sample empirical convergence \citep{talagrand92, barthe2013} do not immediately transfer to this setting. When $\mu = \nu$ are both an absolutely continuous distribution like $\Unif([0,1]^d)$, these can easily be adapted to provide a lower bound for
\begin{align*}
    \inf_{\substack{\cA \subseteq [n], |\cA| = (1-\eps)n \\ \cB \subseteq \supp([n]), |\cB| = (1-\eps)n}} &\E\left[\Wp\left(\frac{1}{(1-\eps)n}\sum_{i \in \cA} \delta_{X_i},\frac{1}{(1-\eps)n} \sum_{j \in \cB} \delta_{Y_j} \right)\right]\\
    = &\E\left[\Wp\left(\frac{1}{(1-\eps)n}\sum_{i=1}^{(1-\eps)n} \delta_{X_i},\frac{1}{(1-\eps)n} \sum_{j=1}^{(1-\eps)n} \delta_{Y_j} \right)\right].
\end{align*}
\makeatletter
\newcommand{\vast}{\bBigg@{4}}
\makeatother
For this to be helpful for the present case, one would also need to bound
\begin{multline*}
    \vast|\inf_{\substack{\cA \subseteq [n], |\cA| = (1-\eps)n \\ \cB \subseteq [n], |\cB| = (1-\eps)n}} \E\left[\Wp\left(\frac{1}{(1-\eps)n}\sum_{i \in \cA} \delta_{X_i},\frac{1}{(1-\eps)n} \sum_{j \in \cB} \delta_{Y_j} \right)\right]\\ - \E\left[\inf_{\substack{\cA \subseteq [n], |\cA| = (1-\eps)n \\ \cB \subseteq [n], |\cB| = (1-\eps)n}} \Wp\left(\frac{1}{(1-\eps)n}\sum_{i \in \cA} \delta_{X_i},\frac{1}{(1-\eps)n} \sum_{j \in \cB} \delta_{Y_j} \right)\right] \vast|,
\end{multline*}
which we defer for future work.

\subsection{Proof of \cref{prop:exact-recovery}}
\label{prf:exact-recovery}

\exactrecovery*

\begin{proof}
Under the conditions of the proposition, we will prove that $\mu' = (1-\eps)\mu$ and $\nu' = (1-\eps)\nu$ are the \emph{unique} minimizers for $\RWp(\tilde{\mu},\tilde{\nu})$, implying that $\RWp(\tilde{\mu},\tilde{\nu}) = \Wp(\mu,\nu)$. Suppose not, and take $\mu'$ and $\nu'$ optimal for \eqref{eq:RWp} such that $\mu' = \mu'_g + \mu'_b$, where $\mu'_g \coloneqq \mu' \land (1-\eps)\mu$ and $\mu'_b = \mu' - \mu'_g \leq \eps\alpha$, $\mu'_b \neq 0$. Let $\pi \in \Pi(\mu',\nu')$ be an optimal coupling for $\Wp(\mu',\nu')$, and let $\pi(y|x)$ be the regular conditional probability distribution such that $\nu'(\cdot) = \int_\cX \pi(\cdot|x) \dd \mu'(x)$. Then we can compute
\begingroup
\allowdisplaybreaks
\begin{align*}
    (1-\eps)\Wp(\mu,\nu)^p &\geq (1-\eps)\RWp(\tilde{\mu},\tilde{\nu})^p\\
    &= \Wp(\mu',\nu')^p\\
    &= \int\!\!\int d(x,y)^p \dd \pi(y|x) \dd \mu'(x)\\
    &= \int\!\!\int d(x,y)^p \dd \pi(y|x) \dd \mu'_g(x) + \int\!\!\int d(x,y)^p \dd \pi(y|x) \dd \mu'_b(x)\\
    &\geq (1-\eps)\Wp(\mu,\nu)^p - \int\!\!\int d(x,y)^p \dd \pi(y|x) \dd (\mu - \mu'_g)(x) + \int\!\!\int d(x,y)^p \dd \pi(y|x) \dd \mu'_b(x)\\
    &> (1-\eps)\Wp(\mu,\nu)^p - (\mu - \mu'_g)(\cX) \diam(S)^p + \mu'_b(\cX) \diam(S)^p\\
    &= (1-\eps)\Wp(\mu,\nu)^p,
\end{align*}
\endgroup
a contradiction. The same argument goes through if the three relevant distances are greater than $\|d\|_{L^\infty(\pi)}$, although general conditions under which this norm is finite seem hard to obtain (unless $p=\infty$, in which case $\|d\|_{L^\infty(\pi)} = \Winfty(\mu',\nu') \leq \Winfty(\mu,\nu)$).
\end{proof}

\subsection{Proof of \cref{prop:radius}}
\label{prf:radius}

\radius*

\begin{proof}
By \cref{prop:RWp-eps-dependence}, $(1-\delta)\Wp^{\delta}(\tilde{\mu},\tilde{\nu})^p$ is continuous and decreasing in $\delta$, so it must be differentiable except on a countable set. Now, if $0 \leq \delta' < \delta \leq \eps$, then the proof of \cref{prop:exact-recovery} reveals that there are optimal $\mu',\nu'$ for $\Wp^{\delta}(\tilde{\mu},\tilde{\nu})$ and $\mu'',\nu''$ for $\Wp^{\delta'}(\tilde{\mu},\tilde{\nu})$ such that $\tilde{\mu} \geq \mu'' \geq \mu' \geq (1-\eps)\mu$ and $\tilde{\nu} \geq \nu'' \geq \nu' \geq (1-\eps)\nu$. Letting $\pi \in \Pi(\mu'',\nu'')$ be an optimal coupling for $\Wp(\mu'',\nu'')$ and defining $\pi(y|x)$ as before, we compute
\begin{align*}
    (1-\delta)\Wp^{\delta}(\tilde{\mu},\tilde{\nu})^p - (1-\delta')\Wp^{\delta'}(\tilde{\mu},\tilde{\nu})^p &= \Wp(\mu',\nu')^p - \Wp(\mu'',\nu'')^p\\
    &= \Wp(\mu',\nu')^p - \|d\|_{L^p(\pi)}^p\\
    &\leq \int\!\!\int d(x,y) \pi(y|x) \dd \mu'(x) - \int\!\!\int d(x,y) \pi(y|x) \dd \mu''(x)\\
    &= - \int\!\!\int d(x,y) \pi(y|x) \dd (\mu'' - \mu')(x)\\
    &\leq - (\delta - \delta') D^p.
\end{align*}
Taking $\delta' \to \delta$, we find that the derivative of interest is bounded by $-D^p$ from above wherever it exists. On the other hand, if $\eps \leq \delta < \delta' \leq 1$, then there are optimal $\mu',\nu'$ for $\Wp^{\delta}(\tilde{\mu},\tilde{\nu})$ and $\mu'',\nu''$ for $\Wp^{\delta'}(\tilde{\mu},\tilde{\nu})$ such that $(1-\eps)\mu \geq \mu' \geq \mu'' \geq 0$ and $(1-\eps)\nu \geq \nu' \geq \nu''$. Letting $\pi \in \Pi(\mu'',\nu'')$ again be an optimal coupling for $\Wp(\mu'',\nu'')$, we compute
\begin{align*}
    (1-\delta')\Wp^{\delta'}(\tilde{\mu},\tilde{\nu})^p - (1-\delta)\Wp^{\delta}(\tilde{\mu},\tilde{\nu})^p &= \Wp(\mu'',\nu'')^p - \Wp(\mu',\nu')^p\\
    &= \|d\|_{L^p(\pi)}^p - \Wp(\mu',\nu')\\
    &\geq \|d\|_{L^p(\pi)}^p - \|d\|_{L^p(\pi)}^p - \Wp(\mu' - \mu'', \nu' - \nu'')^p\\
    &\geq - \diam(S)^p (\delta' - \delta).
\end{align*}
Taking $\delta' \to \delta$, we find that the derivative of interest is bounded by $-\diam(S)^p > -D^p$ from below, wherever it exists.
\end{proof}

\subsection{Proof of \cref{prop:robust-consistency}}
\label{prf:robust-consistency}

\robustconsistency*

\begin{proof}
Fix $p < \infty$ (a minor adjustment from the statement in the main text).
By design, there exists some $n_0 \in \N$ such that $\eps_n \geq \tau_\mu(n) \lor \tau_\nu(n)$ for all $n \geq n_0$. For such $n$,
let $\mu_n$ and $\nu_n$ denote the empirical probability measures associated with the uncontaminated samples among $\tilde{X}_1, \dots, \tilde{X}_n$ and $\tilde{Y}_1, \dots, \tilde{Y}_n$, respectively. By the proof of
\cref{prop:exact-recovery}, if $\mu'$ and $\nu'$ are optimal for $\Wp^{\eps_n}(\tilde{\mu}_n,\tilde{\nu}_n)$, and $\pi \in \Pi(\mu',\nu')$ is optimal for $\Wp(\mu',\nu')$, then $\|d\|_{L^\infty(\pi)} \leq \diam(\supp(\mu_n + \nu_n)) \leq \diam(\supp(\mu + \nu))$. Indeed, were any mass to be transported further than this by an optimal coupling, the source or destination would have to be contaminated, and the transport cost could be strictly improved by removing this contamination instead of inlier mass. %
Now, let $\mu'_g$ and $\nu'_g$ denote the restrictions of $\mu'$ and $\nu'$, respectively, to $\supp(\mu + \nu)$.
Hence, applying the same Wasserstein-TV bound used to prove \cref{prop:RWp-eps-dependence} (iii), we obtain
\begin{align}
    (1-\eps_n) \Wp(\mu_n,\nu_n)^p &\geq (1-\eps_n) \Wp^{\eps_n}(\tilde{\mu}_n,\tilde{\nu}_n)^p \label{eq:robust-consistency-prf-1}\\
    &= \Wp(\mu',\nu')^p\\
    &\geq \Wp(\mu'_g,\nu'_g)^p - \tau_\mu(n) \lor \tau_\nu(n)\, \|d\|_{L^\infty(\pi)}^p\label{eq:robust-consistency-prf-2}\\
    &\geq \Wp(\mu'_g,\nu'_g)^p - \eps_n \diam(\supp(\mu + \nu))\\
    &\geq (1-2\eps_n)\Wp^{2\eps_n}(\mu_n,\nu_n)^p - \eps_n \diam(\supp(\mu + \nu))\label{eq:robust-consistency-prf-3}\\
    &\geq (1-2\eps_n)\Wp(\mu_n,\nu_n)^p - C \eps_n \diam(\supp(\mu + \nu)),\label{eq:robust-consistency-prf-4}
\end{align}
for an absolute constant $C > 0$.
Here, \eqref{eq:robust-consistency-prf-1} follows by the under-estimation property of $\RWp$ under Huber contamination, \eqref{eq:robust-consistency-prf-1} holds since at most $\tau_\mu(n) \lor \tau_\nu(n) \leq \eps_n$ fraction of samples are removed from $\mu'$ and $\nu'$ to obtain their restrictions, \eqref{eq:robust-consistency-prf-3} observes that $\mu'_g$ and $\nu'_g$ are feasible for $\Wp^{2\eps_n}(\mu_n,\nu_n)$, and \eqref{eq:robust-consistency-prf-4} is an application of the bound mentioned above. %
Hence, $\lim_{n\to\infty} \Wp^{\eps_n}(\tilde{\mu}_n,\tilde{\nu}_n)  = \lim_{n \to \infty}\Wp(\mu_n,\nu_n) = \Wp(\mu,\nu)$ as $n \to \infty$, where the second equality follows from empirical convergence of $\Wp$.
\end{proof}

\section{Supplementary Results and Discussion}

\subsection{Asymmetric Variant}
\label{app:asymmetric-results}
First, we formally define the asymmetric robust $p$-Wasserstein distance with robustness radii $\eps_1$ and $\eps_2$ by
\begin{equation*}
    \Wp^{\eps_1,\eps_2}(\mu,\nu) = \ \inf_{\substack{\mu',\nu' \in \cM_+(\cX) \\ 0 \leq \mu' \leq \mu, \: \|\mu - \mu'\|_\tv \leq \eps_1 \\
    0 \leq \nu' \leq \nu, \: \|\nu - \nu'\|_\tv \leq \eps_2}} \Wp\left(\frac{\mu'}{\mu'(\cX)},\frac{\nu'}{\nu'(\cX)}\right),
\end{equation*}
naturally extending the definition in \eqref{eq:RWp} so that $\RWp = \Wp^{\eps,\eps}$ and $\Wp(\mu \| \nu) = \Wp^{\eps,0}(\mu,\nu)$.
Our robustness results translate immediately to this setting. Indeed, writing the asymmetric minimax error
\begin{equation*}
    E(\cD,\eps_1,\eps_2) \coloneqq \inf_{\hat{W}:\cP(\cX)^2 \to \R} \sup_{\substack{\mu,\nu \in \cD\\ \tilde{\mu} \in \cB_{\eps_1}(\mu)\\ \tilde{\nu} \in \cB_{\eps_2}(\nu)}} \big|\hat{W}(\tilde{\mu},\tilde{\nu}) - \Wp(\mu,\nu)\big|,
\end{equation*}
we trivially obtain $E(\cD,\eps_1,\eps_2) \leq E(\cD,\eps_1 \lor \eps_2)$ since $\cB_{\eps}(\mu) \subset \cB_{\eps'}(\mu)$ for $\eps' \geq \eps$. Moreover, it is easy to check that $E(\cD,\eps_1,\eps_2) \gtrsim \mathfrak{m}_-(\cD,\eps_1 \lor \eps_2)$, as defined in \cref{lem:modulus-lower-bound} (using the same proof), and so our lower bounds carry over.

We next focus on obtaining dual forms for $\Wp^{\eps_1,\eps_2}$ which are relevant for applications, although most of the primal properties shown for $\RWp$ translate to this case as well.
Our proof of \cref{prop:loss-trimming} is already written for this general setting. Next, although the proof of \cref{prop:mass-addition} was given for the symmetric distance,
an identical argument reveals that the following variant holds in general:
\begin{equation*}
    \RWp(\mu,\nu) = \left(\frac{1-\eps_1\eps_2}{(1-\eps_1)(1-\eps_2)}\right)^{1/p} \hspace{-2mm} \inf_{\substack{\mu' \in \cB_{\delta_1}(\mu)\\ \nu' \in \cB_{\delta_2}(\nu)}} \Wp(\mu',\nu'),
\end{equation*}
where $\delta_1 = \frac{\eps_2(1-\eps_1)}{1-\eps_1\eps_2}$ and $\delta_2 = \frac{(1-\eps_2)\eps_1}{1-\eps_1\eps_2}$.
We now translate \cref{thm:RWp-dual} to the asymmetric case, following the same argument of the original proof to obtain
\begin{equation*}
    (1-\eps_1)(1-\eps_2)\Wp^{\eps_1,\eps_2}(\mu,\nu)^p = \sup_{f \in C_b(\cX)} (1-\eps_2) \int f \dd \mu + (1-\eps_1) \int f^c \dd \nu + (1-\eps_1)\eps_2 \inf_x f(x) + (1-\eps_2)\eps_1 \sup_x f(x).
\end{equation*}
Note that this matches the symmetric case when $\eps_1 = \eps_2$ and matches the one-sided dual \eqref{eq:one-sided-dual} when $\eps_2 = 0$.

\subsection{Relation to TV-Robustified \texorpdfstring{$\Wp$}{Wp}}
\label{app:equivalent-formulations}
In \citet{balaji2020}, the authors consider
\begin{equation*}
    \Wp^{\eps,\tv}(\mu,\nu) \coloneqq \inf_{\substack{\mu',\nu' \in \cP(\cX) \\ \|\mu - \mu'\|_\tv, \|\nu - \nu'\|_\tv \leq \eps}} \Wp(\mu',\nu'),
\end{equation*}
(although $\chi^2$ robustification is examined more thoroughly). We have the following relationship between this quantity and our robust distance.

\begin{proposition}[Relation to $\Wp^{\eps,\tv}$]
For $p \in [1,\infty]$ and $\mu,\nu \in \cP_p(\cX)$,
\begin{equation*}
    \Wp^{2\eps,\tv}(\mu,\nu) \leq \RWp(\mu,\nu) \leq (1-\eps)^{-1/p} \,\Wp^{\eps,\tv}(\mu,\nu).
\end{equation*}
\end{proposition}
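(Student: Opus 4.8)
The plan is to prove the two inequalities by direct primal constructions, deliberately avoiding any use of the triangle inequality for $\Wp$ (which is useless on an unbounded $\cX$, where a TV-small perturbation can be $\Wp$-far). Note both sides are finite, since $\RWp(\mu,\nu)\le\Wp(\mu,\nu)<\infty$ for $\mu,\nu\in\cP_p(\cX)$.

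\emph{Lower bound $\Wp^{2\eps,\tv}(\mu,\nu)\le\RWp(\mu,\nu)$.} I would take any $\mu'\le\mu$, $\nu'\le\nu$ admissible for \eqref{eq:RWp}, with masses $a\coloneqq\mu'(\cX)$ and $b\coloneqq\nu'(\cX)$, both in $[1-\eps,1]$, and set $\mu''\coloneqq\mu'/a$, $\nu''\coloneqq\nu'/b$. These are probability measures and $\Wp(\mu'',\nu'')$ equals the value of the $\RWp$-objective at $(\mu',\nu')$. By the triangle inequality for $\|\cdot\|_\tv$ together with $\mu'\le\mu$, one gets $\|\mu-\mu''\|_\tv\le\|\mu-\mu'\|_\tv+\|\mu'-\mu'/a\|_\tv=(1-a)+(1-a)\le 2\eps$, and likewise $\|\nu-\nu''\|_\tv\le 2\eps$. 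Hence $(\mu'',\nu'')$ is admissible for $\Wp^{2\eps,\tv}$, so $\Wp^{2\eps,\tv}(\mu,\nu)\le\Wp(\mu'',\nu'')$, and infimizing over $(\mu',\nu')$ gives the claim. (For $p=\infty$ the rescalings are harmless, $\Winfty$ being scale-invariant under the common-mass convention.)

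\emph{Upper bound $\RWp(\mu,\nu)\le(1-\eps)^{-1/p}\,\Wp^{\eps,\tv}(\mu,\nu)$.} Fix $\mu^*,\nu^*\in\cP(\cX)$ admissible for $\Wp^{\eps,\tv}$ and an optimal coupling $\pi\in\Pi(\mu^*,\nu^*)$. The idea is to carve out of $\pi$ a sub-coupling whose marginals are dominated by $\mu$ and $\nu$, losing at most $\eps$ total mass. Using regular conditional distributions (available since $\cX$ is Polish, as in the proof of \cref{prop:mass-addition}), I disintegrate $\pi$ in the first coordinate and keep only the conditional mass sitting above $\mu\wedge\mu^*$; this yields $\pi''\le\pi$ with $\pi''_1=\mu\wedge\mu^*\le\mu$ and $\pi''(\cX\times\cX)=1-\|\mu-\mu^*\|_\tv/2\ge 1-\eps/2$. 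Then I disintegrate $\pi''$ in the second coordinate and keep only the conditional mass above $\nu\wedge\pi''_2$; since $\pi''_2\le\pi_2=\nu^*$, we have $(\pi''_2-\nu)_+\le(\nu^*-\nu)_+$, so the resulting $\pi'\le\pi''$ satisfies $\pi'_2=\nu\wedge\pi''_2\le\nu$, $\pi'_1\le\pi''_1\le\mu$, and $m\coloneqq\pi'(\cX\times\cX)\ge 1-\|\mu-\mu^*\|_\tv/2-\|\nu-\nu^*\|_\tv/2\ge 1-\eps$. Now $(\mu',\nu')\coloneqq(\pi'_1,\pi'_2)$ is admissible for \eqref{eq:RWp} (each of mass $m\ge 1-\eps$), so $\RWp(\mu,\nu)\le\Wp(\mu'/m,\nu'/m)\le m^{-1/p}\|d\|_{L^p(\pi')}\le(1-\eps)^{-1/p}\|d\|_{L^p(\pi)}$, using $d^p\ge 0$ and $\pi'\le\pi$ for the last step. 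Taking $\pi$ optimal and infimizing over $(\mu^*,\nu^*)$ gives the bound; $p=\infty$ is handled identically, with the factor degenerating to $1$.

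The main obstacle is the sub-coupling construction in the upper bound: one cannot replace $\mu^*$ by a nearby $\mu$-dominated measure via a $\Wp$ triangle inequality, so instead the offending mass must be \emph{deleted} from the transport plan rather than transported. The only real bookkeeping is verifying that the two successive restrictions discard at most $\|\mu-\mu^*\|_\tv/2+\|\nu-\nu^*\|_\tv/2\le\eps$ mass in total; everything else is routine.
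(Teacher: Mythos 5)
Your proof is correct. The lower bound is essentially the paper's argument: both show that the renormalized perturbations $\mu'/\mu'(\cX)$, $\nu'/\nu'(\cX)$ lie within TV distance $2\eps$ of $\mu,\nu$ (you via the TV triangle inequality from the primal \eqref{eq:RWp}, the paper via the identity $\|\mu-\mu'\|_\tv=\eps\|\alpha-\mu'\|_\tv$ from the rescaled form in \cref{lem:conservative-estimate}); the $2\eps$ loss is the same. Your upper bound, however, takes a genuinely different route. The paper invokes the mass-addition formulation (\cref{prop:mass-addition}): it splits $\mu'-\mu$ and $\nu'-\nu$ into positive and negative parts $\alpha_\pm,\beta_\pm$, adds the appropriate pieces to each base measure to land in Huber balls, and exhibits a transport plan leaving $\alpha_-,\beta_-$ stationary. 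You instead stay in the mass-removal primal and restrict an optimal coupling $\pi\in\Pi(\mu^*,\nu^*)$ twice via disintegration so that its marginals are dominated by $\mu$ and $\nu$; the bookkeeping is right --- under the paper's normalization $\|\gamma\|_\tv=|\gamma|(\cX)$, each restriction discards at most $\tfrac12\|\mu-\mu^*\|_\tv\le\eps/2$ (resp.\ $\eps/2$) mass, the domination $(\pi''_2-\nu)_+\le(\nu^*-\nu)_+$ follows from $\pi''_2\le\nu^*$, and the surviving mass $m\ge 1-\eps$ yields exactly the $(1-\eps)^{-1/p}$ factor after renormalization. Your argument is more self-contained (it does not rely on \cref{prop:mass-addition}, one of the paper's nontrivial structural results) and makes the origin of the prefactor transparent; the paper's version is shorter given the machinery already in place and reuses the same ``leave the added mass in place'' idea that underlies its other proofs. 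Both handle $p=\infty$ correctly.
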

\begin{proof}
To prove $\Wp^{2\eps,\tv} \leq \RWp$, take $\mu',\nu' \in \cP(\cX)$ optimal for \eqref{eq:mass-removal-2}, with $\mu \in \cB_\eps(\mu')$ and $\nu \in \cB_\eps(\nu')$. Writing $\mu = (1-\eps)\mu' + \eps \alpha$, for some $\alpha \in \cP(\cX)$, we compute $\|\mu - \mu'\|_\tv = \eps\|\mu' - \alpha\|_\tv \leq 2\eps$; similarly, $\|\nu - \nu'\|_\tv \leq 2\eps$. Hence, $\Wp^{2\eps,\tv}(\mu,\nu) \leq \Wp(\mu',\nu') = \RWp(\mu,\nu)$.

For the opposite inequality, take any $\mu',\nu' \in \cP(\cX)$ with $\|\mu - \mu'\|_\tv, \|\nu - \nu'\|_\tv \leq \eps$. Then, we can write $\mu' = \mu + \frac{\eps}{2}(\alpha_+ - \alpha_-)$ and $\nu' = \nu + \frac{\eps}{2}(\beta_+ - \beta_-)$, where $\alpha_+,\alpha_-,\beta_+,\beta_- \in \cP(\cX)$. Next, set
\begin{equation*}
\tilde{\mu}' = \frac{1}{1 + \eps}\mu + \frac{\eps}{1 + \eps}\left(\frac{\alpha_+ + \beta_-}{2}\right) \in \cB_{\frac{\eps}{1+ \eps}}(\mu), \qquad \tilde{\nu}' = \frac{1}{1 + \eps}\nu + \frac{1}{1 + \eps}\left(\frac{\beta_+ + \alpha_-}{2}\right) \in \cB_{\frac{\eps}{1+ \eps}}(\nu),
\end{equation*}
and observe that
\begin{equation*}
    \Wp(\tilde{\mu}',\tilde{\nu}')^p = \frac{1}{1+\eps} \Wp\left(\mu + \frac{\eps}{2}(\alpha_+ + \beta_-), \nu + \frac{\eps}{2}(\beta_+ + \alpha_-)\right)^p \leq \frac{1}{1+\eps}\Wp(\mu',\nu'),
\end{equation*}
where the last inequality follows by considering the transport plan which leaves $\beta_-$ and $\alpha_-$ stationary while otherwise following an optimal plan for $\Wp(\mu',\nu')$. Finally, noting that $\tilde{\mu}'$ and $\tilde{\nu}'$ are feasible for the mass-addition formulation \eqref{eq:mass-addition}, we obtain
\begin{equation*}
    \Wp^{\eps}(\mu,\nu) \leq \left(\frac{1+\eps}{1-\eps}\right)^{1/p} \Wp(\tilde{\mu}',\tilde{\nu}') \leq (1-\eps)^{-1/p} \, \Wp(\mu',\nu').
\end{equation*}
Infimizing over all such $\mu'$ and $\nu'$ gives $\Wp^{\eps}(\mu,\nu) \leq (1-\eps)^{-1/p}\, \Wp^{\eps,\tv}(\mu,\nu)$.
\end{proof}

\subsection{Examples}
\label{app:examples}
On the real line, $\RWp$ is often simple enough to compute, although it does not lend itself to many closed-form solutions. One useful class of examples is that of $\mu$ and $\nu$ which are absolutely continuous with respect to Lebesgue and whose supports have disjoint convex hulls, e.g., when $\mu$ is supported on the negative numbers and $\nu$ is supported on the positive numbers. In this case, due to the structure of optimal couplings for real distributions, we know that the optimal $\mu'$ and $\nu'$ simply cut off the extreme ends of the original measures. Assuming that the support of $\mu$ is to the left of that of $\nu$, we find that $\mu' = \mu|_{[a,\infty)}$ and $\nu' = \nu|_{(-\infty,b]}$ for constants $a,b$ such that $\mu'(\cX) = \nu'(\cX) = 1 - \delta$. For example, if $\mu = \mathrm{Exp}(\lambda)$ and $\nu = (-\Id)_\# \mu$, then $\Winfty(\mu,\nu) = \infty$ but $\RWinfty(\mu,\nu) \leq 2 \ln(1/\eps)/\lambda$.

\subsection{Experiment Details and Comparison with Existing Work}
\label{app:experiment-details}
Full code is available on GitHub at \url{https://github.com/sbnietert/robust-OT}. We stress that we never had to adjust hyper-parameters or loss computations from their defaults, only adding the additional robustness penalty and procedures to corrupt the datasets. The implementation of WGAN-GP used for the robust GAN experiments in the main text was based on a standard PyTorch implementation \citep{cao2017}, as was that of StyleGAN 2 \citep{labml}. The images presented in \cref{fig:generated-samples} were generated without any manual filtering after a predetermined number of batches (125k, batch size 64, for WGAN-GP; 100k, batch size 32, for StyleGAN 2). Training for the WGAN-GP and StyleGAN 2 experiments took 5 hours and 20 hours of compute, respectively, on a cluster machine equipped with a NVIDIA Tesla V100 and 14 CPU cores.

In addition to the experiments provided in the main text, we also compared our robustification method for GANs with the existing techniques of \citet{balaji2020} and \citet{staerman21} (noting that the GAN described in \cite{mukherjee2021} does not appear to scale to high-dimensional image data). Using publicly available code for these projects, we trained these robust GAN variants with their provided default options (using the continuous weighting scheme, recommended robustness strength, and DCGAN architecture for that of \citet{balaji2020}) using the 50k CIFAR-10 training set of 32x32 images contaminated with 2632 images with uniform noise in each pixel (giving a contamination fraction of $0.05$). In the table and figure above, the control GAN is that of \citet{balaji2020} without its robustification weights enabled, and our robust GAN is obtained by adding our robustness penalty with $\eps=0.07$ (two lines of code, as described in \cref{SEC:applications}) to this control. Each GAN was trained for 100k epochs, taking approximately 12 hours of compute on a cluster machine equipped with a NVIDIA V100 GPU and 14 CPU cores. In addition to the raw generated images presented in \cref{fig:generated-samples-2}, we also computed Frechet inception distance (FID) scores \citep{heusel2017} to track progress during training, provided in \cref{table:fids}. Our method performs favorably without tuning, but more detailed empirical study is needed to separate the impact of various hyperparameters (for example, the poor relative performance of \cite{staerman21} may be due in part to its distinct architecture).

\begin{table}
\centering
\begin{tabular}{lcccc}
\toprule
& \multicolumn{4}{c}{Robust GAN Method}
\\\cmidrule(lr){2-5}
 Epoch          & Ours  & \cite{balaji2020} & \cite{staerman21} & Control \citep{balaji2020} \\\midrule
20k    & 38.61 & 60.92 & 58.19 & 59.19 \\
60k & 22.08 & 35.49 & 49.64 & 39.08\\
100k & 18.87 & 30.44 & 40.81 & 31.40\\\bottomrule
\end{tabular}
\caption{FID scores between uncontaminated CIFAR-10 image dataset and datasets generated by various robust GANs during training.}
\label{table:fids}
\end{table}

{
\floatsetup[figure]{capposition={bottom}}
\begin{figure}
{
\small
\hspace{2mm} Ours \hspace{15mm} \cite{balaji2020} \hspace{1mm} \cite{staerman21} \hspace{10mm} Control
}
\begin{center}
\includegraphics[width=.23\linewidth]{./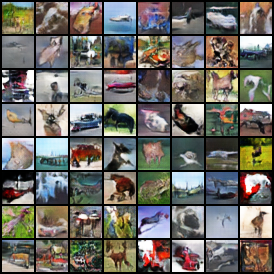}
\includegraphics[width=.23\linewidth]{./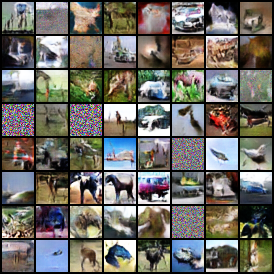}
 \includegraphics[width=.23\linewidth]{./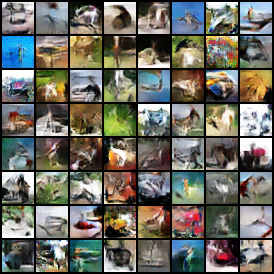}
\includegraphics[width=.23\linewidth]{./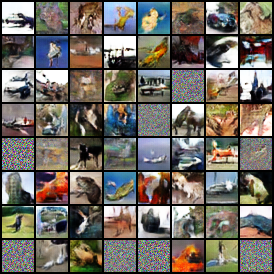}
\end{center}
\caption{Samples generated by various robust GANs after 100k epochs of training.}\label{fig:generated-samples-2}
\end{figure}
}

\subsection{Presence of Regularization}
\label{app:regularization}

Consider the Banach space $C_b(\cX)$ equipped with the uniform topology. Given a convex family of functions $\cF \subseteq C_b(\cX)$ and a convex, lower semicontinuous regularizer $R:\cF \to \R_+$, consider the statistical discrepancy measure between $\mu,\nu \in \cP(\cX)$ defined by
\begin{align*}
    d_{\cF,R}(\mu,\nu) = \sup_{f \in \cF} \int f \dd (\mu - \nu) - R(f).
\end{align*}
Then, we can verify that the conditions of Sion's minimax theorem apply as in \cref{thm:RWp-dual} to obtain
\begin{align*}
    \inf_{\substack{\mu' \geq \mu, \mu'(\cX) = 1 + \eps\\ \nu' \geq \nu, \nu'(\cX) = 1 + \eps}} d_{\cF,R}(\mu',\nu') &= \inf_{\substack{\alpha,\beta \in \cP(\cX)}} \sup_{f \in \cF} \int f \dd (\mu - \nu) + \eps \int f \dd (\alpha - \beta) + R(f)\\
    &= \sup_{f \in \cF} \int f \dd (\mu - \nu) + \eps \inf_{\alpha,\beta \in \cP(\cX)} \int f \dd (\alpha - \beta) + R(f)\\
    &= \sup_{f \in \cF} \int f \dd (\mu - \nu) - \eps \Range(f) - R(f)
\end{align*}
As an example, this applies when $\cF$ is a neural-net family and $R(f) = \|\nabla f\|_{L^p(\kappa)}^p$ for some $\kappa$. For the case of WGAN-GP, where $R(f) = \lambda \E_{x \sim \kappa}\left[(\|\nabla f(X)\|_2 - 1)^2\right]$, convexity may fail to hold, but the final expression above still serves as a non-trivial lower bound for the robustified discrepancy measure. In this particular setting, near maximizers $f \in \cF$ are approximately 1-Lipschitz for sufficiently large $\lambda$, suggesting that our original theory should apply up to some approximation error. We defer formal guarantees for non-convex regularizers for future work.

\section{Applications to Pufferfish Privacy}
\label{app:privacy}

Pufferfish privacy (PP) \citep{kifer2014,song2017,zhang2020} is a general privacy framework, which generalizes the popular notion of differential privacy \citep{DMNS06}. For a measurable data space $\cX$, the framework consists of three components: (i) a collection $\cS$ of measurable subsets of $\cX$ called secrets (e.g., facts about the database that one might wish to hide); (ii) a set of discriminative pairs $\mathcal{Q} \subseteq \cS \times \cS$ (i.e., pairs of secret events $(S,T)$ that should be indistinguishable); and (iii) a class of potential data distributions $\Theta \subseteq\cP(\cX)$.
For any $\theta\in\Theta$, let $X_\theta\sim\theta$ denote the dataset sampled from that distribution. A randomized algorithm satisfies PP w.r.t.\ $(\cS,\cQ,\Theta)$ if all pairs of secrets in $\cQ$ are indistinguishable as follows.

\begin{definition}[Pufferfish Privacy \citep{kifer2014}]\label{def:pufferfish}
A (possibly randomized) mechanism $\mathsf{M}:\cX\to\cY$ is $(\eps,\delta)$-Pufferfish private in a framework $(\cS,\cQ,\Theta)$ if for all data distributions $\theta\in \Theta$, secret pairs $(S,T)\in \mathcal{Q}$ with $\PP(X_\theta \in S) > 0$ and  $\PP(X_\theta \in T) > 0$, and measurable $Y \subseteq \cY$, we have
\begin{equation*}
    \PP\big(\mathsf{M}(X_\theta)\in Y\big|S\big)\le e^\eps \PP\big(\mathsf{M}(X_\theta)\in Y\big|T\big)+ \delta,
\end{equation*}
where $\PP$ is the underlying joint probability measure.
\end{definition}

\begin{remark}[Special cases]
A notable special case of PP is differential privacy \citep{DMNS06}, where $\cS$ is the set of all possible databases, $\cQ$ contains all pairs that differ in a single entry, and $\Theta = \cP(\cX)$ (i.e., no distributional assumptions are made, and privacy is guaranteed in the worst case). Another special case is \emph{attribute privacy} \citep{zhang2020}, which protects global statistical properties of data attributes. In this case, $\cS$ is the value of a function evaluated on the data, $\mathcal{Q}$ contains pairs of function values, and $\Theta$ captures assumptions on how the data were sampled and correlations across attributes in the data.
\end{remark}

A general approach for publishing the value of a function $f:\cX\to\RR$ on a dataset $X_\theta$ with $(\eps,0)$-PP is the \emph{Wasserstein Mechanism} \citep{song2017}. %
This mechanism computes the true value $f(X_\theta)$, and then adds Laplace noise that scales with the maximal $\Winfty$ distance between conditional distributions of $f(X_\theta)$ given any pair of secrets from $\mathcal{Q}$. Denoting this maximized distance by $W$, the mechanism outputs $\mathsf{M}(X_\theta)=f(X_\theta)+Z$, where $Z\sim\mathrm{Lap}(W/\eps)$, which can be shown to be $(\eps,0)$-PP for any framework $(\cS,\cQ,\Theta)$; see Algorithm \ref{alg.wass} with $\delta=0$. %
The privacy guarantee relies on the fact that $\Winfty(\mu,\nu) \leq \eps$ if and only if there exists a coupling $(A,B)$ of $\mu$ and $\nu$ such that $|A-B| \leq \eps$. If we are instead interested in $(\eps,\delta)$-PP, we only need $|A-B| \leq \eps$ to hold with probability $1-\delta$. We next show that this property is characterized precisely by $\Winfty^\delta$. %

\begin{lemma}[Coupling property]
\label{lem:coupling}
If $\Winfty^\delta(\mu,\nu) \leq \eps$, then there exists a coupling $(X,Y)$ for $\mu,\nu$ such that $|X-Y| \leq \eps$ with probability at least $1 - 2\delta$. Conversely, if there exists such a coupling, then $\Winfty^{2\delta}(\mu,\nu) \leq \eps$.
\end{lemma}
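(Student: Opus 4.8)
The plan is to read off both implications from the near-coupling primal form of \cref{prop:RWp-coupling-primal} specialized to $p=\infty$ — namely $\Winfty^\delta(\mu,\nu)=\inf\|d\|_{L^\infty(\pi)}$ over $\pi\in\cP(\cX\times\cX)$ with $\mu\in\cB_\delta(\pi_1)$ and $\nu\in\cB_\delta(\pi_2)$ — together with the existence of minimizers from \cref{prop:minimizers}. Throughout I would write $E=\{(x,y):|x-y|\le\eps\}$, which is closed, and use that $\|d\|_{L^\infty(\pi)}\le\eps$ is equivalent to $\pi$ being supported on $E$ (resp. $\pi(E)=1$).

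For the forward direction, suppose $\Winfty^\delta(\mu,\nu)\le\eps$. By \cref{prop:minimizers} there are $\mu'\le\mu$ and $\nu'\le\nu$ with $\mu'(\cX)=\nu'(\cX)=1-\delta$ achieving the infimum, and an optimal $\Winfty$ coupling $\pi^\circ$ between the normalized measures $\mu'/(1-\delta)$ and $\nu'/(1-\delta)$ is supported on $E$ (such an optimizer exists by lower semicontinuity of $\pi\mapsto\|d\|_{L^\infty(\pi)}$ and tightness of the set of couplings). Setting $\alpha=(\mu-\mu')/\delta$ and $\beta=(\nu-\nu')/\delta$, which are probability measures, I would define $\gamma=(1-\delta)\pi^\circ+\delta(\alpha\otimes\beta)$. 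Its marginals are $\mu'+\delta\alpha=\mu$ and $\nu'+\delta\beta=\nu$, so $\gamma\in\Pi(\mu,\nu)$, and $\gamma(E)\ge(1-\delta)\pi^\circ(E)=1-\delta\ge1-2\delta$, which is the desired coupling. (The argument in fact yields the stronger probability $1-\delta$; the factor-$2$ slack is what lets the two implications combine into the clean characterization quoted in the main text. If one prefers not to invoke \cref{prop:minimizers}, the same conclusion follows by taking near-minimizers with $\|d\|_{L^\infty(\pi_k)}\le\eps_k\downarrow\eps$, forming $\gamma_k$ as above, and passing to a weak subsequential limit via the Portmanteau theorem applied to the closed sets $\{|x-y|\le\eps'\}$ for $\eps'>\eps$.)

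For the converse, assume $\delta<1/2$ and let $\gamma\in\Pi(\mu,\nu)$ satisfy $\gamma(E)\ge1-2\delta>0$. I would restrict the plan: let $\pi=\gamma|_E$, a finite measure of mass $\gamma(E)$ supported on $E$, and let $\mu'=\pi_1$, $\nu'=\pi_2$. Then $\mu'\le\gamma_1=\mu$, $\nu'\le\nu$, and $\|\mu-\mu'\|_\tv=1-\gamma(E)\le2\delta$, likewise $\|\nu-\nu'\|_\tv\le2\delta$, so $\mu',\nu'$ are feasible for the defining formulation \eqref{eq:RWp} of $\Winfty^{2\delta}(\mu,\nu)$. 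Since $\pi\in\Pi(\mu',\nu')$ (with equal masses $\gamma(E)$) is supported on $E$, we get $\Winfty\!\big(\mu'/\mu'(\cX),\nu'/\nu'(\cX)\big)=\Winfty(\mu',\nu')\le\|d\|_{L^\infty(\pi)}\le\eps$, hence $\Winfty^{2\delta}(\mu,\nu)\le\eps$.

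The argument is essentially bookkeeping, so there is no serious obstacle; the only points needing a little care are (i) the scale-invariance of $\|d\|_{L^\infty}$ used to move between equal-mass and unit-mass versions of $\Winfty$, (ii) producing an actual optimal $\Winfty$ coupling in the forward direction (via lower semicontinuity, or else the limiting argument above), and (iii) the harmless restriction $\delta<1/2$ in the converse so the restricted marginals can be normalized.
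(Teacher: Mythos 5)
Your proof is correct, and it diverges from the paper's in the forward direction. The paper starts from the formulation of \cref{lem:conservative-estimate} (probability measures $\mu',\nu'$ with $\mu\in\cB_\delta(\mu')$, $\nu\in\cB_\delta(\nu')$ and $\Winfty(\mu',\nu')\le\eps$), glues maximal TV couplings of $(\mu,\mu')$ and $(\nu,\nu')$ onto an optimal coupling of $(\mu',\nu')$, and applies a union bound over the two events $\{X\ne X'\}$ and $\{Y\ne Y'\}$ — which is exactly where the factor $2$ in $1-2\delta$ comes from. You instead build the coupling explicitly as $(1-\delta)\pi^\circ+\delta(\alpha\otimes\beta)$, pairing the two removed masses with each other; this collapses the two bad events into one and yields the sharper guarantee $1-\delta$ (the lemma only claims $1-2\delta$, so this is a strict improvement, though it would not tighten the downstream privacy parameter in \cref{thm:pp}, whose $2\delta$ comes from the converse direction). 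The price is that you need honest-to-goodness minimizers and an optimal $\Winfty$ coupling, which you correctly discharge via \cref{prop:minimizers} and lower semicontinuity (or the limiting argument you sketch); the paper's gluing-lemma route avoids any appeal to attainment. Your converse is essentially the paper's argument: the paper decomposes $\pi=(1-2\delta)\pi_g+2\delta\pi_b$ and checks that $\pi_g$ is feasible for the near-coupling primal of \cref{prop:RWp-coupling-primal}, while you restrict $\gamma$ to $E$ and check feasibility for \eqref{eq:RWp} directly — the same computation in two equivalent formulations. Your side remarks (scale invariance of $\|d\|_{L^\infty}$, the harmless $\delta<1/2$ caveat) are all sound.
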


\begin{proof}
If $\Winfty^\delta(\mu,\nu) \leq \eps$, then there exists $\nu',\nu' \in \cP(\cX)$ such that $\mu \in \cB_\delta(\mu')$, $\nu \in \cB_\delta(\nu')$, and $\Winfty(\mu',\nu') \leq \eps$. By the gluing lemma, there exists a joint distribution for $(X,X',Y,Y')$ with $X \sim \mu, X' \sim \mu', Y \sim \nu, Y \sim \nu'$ such that $\PP(X \neq X'), \PP(Y \neq Y') \leq \delta$ and $|X'-Y'| \leq \eps$. By a union bound, $|X-Y| \leq \eps$ with probability at least $1 - 2\delta$.

On the other hand, if there exists a coupling $\pi = (1 - \delta)\pi_g + \delta \pi_b \in \Pi(\mu,\nu)$ for $\pi_g,\pi_b \in \cP(\cX \times \cX)$ such that $\|d\|_{L^\infty(\pi_g)} \leq \eps$ (i.e., restating the converse assumption), then taking marginals gives $\mu \in \cB_\delta((\pi_g)_1)$ and $\nu \in \cB_\delta((\pi_g)_2)$. Hence, $\pi_g$ is feasible for the (near) coupling problem \cref{prop:RWp-coupling-primal}, and $\Winfty^\delta(\mu,\nu) \leq \eps$.
\end{proof}

We can use this fact to define a Robust Wasserstein Mechanism that achieves $(\eps,\delta)$-PP.

\begin{algorithm}[b!]
\centering
    \caption{Robust Wasserstein Mechanism $(f, \{\cS, \mathcal{Q}, \Theta \}, \eps, \delta$)}
    \begin{algorithmic}
        \State \textbf{for} all $(S,T)\in \mathcal{Q}$ and $\theta\in \Theta$ such that $\PP(X_\theta \in S) > 0$ and $\PP(X_\theta \in T) > 0$
        \State \indent Set $\mu_{|S}^{\theta}\coloneqq \cL\big(f(X_\theta) \big|S\big)$ and $\mu_{|T}^{\theta}\coloneqq \cL\big(f(X_\theta) \big|T\big)$
        \State Calculate $W_\delta=\sup_{\substack{(S,T)\in \mathcal{Q}, \theta\in \Theta\\\PP(X_\theta \in S) > 0, \PP(X_\theta \in T) > 0}}\Winfty^\delta \big(\mu_{|S}^{\theta}, \mu_{|T}^{\theta}\big)$.
        \State Sample $Z\sim \mathrm{Lap}(W_\delta/\eps)$.
        \State Return $\mathsf{M}(X_\theta)=f(X_\theta)+Z$
    \end{algorithmic}
    \label{alg.wass}
\end{algorithm}

\begin{theorem}[Robust Wasserstein Mechanism]
\label{thm:pp}
For any PP framework $(\cS,\cQ,\Theta)$ and $\delta \leq 1/2$, the Robust Wasserstein Mechanism from Algorithm \ref{alg.wass} is $(\eps,2\delta)$-PP. %
\end{theorem}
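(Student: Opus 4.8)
The plan is to reduce the statement to the Coupling Lemma (\cref{lem:coupling}) together with the classical analysis of the Laplace mechanism. Fix a framework $(\cS,\cQ,\Theta)$ with $\delta \leq 1/2$, a data distribution $\theta \in \Theta$, and a discriminative pair $(S,T) \in \cQ$ with $\PP(X_\theta \in S) > 0$ and $\PP(X_\theta \in T) > 0$. Write $\mu \coloneqq \mu_{|S}^{\theta} = \cL\big(f(X_\theta)\,\big|\,S\big)$ and $\nu \coloneqq \mu_{|T}^{\theta} = \cL\big(f(X_\theta)\,\big|\,T\big)$. By the definition of $W_\delta$ in Algorithm \ref{alg.wass} we have $\Winfty^\delta(\mu,\nu) \leq W_\delta$ (if $W_\delta = \infty$ the mechanism adds arbitrarily large noise and the guarantee is vacuous, so assume $W_\delta < \infty$). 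First I would invoke \cref{lem:coupling} to obtain a coupling $\pi \in \Pi(\mu,\nu)$ of random variables $(A,B)$ such that the event $E \coloneqq \{|A-B| \leq W_\delta\}$ has $\PP_\pi(E) \geq 1 - 2\delta$.

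Next I would expand the output probabilities through this coupling. With $Z \sim \mathrm{Lap}(W_\delta/\eps)$ drawn independently, for any measurable $Y \subseteq \cY$,
\[ \PP\big(\mathsf{M}(X_\theta) \in Y \mid S\big) = \E_{A \sim \mu}\big[\PP(A + Z \in Y)\big] = \E_{(A,B)\sim\pi}\big[\PP(A + Z \in Y)\mathds{1}_E\big] + \E_{(A,B)\sim\pi}\big[\PP(A + Z \in Y)\mathds{1}_{E^c}\big]. \]
The last term is at most $\PP_\pi(E^c) \leq 2\delta$. For the first term, I would use the pointwise density bound for the Laplace law: whenever $|a-b| \leq W_\delta$, the density $g$ of $\mathrm{Lap}(W_\delta/\eps)$ satisfies $g(z-a) \leq e^{\eps|a-b|/W_\delta}\, g(z-b) \leq e^\eps g(z-b)$, whence $\PP(a + Z \in Y) \leq e^\eps\,\PP(b + Z \in Y)$ for every such fixed pair. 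Applying this on $E$ and then dropping the indicator gives
\[ \E_{(A,B)\sim\pi}\big[\PP(A + Z \in Y)\mathds{1}_E\big] \leq e^\eps\, \E_{(A,B)\sim\pi}\big[\PP(B + Z \in Y)\mathds{1}_E\big] \leq e^\eps\, \E_{B \sim \nu}\big[\PP(B + Z \in Y)\big] = e^\eps\, \PP\big(\mathsf{M}(X_\theta) \in Y \mid T\big). \]
Combining the two bounds yields $\PP\big(\mathsf{M}(X_\theta) \in Y \mid S\big) \leq e^\eps\,\PP\big(\mathsf{M}(X_\theta) \in Y \mid T\big) + 2\delta$, which is precisely the $(\eps,2\delta)$-PP condition of \cref{def:pufferfish}; since $\theta$, $(S,T)$, and $Y$ were arbitrary, the claim follows.

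The substance of the argument is front-loaded into \cref{lem:coupling}, so what remains is largely bookkeeping. The one point requiring care is the split of the coupling into its ``good'' part (on $E$) and ``bad'' part (on $E^c$), and verifying that the bad part contributes only the additive slack $2\delta$ rather than corrupting the multiplicative $e^\eps$ factor; I expect this to be the main (modest) obstacle. A secondary technical detail is the joint measurability of $(a,b)\mapsto \PP(a + Z \in Y)$, needed so the expectations over $\pi$ are well defined, which follows from Fubini since $(a,z)\mapsto \mathds{1}_Y(a+z)$ is jointly measurable.
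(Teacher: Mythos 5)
Your proof is correct and follows essentially the same route as the paper's: invoke \cref{lem:coupling} to obtain a coupling of $\mu_{|S}^\theta$ and $\mu_{|T}^\theta$, split it into the good event $\{|A-B|\leq W_\delta\}$ and its complement, absorb the complement into the additive $2\delta$ term, and apply the Laplace density-ratio bound $e^{\eps|a-b|/W_\delta}\leq e^\eps$ on the good event. The only (cosmetic, and arguably cleaner) difference is that you argue directly with probabilities of output events $Y$, whereas the paper carries out the same decomposition at the level of output densities.
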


\begin{proof}
The proof follows that of the original Wasserstein mechanism established in \cite{song2017}. We will write $p_X$ to denote the probability density function of an absolutely continuous random variable $X$.
Let $\mathsf{M}$ denote the robust Wasserstein mechanism
and consider the distributions $\mu_{|S}^{\theta}, \mu_{|T}^{\theta}$ associated with a secret pair $(S,T) \in \cQ$ such that $\PP(X_\theta \in S)$ and $\PP(X_\theta \in T)$ are positive. Because $\RWp(\mu_{|S}^{\theta},\mu_{|T}^{\theta}) \leq W_\delta$, we know that there exists a coupling $\pi \in \Pi(\mu_{|S}^{\theta},\mu_{|T}^{\theta})$ as guaranteed by \cref{lem:coupling}. We decompose $\pi = (1 - 2\delta)\pi_g + 2\delta \pi_b$ into pieces so that $\pi_g \in \cP(\cX \times \cX)$ has the $W_\delta$ distance guarantee and $\pi_b$ is arbitrary. Then, for any measurable $Y\subseteq\cY$, we have
\begin{align*}
    p_{\,\mathsf{M}(X_\theta) | S}(w) &= \int p_{f(X_\theta) | S}(a) \, p_Z(w-a) \dd a\\
    &= \int p_{f(X_\theta) | S}(a) e^{-\eps|w-a|/W_\delta} \dd a\\
    &= \int e^{-\eps|w-a|/W_\delta} \dd \pi(a,b)\\
    &\leq (1 - 2\delta)\int e^{-\eps|w-a|/W_\delta} \dd \pi_g(a,b) + 2\delta,
\end{align*}
where we have simply used the definition of the mechanism and the fact that $e^{-\eps|w-a|/W_\delta} \leq 1$. Continuing, we compute
\begin{align*}
    p_{\,\mathsf{M}(X_\theta)|S}(w) &\leq \int_{\{(a,b):|a-b| \leq W_\delta\}}  e^{-\eps|w-a|/W_\delta} \dd \pi_g(a,b) + 2\delta\\
    &\leq e^{\eps} \int_{\{(a,b):|a-b| \leq W_\delta\}}  e^{-\eps|w-b|/W_\delta} \dd \pi_g(a,b) + 2\delta\\
    &\leq e^{\eps} \int e^{-\eps|w-b|/W_\delta} \dd \pi(a,b) + 2\delta\\
    &\leq e^{\eps} \int  e^{-\eps|w-b|/W_\delta} \, p_{f(X_\theta)|T}(b) \dd b + 2 \delta\\
    &\leq e^{\eps} p_{\,\mathsf{M}(X_\theta)|T}(w) + 2\delta,
\end{align*}
using the distance property of the coupling, performing a change of variables, and using the definition of the mechanism. Since the previous argument holds for any secret pair, we conclude that the mechanism is $(\eps,2\delta)$-PP private.
\end{proof}

This result suggests that $\Winfty^{\delta}$ naturally generalizes the relation between $\Winfty$ and $(\eps,0)$-PP to the $(\eps,\delta)$-PP regime. In applications where a positive $\delta$ is tolerable, the Robust Wasserstein Mechanism allows a significantly lower scale of noise relative to the original Wasserstein Mechanism from \cite{song2017} --- i.e., $W_\delta\ll W$ for some Pufferfish frameworks --- corresponding to a substantial improvement in accuracy. To demonstrate this, consider the following toy example.

\begin{example} Consider a toy scenario where a bank has 100 customers of two types, $A$ and $B$, with at most 10 of type $B$. The incomes of Type $A$ and Type $B$ customers are distributed according to $\mu_A$ and $\mu_B$, respectively. Suppose that incomes are independent conditioned on the customer types, and the bank wishes to publish the combined income of its customers while keeping the count of $B$ users private. The corresponding PP framework has $\cS$ as the set of possible number of type $B$ customers (between 1 and 10), $\cQ$ contains all distinct pairs from $\cS$, and each $\theta \in \Theta$ gives the joint distribution over incomes given a selection of types. 

In order to publish the sum of incomes while guaranteeing $(\eps,\delta)$-PP, the Robust Wasserstein Mechanism dictates that it suffices to add Laplace noise at scale $\Winfty^{\delta}(\mu_A^{*100},\mu_A^{*90}*\mu_B^{*10})/\eps$, where $\gamma^{*k}$ denotes a $k$-fold convolution of $\gamma$ with itself. Assume all incomes are less than \$100k, except Type $B$ customers who have probability $\delta'$ of being millionaires, with $1 - (1 - \delta')^{10} \ll \delta$. In this case, the classic Wasserstein Mechanism would require noise at scale $\sim 10^8/\eps$, while the $\delta$-robust mechanism only needs noise at scale $\sim 10^7/\eps$. Thus, utilizing $\Winfty^\delta$ and its connection to $(\eps,\delta)$-PP allows injecting an order of magnitude less noise.
\end{example}

\begin{remark}[$(\eps,\delta)$-PP and $\Wp$]
Using the robust Wasserstein distance we can further connect PP and classic $\Wp$. This may be practically advantageous, as finite-order Wasserstein distances are often easier to compute than $\Winfty$. First note that Markov's inequality implies that %
$\RWinfty(\mu,\nu) = O\big(\Wp(\mu,\nu)/\delta^{1/p}\big)$. Hence, the mechanism that adds noise at scale $W_p/(\eps \delta^{1/p})$, where $W_p$ is a uniform bound on $\Wp(\mu_{|S}^{\theta},\mu_{|T}^{\theta})$, is $(\eps,\delta)$-Pufferfish private.
\end{remark}

\end{appendices}

\end{document}